\documentclass{article}

\PassOptionsToPackage{dvipsnames}{xcolor}

\usepackage{microtype}
\usepackage{graphicx}
\usepackage{subcaption}
\usepackage{booktabs} 

\usepackage{amsmath,amsthm}
\usepackage{amsfonts}    
\usepackage{thmtools} 
\usepackage{xcolor}
\usepackage{placeins}
\usepackage{needspace}
\usepackage{hyperref}

\newcommand{\px}[0]{\mathcal{P}(\mathcal{X})}
\newcommand{\psx}[0]{\mathcal{P}_*(\mathcal{X})}

\usepackage[preprint]{icml2026}

\usepackage{amssymb}
\usepackage{mathtools}

\usepackage[capitalize,noabbrev]{cleveref}

\theoremstyle{plain}
\newtheorem{theorem}{Theorem}[section]
\newtheorem{proposition}[theorem]{Proposition}
\newtheorem{lemma}[theorem]{Lemma}

\theoremstyle{definition}
\newtheorem{definition}[theorem]{Definition}

\theoremstyle{remark}
\newtheorem{remark}[theorem]{Remark}

\usepackage[textsize=tiny]{todonotes}

\icmltitlerunning{Learning Discrete Diffusion on Graphs via Free-Energy Gradient Flows}

\begin{document}

\twocolumn[
  \icmltitle{Learning Discrete Diffusion on Graphs via Free-Energy Gradient Flows}



  \icmlsetsymbol{equal}{*}

  \begin{icmlauthorlist}
    \icmlauthor{Dario Rancati}{yyy}
    \icmlauthor{Jan Maas}{yyy}
    \icmlauthor{Francesco Locatello}{yyy}
  \end{icmlauthorlist}

  \icmlaffiliation{yyy}{Institute of Science and Technology Austria}

  \icmlcorrespondingauthor{Dario Rancati}{dario.rancati@ist.ac.at}

  \icmlkeywords{Machine Learning, ICML}

  \vskip 0.3in
]



\printAffiliationsAndNotice{}  

\begin{abstract}
  Diffusion-based models on continuous spaces have seen substantial recent progress through the mathematical framework of gradient flows, leveraging the Wasserstein-2 (${W}_2$) metric via the Jordan-Kinderlehrer-Otto (JKO) scheme. Despite the increasing popularity of diffusion models on discrete spaces using continuous-time Markov chains, a parallel theoretical framework based on gradient flows has remained elusive due to intrinsic challenges in translating the ${W}_2$ distance directly into these settings. In this work, we propose the first computational approach addressing these challenges, leveraging an appropriate metric $W_K$ on the simplex of probability distributions, which enables us to interpret widely used discrete diffusion paths, such as the discrete heat equation, as gradient flows of specific free-energy functionals. Through this theoretical insight, we introduce a novel methodology for learning diffusion dynamics over discrete spaces, which recovers the underlying functional directly by leveraging first-order optimality conditions for the JKO scheme. The resulting method optimizes a simple quadratic loss, trains extremely fast, does not require individual sample trajectories, and only needs a numerical preprocessing computing $W_K$-geodesics. We validate our method through extensive numerical experiments on synthetic data, showing that we can recover the underlying functional for a variety of graph classes.
\end{abstract}

\section{Overview}

Over the last few years, a broad family of diffusion-based methods has modeled the evolution of a particle population as a trajectory of probability measures in a metric space \citep{bunne2022proximaloptimaltransportmodeling, terpin2024learningdiffusionlightspeed, xu2024normalizingflowneuralnetworks, choi2024scalablewassersteingradientflow, mokrov2021largescalewassersteingradientflows, salim2021wassersteinproximalgradientalgorithm, haviv2025wassersteinflowmatchinggenerative}. A central tool behind many of these approaches is the following discretization scheme, introduced by Jordan, Kinderlehrer and Otto in their seminal paper \yrcite{jkovariationalformulation}: 

\begin{equation}
\label{eq:jko_euclid}
    p_{t+1} := \underset{p \in \mathcal{P} (\mathbb{R}^n)}{\arg\min} \left\{ \mathcal{F}(p) + \frac{1}{2\tau}W_2(p, p_t)^2 \right\}.
\end{equation}

Each step of this scheme (typically called the JKO scheme) solves a proximal optimization problem in the Wasserstein-2 ($W_2$) geometry under the influence of the functional $\mathcal{F}$. This line of work is mostly known for the striking mathematical property of identifying certain PDEs, such as the Fokker-Planck equation, with the gradient flow  \citep{ambrosioGradientFlowsMetric2008} of free-energy functionals which are the sum of a potential and an entropic term. Nonetheless, this variational viewpoint is also attractive from the practical point of view of machine learning: models built off this idea typically exhibit fast training loops and operate directly at the sample level, with no need for trajectory-level observations \citep{bunne2022proximaloptimaltransportmodeling, santambrogio2015optimal}. 

In parallel, models based on probability flows on discrete spaces have recently gained significant popularity, impacting areas such as language modelling, molecule generation and machine learning for material sciences \citep{benton2024denoisingdiffusionsdenoisingmarkov, lou2024discretediffusionmodelingestimating, campbell2022continuoustimeframeworkdiscrete, shi2023diffusionschrodingerbridgematching, campbell2024generativeflowsdiscretestatespaces, stark2024dirichletflowmatchingapplications, davis2024fisherflowmatchinggenerative, xu2024discretestatecontinuoustimediffusiongraph, kim2025discretediffusionschrodingerbridge}. These models typically study discrete probability evolutions structured by an underlying, known, Markov kernel $K$ that specifies the state-transition probabilities at each time-step. In this context, however, a direct translation of the JKO framework in the $W_2$ metric breaks down: for finite state spaces, any non-constant curve of probability distributions has infinite $W_2$-metric derivative, hence common trajectories such as the discrete heat equation cannot be interpreted as a $W_2$-gradient flow of \textit{any} functional. 

In this work, we introduce the first machine learning method that addresses this difficulty, allowing us to learn functionals from their gradient-flow trajectories in the space of probability measures via a JKO-style proximal scheme. Our framework builds on a body of mathematical works on probability gradient flows over discrete spaces \citep{maas2011gradientflowsentropyfinite, Mielke:2011,chow_discrete_fp, Erbar_2012, erbar2012gradientflowstructuresdiscrete, erbar2017computationoptimaltransportdiscrete, erbar2018geometrygeodesicsdiscreteoptimal,chow2017entropydissipationfokkerplanckequations}. These works rely on the use of a different metric $W_K$ depending on the Markov kernel $K$ through a discrete analog of the Benamou-Brenier formula \citep{benamoubrenierOG2000}. We stress that these constructions were developed in pure mathematics, and it is not obvious that they would have computational implications. In fact, they have seen almost no practical adoption, with minor exceptions in applied mathematics~\citep{erbar2017computationoptimaltransportdiscrete}, and, within machine learning, only in the specific case of discrete samplers \citep{sun2023discretelangevinsamplerwasserstein}. In our work, we translate this theory to derive a practical, implementable methodology, building a principled discrete analogue of the JKO scheme, which is already successful in continuous settings.


Within the geometry induced by $W_K$, we use the fact that important discrete diffusion trajectories (such as the discrete heat equation on a graph) arise as gradient flows of free-energy functionals, in direct analogy with the continuous case. We devise a strategy to learn these functionals given this characterization, based on first-order necessary conditions of optimality for our discrete JKO scheme, requiring only the computation of geodesics in $W_K$ as a preprocessing step, for which we build a customized and efficient numerical routine. The resulting training loop is lightweight, and in experiments we benchmark extensively and outperform standard Markov jump-process baselines.

We summarize our contributions as follows:

\begin{itemize}
     \item We translate the theoretical framework for probability gradient flows in discrete spaces into the context of machine learning. In particular, we focus on the tools that enable applications of this theory to computational domains. While this theory already existed in pure mathematics, it had seen close to no computational applications and was largely inaccessible to researchers in the ML field. A key barrier was that results on the uniqueness of solutions for the discrete JKO scheme and on the form of its gradients in the $W_K$ metric are critical for implementation, but were missing.


    \item Based on said framework, we devise a practical strategy to learn discrete gradient flows from temporal snapshots. Our approach is based on imposing first-order necessary conditions for the discrete JKO scheme, and its main components are a numerical routine to numerically compute geodesics and a quadratic-loss training loop based on said geodesics;

    \item We extensively benchmark our framework on a variety of graph classes, sizes and regimes. Through comparison with existing methods for learning Markov Jump Process dynamics, we find our method to have better performances while being extremely favorable in both training time, number of parameters, and scaling behavior.
\end{itemize}

\section{Problem Setup}

We are interested in studying probability flows on a finite space $\mathcal{X} = \left\{x_1, ... ,x_N \right\} $ equipped with a Markov transition kernel $K \in \mathbb{R}^{N \times N}$, that is, a non-negative, row-stochastic real matrix. We make the standard assumption for $K$ to be irreducible and reversible \citep{campbell2022continuoustimeframeworkdiscrete, lou2024discretediffusionmodelingestimating}: irreducibility implies in particular the existence and uniqueness of an invariant probability measure $\pi$ such that $\pi^TK = \pi^T$. We denote by:

\begin{equation*}
\begin{split}    
        & \mathcal{P}(\mathcal{X}) := \\ & \left\{ \rho: \mathcal{X} \rightarrow \mathbb{R},  \;\;\rho(x) \geq 0 \; \; \forall x \in \mathcal{X}, \;\ \sum_{x \in \mathcal{X}}\pi(x)\rho(x) = 1 \right\}
\end{split}
\end{equation*}

the set of all \textit{probability densities} on $\mathcal{X}$, and by $\mathcal{P}_{*}(\mathcal{X}) \subseteq \mathcal{P}(\mathcal{X})$ the subset of densities that are strictly positive. Throughout the paper, we use $\rho$ to denote densities and $p$ to denote probabilities. The core example of the probability evolution processes we are interested in is the \textit{discrete heat equation}:

\begin{equation}
\label{heat_eq}
    \frac{\mathrm{d}}{\mathrm{d} t}\rho_t = (K-I)\rho_t,
\end{equation}

which is the most widely used process in discrete generative applications \citep{lou2024discretediffusionmodelingestimating, campbell2022continuoustimeframeworkdiscrete}. Our goal is to describe processes such as \eqref{heat_eq} as gradient flows of functionals on $\mathcal{P}(\mathcal{X})$, mirroring the continuous case \citep{jkovariationalformulation, bunne2022proximaloptimaltransportmodeling, terpin2024learningdiffusionlightspeed}: in these works, the continuous counterparts to \eqref{heat_eq} (diffusion SDEs of the kind $\mathrm{d}X_t = f(X_t,t)\mathrm{d}t + \sigma(t)\mathrm{d}B_t$) are learned by leveraging the JKO scheme \eqref{eq:jko_euclid} to describe the evolution of their density $p_t \sim X_t$. This allows them to cast the problem of learning $p_t$ as that of learning $\mathcal{F}$ from temporal snapshots $\left\{ X_t\right\}_{t \in[t_0, t_1, \cdots, t_M]}$. The functional $\mathcal{F}$ is usually constrained to have a free-energy form, i.e. the sum of a potential and entropy term $\mathcal{V}+\beta\mathcal{H}$, which the JKO scheme \citep{jkovariationalformulation} ensures can describe diffusion SDEs.

However, it turns out that the Wasserstein-2 metric that enables the JKO approach on continuous domains fails to achieve the same on any discrete space, even in the simple $N=2$ setting.

\begin{lemma}[Informal]
\label{divergence_w2}
    Let $\mathcal{X} := \{a,b\}$ be a 2-point space, and let $K$ be a Markov kernel on $\mathcal{X}$ with invariant probability measure $\pi$. Let $\rho \neq \mathbf{1}$ be any non-invariant density on $\mathcal{X}$, and let $\rho_t$ denote the solution of the discrete heat equation \eqref{heat_eq} with $\rho_0 = \rho$. Define $p_t = \rho_t \odot \pi$ to be the probability $p_t$ associated to the density $\rho_t$. Then, for all $t \geq 0$:
    
    $$\lvert \dot{p_t}\rvert := \underset{s \rightarrow t}{\lim \sup} \frac{W_2(p_t, p_s)}{\lvert t - s\rvert } = +\infty.$$
\end{lemma}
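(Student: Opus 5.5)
Since $\mathcal{X}=\{a,b\}$ has only two points, the first step is to make the $W_2$ distance explicit. We view $\mathcal{X}$ as a metric space with $d(a,b)=\delta>0$; the choice of $\delta$ will turn out to be irrelevant. Every probability on $\mathcal{X}$ is determined by its mass at $a$. For two probabilities $p,q$, the cheapest coupling leaves the common mass $\min(p(a),q(a))+\min(p(b),q(b))$ in place and moves the rest across the edge. The transported mass is therefore $\lvert p(a)-q(a)\rvert$, and
$$W_2(p,q)^2=\delta^2\,\lvert p(a)-q(a)\rvert,\qquad\text{so}\qquad W_2(p,q)=\delta\,\lvert p(a)-q(a)\rvert^{1/2}.$$
To justify this, note that any coupling $\gamma$ must have off-diagonal mass $\gamma(a,b)+\gamma(b,a)\ge \lvert p(a)-q(a)\rvert$, and the coupling described above attains this bound.

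The second step is to solve the heat equation \eqref{heat_eq} explicitly. On two points, $K-I=\begin{pmatrix}-\alpha&\alpha\\ \beta&-\beta\end{pmatrix}$ with $\alpha+\beta>0$, because irreducibility forces $\alpha,\beta>0$. Hence $\rho_t=\mathbf 1+e^{-(\alpha+\beta)t}(\rho-\mathbf 1)$. The component $\rho-\mathbf 1$ lies in the $-(\alpha+\beta)$ eigenspace, since it has zero $\pi$-mean and the kernel of $K-I$ is spanned by $\mathbf 1$. Setting $c:=p_0(a)-\pi(a)$, this gives
$$p_t(a)=\pi(a)+c\,e^{-(\alpha+\beta)t}.$$
The hypothesis $\rho\neq\mathbf 1$ means exactly $c\neq 0$.

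The third step is to take the limit. For $s\to t$,
$$\lvert p_t(a)-p_s(a)\rvert=\lvert c\rvert\,\bigl\lvert e^{-(\alpha+\beta)t}-e^{-(\alpha+\beta)s}\bigr\rvert\sim \lvert c\rvert(\alpha+\beta)e^{-(\alpha+\beta)t}\lvert t-s\rvert.$$
The leading coefficient is strictly positive. Consequently
$$\frac{W_2(p_t,p_s)}{\lvert t-s\rvert}\sim \delta\bigl(\lvert c\rvert(\alpha+\beta)e^{-(\alpha+\beta)t}\bigr)^{1/2}\lvert t-s\rvert^{-1/2}\to+\infty.$$
At $t=0$ we take only $s\downarrow 0$; the same computation applies. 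This gives $\lvert\dot p_t\rvert=+\infty$ for every $t\ge0$.

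The only step requiring care is the first, namely the closed form, and in particular the lower bound, for $W_2$ on two points. The key fact is that $W_2$ grows like the square root of the transported mass. This mismatch between the square-root scaling of $W_2$ and the linear rate at which mass moves is the entire mechanism. A remark would note that the same argument works on any finite space: any non-constant differentiable curve moves mass $\asymp\lvert t-s\rvert$, while $W_2\gtrsim(\min_{x\neq y}d(x,y))\,\cdot$ (moved mass)$^{1/2}$.
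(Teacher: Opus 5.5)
Your proof is correct and follows the same route as the paper. Both arguments use three steps: the square-root formula for $W_2$ on two points, the explicit exponential relaxation of the two-state heat equation, and the blow-up of $\lvert t-s\rvert^{1/2}/\lvert t-s\rvert$. The paper's $W_2(\rho_\alpha,\rho_\beta)=\sqrt{2\lvert\alpha-\beta\rvert}$, in its parametrization by $\beta\in[-1,1]$, is your formula with $\delta=2$. Your coupling argument for the lower bound on the off-diagonal mass supplies the justification that the paper leaves as a ``direct calculation.''
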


For a formal version with proof of this lemma, see Appendix \ref{proofs_appendix}. As a consequence of Lemma~\ref{divergence_w2}, the heat equation cannot be interpreted as the gradient flow of \textit{any} functional on $\px$ equipped with the standard $W_2$ metric. If one hopes to derive a similar approach in the discrete case which is at least descriptive enough to include processes such as \eqref{heat_eq}, one inevitably needs to look at different metrics than $W_2$.


\section{Revisiting the Theory of Discrete Gradient Flows}

In this section, we introduce the geometry that will enable our discrete JKO approach. Here we provide the minimal results needed for our application, and we refer the reader to Appendix \ref{math_setting_appendix} for more details. We emphasize again that, while the results in this section are known to the discrete optimal transport community, to our knowledge a concise summary aimed at a machine learning audience is missing.

\subsection{The metric $W_K$}

The key idea to build an alternative metric is to use a discrete parallel of the dynamic formulation of optimal transport due to  \citet{benamoubrenierOG2000}. To do this, one needs to define an appropriate \textit{mobility} on the graph associated to the kernel $K$, which models how the cost of moving mass from $x_i$ to $x_j$ depends on $\rho(x_i)$ and $\rho(x_j)$. To describe the discrete heat equation  \eqref{heat_eq} as gradient flow of the entropy, it turns out \citep{maas2011gradientflowsentropyfinite} that the appropriate choice is the \textit{logarithmic mean} $m_\rho(x_i, x_j)$ given by:
\begin{equation}
\begin{split}
\label{logmean}
            m_\rho(x_i, x_j)&:= \int_0^1 \rho(x_i)^s\rho(x_j)^{1-s} \mathrm{d} s  = \\ &=  \left\{\begin{matrix}
 \frac{\rho(x_i)-\rho(x_j)}{\log\rho(x_i) - \log\rho(x_j)}& \rho(x_i) \neq \rho(x_j)\\
 \rho(x_i) &  \rho(x_i) = \rho(x_j)\\
\end{matrix}\right..
\end{split}
\end{equation}

Other possible choices are the geometric mean $\sqrt{\rho(x_i)\rho(x_j)}$ and, more generally, $\rho(x_i)^\alpha \rho(x_j)^\alpha$ for $\alpha > 0$. We refer the reader to \citet{maas2011gradientflowsentropyfinite, erbar2012gradientflowstructuresdiscrete, erbar2018geometrygeodesicsdiscreteoptimal} for a study of dynamical transport metrics with general mobilities. The choice of the mobility function is a modelling choice that directly relates to the kind of random dynamics one expects to observe. Here we focus on the logarithmic mean because of the importance of the discrete heat equation \eqref{heat_eq}. If the expected noise process is different, one can model it by tuning the mobility: for example, this has been done for the discrete porous medium equation \citep{erbar2012gradientflowstructuresdiscrete}.

With this insight, we are ready to define the alternative metric $W_K$ that will be leveraged in this paper:

\begin{definition}[Metric $W_K$]
    For $\rho_0, \rho_1 \in \mathcal{P}(\mathcal{X})$ we define:

    {\small%
    \begin{align}
    \label{discrete_metric}
         & W_K(\rho_0, \rho_1)^2 := \\ & \inf_{\rho, \psi}\left\{ \frac{1}{2} \int_0^1 \sum_{x,y \in \mathcal{X}}(\psi_t(x)-\psi_t(y))^2 K(x,y)m_{\rho_t}(x,y)\pi(x) \mathrm{d} t\right\}\notag
    \end{align}
    }

    where the infimum is taken over all piecewise $C^1$ curves $\rho: [0,1] \rightarrow \mathcal{P}(\mathcal{X})$ and measurable $\psi : [0,1] \rightarrow \mathbb{R}^\mathcal{X}$ satisfying at almost every time $t$ the \textit{discrete continuity equation}:

    \begin{align}
        \left\{
        \begin{aligned}
        \dot{\rho}_t(x)
        &+ \sum_{y \in \mathcal{X}}
        (\psi_t(y)-\psi_t(x))K(x,y)m_{\rho_t}(x,y)=0
        \\
        \rho(0)&=\rho_0, \;\; \rho(1)=\rho_1
        \end{aligned}
        \right.
        \raisetag{3.1ex}
        \label{discrete_continuity}
    \end{align}
    
\end{definition}

\looseness=-1 
Intuitively, the distance between two densities can be interpreted as the minimal ``discrete kinetic energy'' required to transform a density $\rho_0$ into another density $\rho_1$. (Here, $\psi(x) - \psi(y)$ is viewed as a discrete velocity vector along an edge $(x,y)$). 

\subsection{Geometry of $(\px, W_K)$}

Having defined the metric $W_K$, we cite some of the properties of the spaces $\px$ and $\psx$ equipped with the metric $W_K$ which will be useful later. First, we denote the discrete gradient of a function $\psi \in \mathbb{R}^{N}$ by
    \begin{equation}
        \nabla \psi \in \mathbb{R}^{N \times N} \; \; , \;\;  \nabla \psi(x,y) := \psi(x) - \psi(y).
    \end{equation}
    We sometimes write $\nabla \psi(x)$ to indicate the vector $\left[ \psi(x) - \psi(y) \right]_{y \in \mathcal{X}}$. In order to study functionals on $(\mathcal{P}_* (\mathcal{X}), W_K)$, we need to equip this space with a Riemannian manifold structure. For this purpose, it will be natural to identify the tangent space at a point $\rho \in \mathcal{P}_* (\mathcal{X})$ with the collection of discrete gradients:
    \begin{equation}
        T_\rho := \left\{ \nabla \psi \in \mathbb{R}^{N \times N} : \psi \in \mathbb{R}^N \right\}.
    \end{equation}
    This can be done as it turns out that, for each fixed $t$ and fixed $\rho_t$, the discrete continuity equation \eqref{discrete_continuity} defines a 1-1 correspondence between the ``vertical derivative'' $\dot \rho_t$ and the ``horizontal derivative'' $\nabla \psi_t$ ; see \cite{maas2011gradientflowsentropyfinite}.
    One can then construct a Riemannian structure on $\mathcal{P}_* (\mathcal{X})$ by equipping $T_\rho$ with the following inner product $\left< \cdot, \cdot \right>_\rho$ (see \ref{riemannian_key} in the appendix):

    \begin{align}
        \label{riemann_product}
        & \left< \nabla \phi, \nabla \psi \right>_\rho := \\ & \frac{1}{2} \sum_{x,y \in \mathcal{X}} (\phi(x) - \phi(y))(\psi(x) - \psi(y))K(x,y)m_{\rho}(x,y)\pi(x).\notag
    \end{align}

     With this choice, we can complete the construction of the Riemannian structure that we will leverage:

    \begin{theorem}
    [\cite{maas2011gradientflowsentropyfinite}, Thm. 3.29]
        \label{riemannian_manifold}
        The space $(\px, W_K)$ is a complete metric space, and the space $\mathcal{P}_* (\mathcal{X})$ endowed with the scalar products \eqref{riemann_product} is a Riemannian manifold. The induced Riemannian distance on $\mathcal{P}_* (\mathcal{X})$ coincides with $W_K$.
    \end{theorem}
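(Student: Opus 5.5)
We split the statement into three parts: the Riemannian structure on $\psx$, the identification of the induced distance with $W_K$, and completeness of $(\px, W_K)$.

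\textbf{Step 1: the Riemannian structure on the interior.} Note that $\psx$ is an open subset of the affine hyperplane $\{\rho : \sum_x \pi(x)\rho(x) = 1\}$, so it is a smooth manifold of dimension $N-1$. Its tangent space consists of vectors $s$ with $\sum_x \pi(x) s(x) = 0$. For fixed $\rho \in \psx$, consider the operator
\[
L_\rho \psi(x) := \sum_y (\psi(x)-\psi(y)) K(x,y) m_\rho(x,y).
\]
This is minus a weighted graph Laplacian. Since $m_\rho > 0$ on $\psx$, its weights $K(x,y)m_\rho(x,y)\pi(x)$ are positive on every edge of the graph of $K$, and they are symmetric by reversibility. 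Irreducibility makes the graph connected. Hence $L_\rho$ is self-adjoint in $L^2(\pi)$, has kernel equal to the constants, and maps onto the mean-zero functions. The continuity equation \eqref{discrete_continuity} reads $\dot\rho = -L_\rho \psi$, so it gives a bijection between tangent vectors $s$ and gradients $\nabla\psi \in T_\rho$, with $\psi$ determined up to constants. Pulling back \eqref{riemann_product} through this bijection gives the metric tensor $g_\rho(s_1,s_2) = \langle L_\rho^{-1}s_1, s_2\rangle_\pi$, which is positive definite. The logarithmic mean is smooth on $(0,\infty)^2$, so $\rho \mapsto L_\rho$ is smooth, and $\rho \mapsto L_\rho^{-1}$ (restricted to the mean-zero subspace) is smooth as well. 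This proves the manifold claim.

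\textbf{Step 2: the induced distance equals $W_K$.} For a curve in $\psx$, the only $\psi$ compatible with \eqref{discrete_continuity} is $\psi_t = L_{\rho_t}^{-1}(-\dot\rho_t)$, up to constants. The integrand of \eqref{discrete_metric} is then exactly $g_{\rho_t}(\dot\rho_t,\dot\rho_t)$. So for curves staying in the interior, the infimum in \eqref{discrete_metric} is the Riemannian energy, and the standard energy–length comparison (Cauchy–Schwarz plus reparametrisation by constant speed) identifies it with the squared Riemannian distance.

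The remaining issue is that \eqref{discrete_metric} also allows curves that touch $\partial\px$. To rule out any gain from these, we take an admissible curve $(\rho_t,\psi_t)$ with endpoints in $\psx$ and push it inward by $\rho^\varepsilon_t := (1-\varepsilon)\rho_t + \varepsilon\mathbf{1}$. We need to show that its action converges to that of the original curve as $\varepsilon \to 0$. The tool is joint concavity and positive one-homogeneity of $m$: these make the action $(\dot\rho,\rho)\mapsto \langle \dot\rho, L_\rho^{-1}\dot\rho\rangle$ jointly convex. At the endpoints, a short geodesic inside $\psx$ connects $\rho_i$ to $\rho_i^\varepsilon$ at cost $O(\varepsilon)$. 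Together these give Riemannian distance $\le W_K$, and the reverse inequality is trivial.

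\textbf{Step 3: metric axioms and completeness.} For completeness we first show that $W_K$ is finite on all of $\px$. For this we need a bound $W_K(\rho,\mathbf{1}) \le C$ that is uniform even at the boundary. We get it from an explicit curve, namely the linear interpolation, using the integrability estimate
\[
\int_0^1 \frac{dt}{m(t a, t b)} < \infty
\]
near vanishing densities. This is again controlled by the logarithmic mean, which satisfies $m(a,b) \ge \sqrt{ab}$ and in fact $m(0,b)=0$ but with an integrable singularity along the curve.

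Next we show $W_K$ is bounded below by a constant times the Euclidean (or total variation) distance. This holds because $m_\rho \le \max\rho \le C$ on the compact set $\px$, which bounds the kinetic cost from below. It follows that the $W_K$-topology on $\px$ is at least as fine as the Euclidean one. We then prove continuity of $W_K$ with respect to the Euclidean topology, using the same interpolation estimates, so the two topologies coincide. A Cauchy sequence for $W_K$ is then Euclidean Cauchy in the compact set $\px$, hence it converges there, and by the equivalence it converges in $W_K$. This gives completeness.

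\textbf{Main obstacle.} We expect the boundary behaviour to be the hard part. This means the approximation argument in Step 2 and the Hölder-type continuity of $W_K$ near $\partial\px$ in Step 3, where the degeneracy of $m_\rho$ has to be handled carefully, for instance by comparing with the two-point space computed explicitly.
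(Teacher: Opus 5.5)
The paper gives no proof of this statement; it only imports it from Maas (2011), so your outline has to stand on its own. Steps 1 and 2, and the lower-bound-plus-compactness logic of Step 3, follow the standard argument and are sound up to small fixes. First, with your $L_\rho$ the continuity equation reads $\dot\rho=L_\rho\psi$, not $-L_\rho\psi$; this is harmless. Second, $L_\rho^{-1}$ does not exist at boundary points, so the convexity step should be run in flux variables. Take the function $(\rho,V)\mapsto\frac12\sum_{x,y}V(x,y)^2K(x,y)\pi(x)/m_\rho(x,y)$, with the linear constraint linking $\dot\rho$ and $V$. It is jointly convex, because $\rho\mapsto m_\rho(x,y)$ is concave and $(v,w)\mapsto v^2/w$ is convex and nonincreasing in $w$. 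Applied to $(1-\varepsilon)(\rho_t,V_t)+\varepsilon(\mathbf{1},0)$, it bounds the action of your pushed curve by $(1-\varepsilon)$ times the original action, so no limit argument is needed.

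The genuine gap is the finiteness estimate in Step 3. As written, $\int_0^1 dt/m(ta,tb)=m(a,b)^{-1}\int_0^1 dt/t=+\infty$ by one-homogeneity. This is not just a typo: the energy of $\rho_t=(1-t)\rho+t\mathbf{1}$ can genuinely be infinite. On a path $x-y-z$ with $\rho(x)=\rho(y)=0$, mass must enter $x$ at a constant rate through the edge $(x,y)$, where $m_{\rho_t}(x,y)=t$. So the action at time $t$ is at least $\kappa/t$ for some $\kappa>0$, and the bound $m(a,b)\ge\sqrt{ab}$ gives nothing on edges whose two endpoints both vanish.

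The correct estimate goes as follows. Since $\rho_t\ge t$ pointwise, $m_{\rho_t}\ge m(t,t)=t$ on every edge. Test with a fixed flux solving the constraint for $\dot\rho=\mathbf{1}-\rho$; it can be chosen bounded uniformly on the compact $\px$, and in the interior the gradient flux can only do better. This gives action at most $C/t$. Hence the \emph{length} $\int_0^1\sqrt{C/t}\,dt=2\sqrt{C}$ is finite. After reparametrization, or directly with $\rho_t=(1-t^2)\rho+t^2\mathbf{1}$ (whose action is at most $4C$), you get $W_K(\rho,\mathbf{1})\le 2\sqrt{C}$ uniformly.

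Your continuity claim has the same defect. A direct linear interpolation between two boundary densities can be infeasible: if both vanish at $y$ in the example above, no mass can move between $x$ and $z$ along the segment, since $m=0$ on both edges at $y$. You need a detour through the interior. With $\rho^\delta=(1-\delta)\rho+\delta\mathbf{1}$,
$$W_K(\rho,\sigma)\le W_K(\rho,\rho^\delta)+W_K(\rho^\delta,\sigma^\delta)+W_K(\sigma^\delta,\sigma)\le 4\sqrt{C\delta}+C'|\rho-\sigma|/\sqrt{\delta}.$$
Choosing $\delta=|\rho-\sigma|$ yields a H\"older-$\tfrac12$ modulus, which is exactly what your completeness argument needs. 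With these repairs the proof closes, and it does not need the explicit two-point computation you allude to.
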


    Only the space of strictly positive densities $\psx$ gets the smooth structure of a Riemannian manifold, as the Riemannian metric defined above becomes degenerate at the boundary of the probability simplex; the whole $\mathcal{P}(\mathcal{X})$ is only a metric space. This will be crucial later, as we'll need to ensure the solution of our discrete JKO scheme fall in the interior of the simplex.

    Finally, we are ready to describe the discrete heat equation \eqref{heat_eq} as the gradient flow of the Kullback-Leibler divergence in the space $(\psx, W_K)$.

    \begin{theorem}[\cite{maas2011gradientflowsentropyfinite}, Thm. 4.7]
    
    Define the Kullback-Leibler divergence w.r.t. the invariant measure $\pi$ as:
    \begin{equation}
        \label{eq:rel_entropy}
        \mathcal{H}(\rho) := \sum_{x \in \mathcal{X}} \rho(x)\log \rho(x) \pi(x).
    \end{equation}
    Then the discrete heat equation \eqref{heat_eq} coincides with the gradient flow equation for the metric $W_K$, given by $
        D_t\rho = -\mathrm{grad}\, \mathcal{H}(\rho_t)$.
    \label{thm_gradflow}
    \end{theorem}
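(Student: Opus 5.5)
The plan is to identify the Riemannian gradient of $\mathcal{H}$ directly from the inner product \eqref{riemann_product} and compare it with the right-hand side of \eqref{heat_eq}. By Theorem~\ref{riemannian_manifold}, $\psx$ is a Riemannian manifold with tangent space $T_\rho$, and a tangent vector $\nabla\psi$ corresponds to the vertical derivative
\[
\dot\rho(x) = -\sum_{y}(\psi(y)-\psi(x))K(x,y)m_\rho(x,y) = \sum_y \nabla\psi(x,y)K(x,y)m_\rho(x,y)
\]
via the continuity equation \eqref{discrete_continuity}. By definition, $\mathrm{grad}\,\mathcal{H}(\rho)=\nabla\phi$ is the unique element of $T_\rho$ such that
\[
\langle \nabla\phi,\nabla\psi\rangle_\rho = \frac{\mathrm{d}}{\mathrm{d}\varepsilon}\Big|_{\varepsilon=0}\mathcal{H}(\rho_\varepsilon)
\]
for every curve $\rho_\varepsilon$ through $\rho$ whose velocity is $\nabla\psi$.

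\textbf{Step 1: the differential of $\mathcal{H}$.} A direct computation gives $\frac{\mathrm{d}}{\mathrm{d}\varepsilon}\mathcal{H}(\rho_\varepsilon)=\sum_x (\log\rho(x)+1)\dot\rho(x)\pi(x)$. The constant $1$ drops out, since $\sum_x\dot\rho(x)\pi(x)=0$ by mass conservation: the summand $\nabla\psi(x,y)K(x,y)m_\rho(x,y)\pi(x)$ is antisymmetric in $(x,y)$ by reversibility $\pi(x)K(x,y)=\pi(y)K(y,x)$ and symmetry of $m_\rho$.

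\textbf{Step 2: discrete integration by parts.} Substituting the expression for $\dot\rho$, we get $\sum_{x,y}\log\rho(x)\,\nabla\psi(x,y)K(x,y)m_\rho(x,y)\pi(x)$. Symmetrizing in $(x,y)$ with the same antisymmetry turns this into
\[
\tfrac12\sum_{x,y}\nabla\log\rho(x,y)\,\nabla\psi(x,y)K(x,y)m_\rho(x,y)\pi(x)=\langle\nabla\log\rho,\nabla\psi\rangle_\rho .
\]
Hence $\mathrm{grad}\,\mathcal{H}(\rho)=\nabla\log\rho$. This is well defined on $\psx$, where $\log\rho$ is finite and $m_\rho>0$.

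\textbf{Step 3: the gradient flow equation.} The equation $D_t\rho=-\mathrm{grad}\,\mathcal{H}(\rho_t)$ means that the horizontal velocity is $\nabla\psi_t=-\nabla\log\rho_t$, up to the paper's sign convention in \eqref{discrete_continuity}. The vertical derivative is therefore
\[
\dot\rho_t(x)=\sum_y(\log\rho_t(y)-\log\rho_t(x))K(x,y)m_{\rho_t}(x,y).
\]
The key identity of the logarithmic mean \eqref{logmean} is $(\log a-\log b)\,m(a,b)=a-b$, which also holds trivially when $a=b$. It gives $\dot\rho_t(x)=\sum_yK(x,y)(\rho_t(y)-\rho_t(x))=((K-I)\rho_t)(x)$, using row-stochasticity. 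This is exactly \eqref{heat_eq}.

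\textbf{Main obstacle.} I expect the main care to be needed in the bookkeeping of signs and the well-definedness of the tangent identification, rather than in the computation itself. We must check that the map $\nabla\psi\mapsto\dot\rho$ is a bijection from $T_\rho$ onto mean-zero vectors; this holds for irreducible $K$ because its kernel is the constants. The converse direction also requires that heat-equation solutions remain in $\psx$, so that the gradient is defined along the whole flow. For $t>0$ this follows from $e^{t(K-I)}$ having strictly positive entries under irreducibility, and I would state the theorem on $\psx$, or for $t>0$, to avoid boundary issues.
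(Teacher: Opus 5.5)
Your proof is correct and follows the paper's route: Steps 1--2 are the integration-by-parts computation of the appendix lemma giving $\mathrm{grad}\,\mathcal{H}(\rho)=\nabla\log\rho$, and Step 3 is the standard completion via the identity $m(a,b)(\log a-\log b)=a-b$, which the paper leaves to the citation of \cite{maas2011gradientflowsentropyfinite}. Your hedge ``up to the paper's sign convention'' is unnecessary: with $\nabla\psi(x,y)=\psi(x)-\psi(y)$ and the continuity equation as written, taking $\nabla\psi_t=-\nabla\log\rho_t$ gives exactly $(K-I)\rho_t$, as your own computation shows.
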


    This result is the payoff for introducing the metric $W_K$ and using the logarithmic mean mobility: with this metric, the discrete heat equation \eqref{heat_eq} is the gradient flow of $\mathcal{H}$ just like the continuous heat equation $\mathrm{d}X_t = \mathrm{d}B_t$ is the gradient flow of the Shannon entropy $-\int p(x)\log p(x) \mathrm{d}x$ for the $W_2$ metric. We can now cast the problem of learning the dynamics of processes such as $\rho_t$ as that of learning the underlying functional $\mathcal{F}$.

\section{Learning Discrete Gradient Flows}

In the next sections we introduce most of the  methodological contributions of this paper: we prove existence and uniqueness of solutions for the discrete JKO scheme, we describe gradients of the JKO functionals in the metric $W_K$, we describe a strategy to learn said functional's gradients via first-order optimality conditions and we introduce a numerical routine to compute $W_K$-geodesics that makes this computation possible.

Following the continuous setting, we assume to have access to time snapshots of the evolution of an evolving population $\mathcal{D}:= \left\{ \mathcal{Y}_{t_0=0}, \cdots, \mathcal{Y}_{t_M=T} \right\}$ recorded at a set of timesteps $\left[t_0, \cdots , t_M \right]$. We assume theing population datasets $\mathcal{Y}_{t}$ to be empirical samples from a density $\rho_t$, and for this density to evolve according to the following discretized gradient flow dynamics according to an underlying, unknown, functional $\mathcal{F}$ in the space $\psx$.

\begin{equation}
    \label{discrete_jko}
        \rho_{t+1} = \underset{\rho \in \mathcal{P}_* (\mathcal{X})}{\arg\min} \left\{ \mathcal{F}(\rho) + \frac{1}{2\tau}W_K(\rho, \rho_t)^2 \right\}
\end{equation}

Our goal is to learn $\mathcal{F}$ from the snapshots $\mathcal{Y}_t$. We present an approach based on first-order optimality conditions for \eqref{discrete_jko}.

\subsection{Intuition in $\mathbb{R}^d$}

We first present the intuition of the idea for the case of functionals in the space $\mathbb{R}^d$ before generalizing it to $\psx$. Let $F: \mathbb{R}^d \rightarrow \mathbb{R} \cup \{ +\infty \}$ be a functional and define the iterative proximal scheme:

\begin{equation}
\label{eq:example_jko}
    x_{t+1} = \underset{x \in \mathbb{R}^d}{\arg\min} \left\{ F(x) + \frac{1}{2\tau}\|x - x_t \|_2^2 \right\}.
\end{equation}

If the functional $F$ is sufficiently smooth, then an easy way to extract a necessary condition for the optimality of a point $\hat{x}_{t+1}$ is to take the gradient and set it to zero:

\begin{equation}
    \hat{x}_{t+1} \text{ is optimal for \eqref{eq:example_jko}} \Longrightarrow \nabla F(\hat{x}_{t+1}) + \frac{\hat{x}_{t+1} - x_t}{\tau} = 0,
\end{equation}

and the converse is true under some convexity condition on $F$. This suggests to seek the functional $F$ that, given a sequence of observations $ \left(x_0, \cdots , x_T \right) $, minimizes the objective:

\begin{equation}
    \underset{F}{\min} \sum_{t=0}^{T-1} \left\| \nabla F(\hat{x}_{t+1}) + \frac{\hat{x}_{t+1} - x_t}{\tau} \right\|_2^2 .
\end{equation}

We will now see how to use the Riemannian structure to construct a similar procedure in $(\psx, W_K)$.

\subsection{Gradients of Free-Energy functionals}

For the proofs of the results in this section we refer the reader to Appendix \ref{proofs_appendix}. We now restrict to the case where the functional $\mathcal{F}$ is a free energy function, that is the sum of an arbitrary potential and the relative entropy/KL divergence. This mirrors the continuous case, and models the dynamics as a combination of a deterministic drift (the potential $\mathcal{V}$) and a random fluctuation following the heat equation:

\begin{align}
    \mathcal{F}(\rho) :&= \mathcal{V}(\rho) + \beta \mathcal{H}(\rho) =\\ &= \sum_{x \in \mathcal{X}} V(x)\rho(x)\pi(x) + \beta \sum_{x \in \mathcal{X}} \rho(x) \log \rho(x) \pi(x),\notag
\end{align}

where $\{ V(x) \in \mathbb{R}\}_{x \in \mathcal{X}}$ and $\beta \in \mathbb{R}_+$ are the unknown parameters of this functional. The following result guarantees the existence of solutions and gives us a ``gradient equal to zero'' condition equivalent to the Euclidean one.

\begin{theorem}
        \label{main_result}
        Consider the discrete JKO scheme \eqref{discrete_jko} for the \textit{discrete free energy functional}:
        \begin{equation}
            \rho_{t+1} = \underset{\rho \in \mathcal{P}_* (\mathcal{X})}{\arg\min} \left\{ \mathcal{F}(\rho) + \frac{1}{2\tau}W_K(\rho, \rho_t)^2 \right\}
        \end{equation}
        with $\mathcal{F}(\rho) := \mathcal{V}(\rho) + \beta \mathcal{H}(\rho)$ with $\beta > 0$. Then for any $\tau>0$ this scheme has a unique minimizer $\hat{\rho}_{t+1}$ in $\mathcal{P}_* (\mathcal{X})$. If $\hat{\rho}_{t+1}$ is this minimizer, let $\mathrm{grad}$ be the gradient operator at the point $\hat{\rho}_{t+1}$ w.r.t. the $W_K$-induced metric. Then:

        \begin{equation}
        \label{gradzero_condition}
            \mathrm{grad} \, \left( \mathcal{F}(\hat{\rho}_{t+1}) + \frac{1}{2\tau}W_K(\hat{\rho}_{t+1}, \rho_t)^2 \right) = 0.
        \end{equation}

\end{theorem}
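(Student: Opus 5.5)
We split the argument into three parts: existence of a minimizer over the closed simplex $\px$; a proof that every minimizer lies in the interior $\psx$; and uniqueness together with the first-order condition \eqref{gradzero_condition}. Write $G(\rho) := \mathcal{F}(\rho) + \frac{1}{2\tau}W_K(\rho,\rho_t)^2$.

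\textbf{Existence.} By Theorem~\ref{riemannian_manifold}, $(\px, W_K)$ is a complete metric space. As a subset of $\mathbb{R}^N$ the simplex is compact, and $W_K$ is continuous with respect to the Euclidean topology (Maas shows the two topologies agree). Also, $\mathcal{V}$ is linear and $\mathcal{H}$ is continuous on the closed simplex, using $0\log 0 = 0$. Hence $G$ is continuous on a compact set and attains its minimum at some $\hat\rho \in \px$.

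\textbf{Interior.} This is the step I expect to be the main obstacle. Suppose $\hat\rho(x_0)=0$ for some $x_0$. I would perturb toward the uniform density, $\rho_\varepsilon = (1-\varepsilon)\hat\rho + \varepsilon\mathbf{1}$, and compare the two terms.
\begin{itemize}
\item The entropy has one-sided derivative $-\infty$ at $\varepsilon=0$: the coordinates with $\hat\rho(x)=0$ contribute $\varepsilon\log\varepsilon\,\pi(x)$. So $\beta\mathcal{H}(\rho_\varepsilon) - \beta\mathcal{H}(\hat\rho) \le -c\,\varepsilon\log(1/\varepsilon) + O(\varepsilon)$ for some $c>0$. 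This is where $\beta>0$ is used.
\item The potential term changes by $O(\varepsilon)$.
\item For the distance term we need $W_K(\rho_\varepsilon,\rho_t)^2 - W_K(\hat\rho,\rho_t)^2 = O(\varepsilon)$. Joint convexity of the action $(\rho,\nabla\psi)\mapsto \|\nabla\psi\|^2_\rho$ (concavity of the logarithmic mean; Erbar--Maas) gives convexity of $\rho\mapsto W_K(\rho,\rho_t)^2$ along linear interpolations. Hence $W_K(\rho_\varepsilon,\rho_t)^2 \le (1-\varepsilon)W_K(\hat\rho,\rho_t)^2 + \varepsilon W_K(\mathbf{1},\rho_t)^2$, and the right side is finite because $\px$ has finite $W_K$-diameter.
\end{itemize}
Combining these, $G(\rho_\varepsilon) < G(\hat\rho)$ for small $\varepsilon$, a contradiction. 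If the convexity citation is unavailable, I would instead use the fact that $W_K$ is $\frac12$-Hölder with respect to the Euclidean distance near the boundary (Maas). That gives only $O(\sqrt\varepsilon)$ for the distance term, so the comparison would need more care.

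\textbf{Uniqueness.} The same joint-convexity statement, with $\mathcal{V}$ linear and $\mathcal{H}$ strictly convex in the linear structure, makes $G$ strictly convex along segments $(1-s)\rho^0+s\rho^1$. Two distinct minimizers would then have a midpoint with strictly smaller value, so the minimizer is unique.

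\textbf{First-order condition.} Since $\hat\rho \in \psx$ and $\psx$ is a smooth Riemannian manifold (Theorem~\ref{riemannian_manifold}), $\mathcal{F}$ is smooth near $\hat\rho$. The function $\rho\mapsto\frac12 W_K(\rho,\rho_t)^2$ is locally Lipschitz. If $\rho_t$ is interior and not in the cut locus, it is differentiable, with gradient $-\nabla\psi_0$ coming from the optimal geodesic potential. In general I would pass to directional derivatives: along any smooth curve $\gamma$ through $\hat\rho$ with arbitrary velocity $\nabla\phi$, the one-sided derivative of $G\circ\gamma$ is $\ge 0$ in both directions $\pm\nabla\phi$. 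Using the first variation of $W_K^2$ with respect to the endpoint (the Riemannian first variation formula) together with a uniqueness argument for the geodesic, this forces $\langle \mathrm{grad}\,G(\hat\rho),\nabla\phi\rangle_{\hat\rho}=0$ for all $\phi$, which is \eqref{gradzero_condition}.
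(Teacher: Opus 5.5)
Your proposal is correct and follows essentially the same route as the paper's proof. Both argue existence by compactness, interiority by a linear perturbation in which the $\varepsilon\log\varepsilon$ gain from $\beta\mathcal{H}$ beats the change in $W_K(\cdot,\rho_t)^2$ (controlled by its convexity along segments, Erbar--Maas), and uniqueness by strict convexity of $\mathcal{H}$. The differences are minor: the paper perturbs toward $\rho_t$ rather than $\mathbf{1}$, so its distance term simply satisfies $W_K(\rho_\varepsilon,\rho_t)^2\le(1-\varepsilon)W_K(\hat\rho,\rho_t)^2$, and it just asserts that the objective is smooth on $\mathcal{P}_*(\mathcal{X})$, whereas your first-order step more carefully handles possible non-differentiability of $W_K(\cdot,\rho_t)^2$.

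One correction on phrasing: the joint convexity behind that citation holds in the flux variables $(\rho, j)$ with $j=m_\rho\odot\nabla\psi$, not in $(\rho,\nabla\psi)$ as you wrote. In those variables the action is $\tfrac12\sum_{x,y}j(x,y)^2K(x,y)\pi(x)/m_\rho(x,y)$ and the continuity equation is linear.
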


\begin{remark}
    The non-trivial aspect of ~Theorem \ref{main_result} is the minimizer being unique and lying in the \textit{interior} of $\mathcal{P}(\mathcal{X})$ for any $\tau>0$, allowing us to leverage the Riemannian manifold structure. In the proof of this result, the entropy $\beta H$ crucially acts as a log-barrier at the boundary: similar results can be obtained for more general functionals relaxing the hypothesis ``for any $\tau >0$'' and for example restricting to small $\tau$ values.
\end{remark}

To construct a loss function from the first-order conditions \eqref{gradzero_condition}, the only missing piece is to characterize the gradient of the proximal functional for the free energy $\mathcal{G}(\rho) = \mathcal{V}(\rho) + \beta \mathcal{H}(\rho) + \frac{1}{2\tau}W_K(\rho, \rho_t)^2$ for a fixed starting point $\rho_t$. We do that with the following result:

\begin{theorem}
        \label{gradflow_characterization}
        Let $\mathcal{G}(\rho) = \mathcal{V}(\rho) + \beta \mathcal{H}(\rho) + \frac{1}{2\tau}W_K(\rho, \rho_t)^2$ for a fixed starting point $\rho_t$. Then:
        \begin{equation}
            \mathrm{grad} \, \mathcal{G}(\rho) = \nabla V + \beta \nabla \log(\rho) - \frac{1}{\tau}v_{geo}(\rho \rightarrow\rho_t),
        \end{equation}

        where $\nabla$ is the discrete gradient operator and $v_{geo}(\rho \rightarrow\rho_t)$ is the starting velocity vector in $T_\rho$ of the unique unit-speed geodesic starting from $\rho$ and passing through $\rho_t$.
\end{theorem}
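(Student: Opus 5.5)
The proof handles the three terms of $\mathcal{G}$ separately, since the Riemannian gradient is linear. In each case we identify $\mathrm{grad}\,\mathcal{F}(\rho)$ through the defining relation
\[
\frac{\mathrm{d}}{\mathrm{d}s}\mathcal{F}(\rho_s)\Big|_{s=0} = \left< \mathrm{grad}\,\mathcal{F}(\rho), \nabla\psi \right>_\rho,
\]
which must hold for every smooth curve $\rho_s$ in $\psx$ with $\rho_0=\rho$ and tangent vector $\nabla\psi\in T_\rho$. By the discrete continuity equation \eqref{discrete_continuity}, such a curve satisfies $\dot\rho(x) = \sum_y (\psi(x)-\psi(y))K(x,y)m_\rho(x,y)$.

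For a linear functional $\mathcal{L}(\rho)=\sum_x \phi(x)\rho(x)\pi(x)$ we substitute this expression for $\dot\rho$ and symmetrize using reversibility, $K(x,y)\pi(x)=K(y,x)\pi(x)$ up to relabeling, and the symmetry $m_\rho(x,y)=m_\rho(y,x)$. This gives
\[
\frac{\mathrm{d}}{\mathrm{d}s}\mathcal{L}(\rho_s)\Big|_{s=0} = \frac12\sum_{x,y}(\phi(x)-\phi(y))(\psi(x)-\psi(y))K(x,y)m_\rho(x,y)\pi(x) = \left<\nabla\phi,\nabla\psi\right>_\rho .
\]
Hence $\mathrm{grad}\,\mathcal{L}=\nabla\phi$, and in particular $\mathrm{grad}\,\mathcal{V}=\nabla V$. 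For the entropy, the first variation is $\sum_x(\log\rho(x)+1)\dot\rho(x)\pi(x)$. The constant $1$ drops out because mass is conserved, $\sum_x\dot\rho(x)\pi(x)=0$, which follows from the same symmetry. What remains is the linear case with $\phi=\log\rho$, so $\mathrm{grad}\,\beta\mathcal{H}=\beta\nabla\log\rho$. This is consistent with Theorem \ref{thm_gradflow}. Here strict positivity of $\rho$ is needed for $\log\rho$ to be smooth.

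For the distance term $\frac{1}{2\tau}W_K(\cdot,\rho_t)^2$ we use Theorem \ref{riemannian_manifold}, which says $W_K$ is the Riemannian distance on $\psx$. For $q=\rho_t$ fixed, the standard Riemannian formula is $\mathrm{grad}\,\tfrac12 d(\cdot,q)^2(\rho)=-\exp_\rho^{-1}(q)$, valid wherever the geodesic from $\rho$ to $q$ is unique and $q$ is not a cut point of $\rho$. We prove it by the first variation of energy. Let $\gamma$ be the minimizing constant-speed geodesic on $[0,1]$ from $\rho$ to $\rho_t$. Varying its initial point along $\nabla\psi$ changes the energy, to first order, by $-\left<\dot\gamma(0),\nabla\psi\right>_\rho$, because the interior term vanishes by the geodesic equation. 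Therefore $\mathrm{grad}\,\tfrac12 W_K^2 = -\dot\gamma(0)$. Since $\dot\gamma(0)$ has norm $W_K(\rho,\rho_t)$, this is the starting velocity of the geodesic through $\rho_t$. Dividing by $\tau$ gives the term $-\frac1\tau v_{geo}(\rho\to\rho_t)$, once $v_{geo}$ is read as the velocity of the geodesic parametrized to reach $\rho_t$ at time $1$. If $v_{geo}$ is instead normalized to unit speed, an extra factor $W_K(\rho,\rho_t)$ appears. I would state the normalization explicitly, since the statement's wording ``unit-speed'' needs this factor absorbed.

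The main obstacle is justifying differentiability of $W_K^2$ at $\rho$. This requires a unique minimizing geodesic from $\rho$ to $\rho_t$ that stays in $\psx$, and $\rho_t$ must not be conjugate to or in the cut locus of $\rho$. The manifold $\psx$ is incomplete at the boundary, and $\rho_t$ may even lie on the boundary, so the classical cut-locus theory does not apply directly. I would handle this in two steps. First, I would restrict to the setting of Theorem \ref{main_result}, where $\rho,\rho_t\in\psx$, and invoke the results of \cite{maas2011gradientflowsentropyfinite,erbar2018geometrygeodesicsdiscreteoptimal} that $W_K$-geodesics between interior points remain in the interior. Second, I would prove the first-variation formula for $\tfrac12W_K^2$ using only a one-sided superdifferential argument: plugging a perturbed curve into the Benamou--Brenier infimum \eqref{discrete_metric} gives an upper bound. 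I would then use uniqueness of the geodesic to upgrade this to genuine differentiability, possibly only at almost every $\rho$. If global uniqueness is unavailable, I would state the theorem for $\rho$ outside the cut locus of $\rho_t$. This costs nothing at the JKO minimizer, since the minimizing property of $\hat\rho_{t+1}$ forces $W_K^2(\cdot,\rho_t)$ to have a subgradient there.
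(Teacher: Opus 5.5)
Your proposal is correct and takes essentially the same route as the paper: split the gradient over the three terms, get $\mathrm{grad}\,\mathcal{V}=\nabla V$ and $\mathrm{grad}\,\mathcal{H}=\nabla\log\rho$ from the continuity equation plus discrete integration by parts (your symmetrization, with the reversibility identity corrected to $\pi(x)K(x,y)=\pi(y)K(y,x)$), and get the distance term from the first variation of energy, which gives $\mathrm{grad}\,W_K^2=-2\exp_\rho^{-1}(\rho_t)$. Your remarks on normalization and regularity are well taken: the paper's own argument implicitly uses the velocity of the $[0,1]$-parametrized geodesic rather than a unit-speed one, and it simply assumes $\mathcal{G}$ is smooth on a neighbourhood of $\rho_t$ instead of addressing cut-locus or boundary issues.
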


From \eqref{gradzero_condition} and this theorem, we can devise a strategy to learn the guiding underlying quantities $\nabla V$ and $\beta$ from the snapshots $\{ \mathcal{Y}_t\}_{0 \leq t \leq T}$. The core idea is to preprocess the data by estimating empirical densities $\hat{\rho}_t$ and from there estimating the geodesic vectors at every timestep $v_t := v_{geo}(\hat{\rho}_{t} \rightarrow \hat{\rho}_{t-1})$. We then estimate $\nabla V$ and $\beta$ via a neural network minimizing the loss function:

\begin{equation}
\label{loss_function}
    \int_{t=0}^T \mathbb{E}_{x_t \sim p_t} \left[ \left\| \nabla V_\theta(x_t) + \beta_\theta \nabla \log \hat{\rho}_t(x_t) - \frac{v_t(x_t)}{\tau}\right\|_2^2 \right] \mathrm{d}t
\end{equation}

where $p_t := \rho_t \odot \pi$ is the probability distribution computed from the density $\rho_t$ and $\rho_t(x_t)$ and $v_t(x_t)$ are defined by taking the respective vector entry corresponding to state $x_t$. We refer to Figure~\ref{fig:pipeline} in the Appendix for a graphical rendition of the pipeline.

\begin{remark}
    Our loss function is based on the sole necessary condition of imposing gradients of functionals to be equal to 0. As a result, the loss function working in practice is non-obvious, as for non-convex functionals $\mathcal{F}$ there might be points in the domain that have the gradient equal to 0 while not being global minima: while the experiment results indicate that this does not happen in practice, the convexity of $\mathcal{H}$ is in general an open problem and beyond the scope of this paper. We refer the reader to \citep{erbar2018geometrygeodesicsdiscreteoptimal} for some preliminary results, which illustrate how this convexity unsurprisingly heavily depends on the graph $K$, with more ``tree-like'' structures being less convex than more ``complete graph-like'' structures. On this note, of the two most prominent $K$ used in sequence generation applications, the Hamming graph with uniform transition kernel does indeed make $\mathcal{H}$ geodesically convex, while it is to our knowledge not known whether this happens for the regularized absorbing kernel.
\end{remark}

\section{Numerical Estimation of $W_K$ Geodesics}

We now describe how we estimate the geodesic velocity vectors $v_t$: the procedure is rather straightforward, relying on formulating the optimization problem defining $W_K$ \eqref{discrete_metric} as a quadratic problem and then solving the associated KKT conditions via the Schur complement and the Cholesky factorization \citep{golub13}, thanks to the optimality of geodesics for a Riemannian metric.

We first show how to simply reduce \eqref{discrete_metric} to a quadratic optimization problem. Given $\rho \in \psx$, define the logarithmic mean matrix $\Theta(\rho) \in \mathbb{R}^{N \times N}$ by $\left[ \Theta(\rho) \right]_{ij} = m_\rho(x_i, x_j)$ using \eqref{logmean} and also define $W(\rho)\;=\;\mathrm{diag}(\pi)\,\bigl(K\odot \Theta(\rho)\bigr)$. Now, define the \textit{continuity operator} and the \textit{graph Laplacian} as: 
\begin{equation}
    A(\rho) := K \odot \Theta(\rho) - \mathrm{diag}((K \odot \Theta(\rho))\mathbf{1})
\end{equation}
\begin{equation}
    M(\rho) := \mathrm{diag}(W(\rho)\mathbf{1}) - W(\rho).
\end{equation}

If $K$ is reversible, the matrix $M$ is symmetric \citep{golub13}. Now, given a density derivative $\dot\rho$ with $\langle \dot\rho,1\rangle_\pi=0$, the optimization problem \eqref{discrete_metric} becomes:
\begin{equation*}
    \min_{\psi\in\mathbb{R}^N}\;\; \psi^\top M(\rho)\,\psi
\quad\text{s.t.}\quad
A(\rho)\,\psi = -\dot\rho,\quad \psi_{p}=0,
\end{equation*}

where we fix a gauge by pinning one component $\psi_p$ to 0. Now, we write the KKT conditions for this optimization problem:

\begin{equation}
    \begin{bmatrix}
2M(\rho) & A(\rho)^\top\\[2pt]
A(\rho) & 0
\end{bmatrix}
\begin{bmatrix}\psi\\ \lambda\end{bmatrix}
=
\begin{bmatrix}0\\ c\end{bmatrix}.
\end{equation}

From this system, it's easy to recover the solution $\psi$ as $\psi \;=\; -\tfrac{1}{2}\,M(\rho)^{-1}A(\rho)^\top\lambda$. We now construct $\psi$ efficiently by a double-Cholesky factorization, avoiding explicitly inverting $M(\rho)$.
\begin{enumerate}\itemsep0.1em
\item Factor $M(\rho)=L_M L_M^\top$ in a lower triangular and upper triangular factors via Cholesky.
\item Solve $L_M L_M^\top Y = A(\rho)^\top$ by two triangular solves, and define $Y = M(\rho)^{-1}A(\rho)^\top$;
\item Assemble the Schur complement $S(\rho)=A(\rho)\,Y$;
\item Factor the Schur component, which is once again symmetric $S(\rho)=L_S L_S^\top$, once again via Cholesky.
\item Solve $L_S L_S^\top \lambda = -2c$. Finally, set $\psi = -\tfrac12\,Y\lambda$.
\end{enumerate}
This realizes the exact KKT solution using only triangular solves and two Cholesky factorizations. As the naive Cholesky factorization is cubic in the number of constraints, this routine runs in $\mathcal{O}(N^3)$ time, where we recall $N$ is the size of the set $\mathcal{X}$.

\section{Evaluation}

We evaluate our method on controlled synthetic discrete diffusion tasks, where the underlying graph structure and free-energy functional are known. We compare against a strong baseline for Markov jump processes and complement this comparison with targeted ablations.

\subsection{Experimental Setup}

We summarize below the experimental setup shared across all evaluations, including the model architecture, data generation procedure, evaluation metric, and inference scheme. Unless stated otherwise, these components are kept fixed throughout the experiments.

\subsubsection{Model}

We use a deliberately simple architecture consisting of a two-hidden-layer MLP with 64 neurons per layer. The network outputs a potential head of dimension $n$, which for each basis element $x_i = e_i$ predicts the discrete gradient
\[
\nabla V_\theta(x_i) = \left[V(y) - V(x_i)\right]_{y \in \mathcal{X}},
\]
as well as a one-dimensional head estimating a global noise parameter $\beta_\theta$.

Models are trained for two epochs with a batch size of 128 on a single L40S GPU. Training is computationally lightweight, with a single epoch taking on the order of \textit{tens of seconds} for moderate graph sizes.

\subsubsection{Metrics}

To evaluate the forecasting abilities of our model, we follow \citet{berghaus2025foundationinferencemodelsmarkov} and use the Hellinger distance between the ground-truth  probabilities and the ones estimated from our samplings. We report the time-averaged quantity $\frac{1}{T}\sum_{t=0}^T H(p^t, q^t)$ as $t$ varies in the considered time range.

\subsubsection{Data}

We evaluate our method on a collection of synthetic graph classes exhibiting a range of sparsity patterns and structural properties. For each class, graph instances are generated by randomly sampling both the graph topology and edge weights. Details of the graph generation procedure, along with examples of each class, are provided in Appendix~\ref{app_graphs}.

For each graph instance, a ground-truth free-energy functional is constructed by sampling a vertex-wise potential $V$, together with a noise parameter $\beta$. An initial probability distribution $p_0$ is also sampled and used consistently for both training and evaluation. Dynamics are generated on a regular time grid with a fixed horizon, shared across training and testing.

\subsubsection{Inference}

Both data generation and inference are performed using a simple Euler discretization of the forward Kolmogorov equation associated with the free-energy functional, given the gradient $\nabla V$ and noise parameter $\beta$. This choice ensures that differences in performance arise from the learned functional rather than from discrepancies in the simulation procedure.

We do not employ more sophisticated discrete sampling techniques, such as $\tau$-leaping~\citep{gillespietauleaping1}, for simplicity, although such methods could further improve sampling efficiency. Additional details are provided in Appendix~\ref{app_inference}.

\subsection{Results against Baseline}

We benchmark our method against OpenFIM \citep{berghaus2025foundationinferencemodelsmarkov}, a recent foundation model for predicting Markov jump processes in a zero-shot setting. OpenFIM is trained on a large corpus of small graphs and is designed to generalize across graph structures without task-specific retraining. To ensure a fair and in-distribution comparison, we restrict this evaluation to graphs of size at most six, matching the regime on which OpenFIM was pretrained.

We evaluate our method across all synthetic graph classes by randomly sampling the underlying potential $V$ in the range $[-1,1]$ and noise parameter $\beta \in [0.01, 0.1, 0.2]$ following \citep{terpin2024learningdiffusionlightspeed}, fixing the number of samples to $10{,}000$ per run. For each configuration, we generate five independent graph instances and report the average Hellinger distance. Results are summarized in Table~\ref{tab:baseline} and Figure~\ref{fig:baseline}. Across all graph classes and noise levels, our method consistently outperforms OpenFIM with a clear margin. Notably, this performance is achieved using a substantially simpler architecture, with significantly fewer parameters (for $N=6$, typically of the order of $10$k parameters) and dramatically reduced training time (under one minute, compared to several hours of pretraining for OpenFIM). While OpenFIM is designed for zero-shot prediction and trained in a different regime, we view these results as strong evidence that explicitly leveraging the underlying gradient-flow structure provides an effective and lightweight approach to learning discrete stochastic dynamics.

\begin{figure*}[htb]
  \centering
  \includegraphics[width=\textwidth]{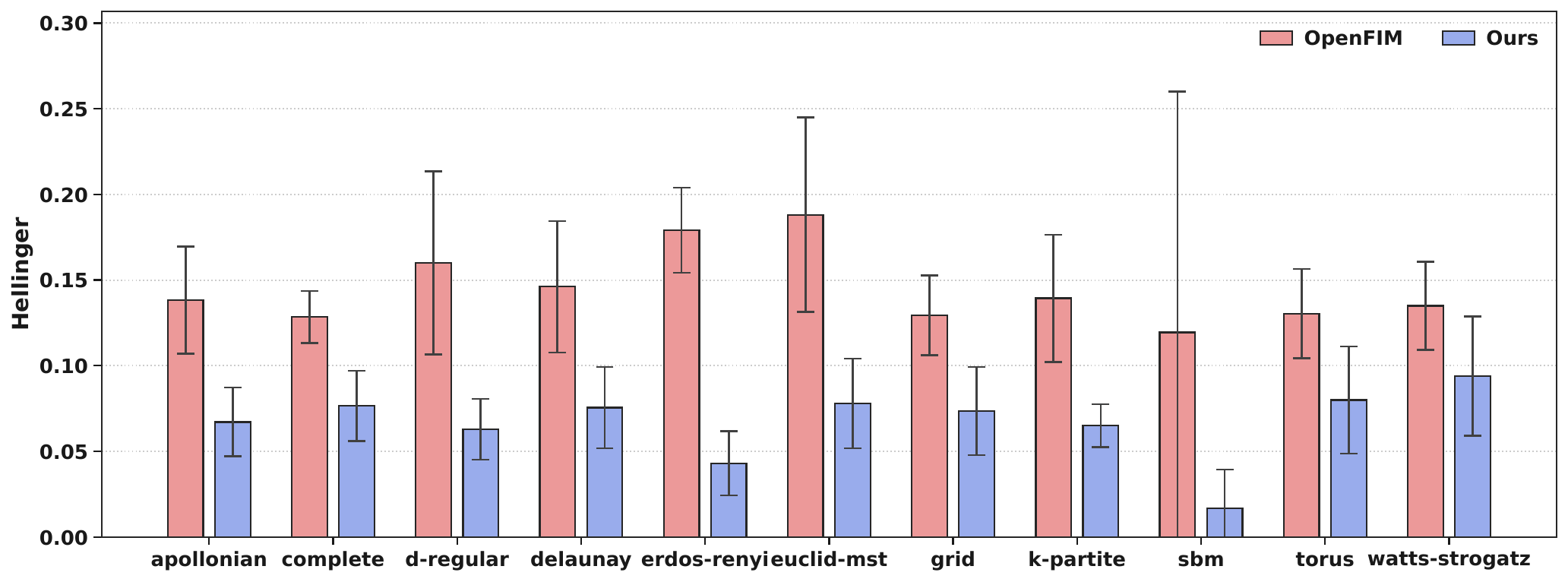}  
  \caption{Hellinger distance of our model against OpenFIM, averaged across all $\beta$ levels: error bars computed over 5 samples for each configuration.}
  \label{fig:baseline}
\end{figure*}

\begin{table}[h]
\caption{Hellinger aggregated across all graph classes, per $\beta$.}
\label{tab:baseline}
\begin{center}
\begin{tabular}{lcc}
\textbf{$\beta$} & \textbf{Ours} & \textbf{OpenFIM} \\
\hline \\[-0.6em]
$\beta=0.01$ & $0.066 \pm 0.031$ & $0.142 \pm 0.048$ \\
$\beta=0.10$ & $0.059 \pm 0.029$ & $0.150 \pm 0.061$ \\
$\beta=0.20$ & $0.069 \pm 0.031$ & $0.159 \pm 0.097$ \\
\end{tabular}
\end{center}
\end{table}

\begin{figure}[htb]
  \centering
  \includegraphics[width=\columnwidth]{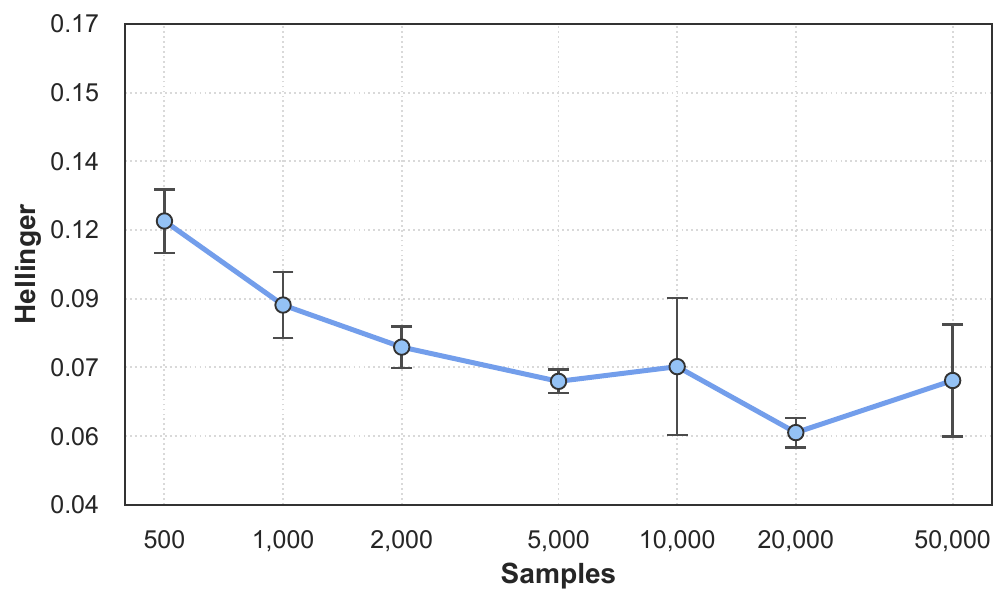}  
  \caption{Hellinger distance as the number of samples increases. Averaged across all graph classes, with fixed size 10.}
  \label{fig:samples_ablation}
\end{figure}

\begin{figure}[htb]
  \centering
  \includegraphics[width=\columnwidth]{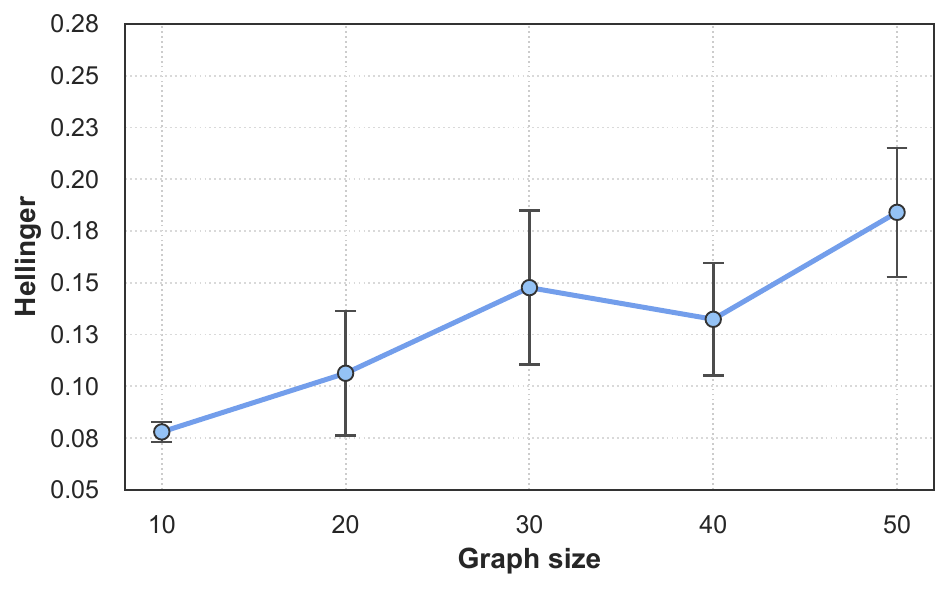}  
  \caption{Hellinger distance as the graph size grows, averaged across all graph classes.}
  \label{fig:size_ablation}
\end{figure}

\subsection{Ablations}

We complement the baseline comparison with a set of targeted ablations, aimed at assessing how the proposed method behaves as features such as graph size and sample availability are varied.

\subsubsection{Ablating on Graph Size $N$}

We first study the effect of increasing graph size on model performance. To this end, we train and evaluate our method on graphs of increasing size, sampling graph instances in each synthetic class with numbers of vertices $N \in \{10, 20, 30, 40, 50\}$. Performance is measured in terms of Hellinger distance and averaged across graph classes.

As shown in Figure~\ref{fig:size_ablation}, we observe a gradual degradation in performance as the graph size increases, with Hellinger distance exhibiting approximately linear growth in $N$ over the considered range. This behavior is consistent with the increasing dimensionality of the probability simplex and the corresponding difficulty of the optimization problem.

Empirically, for larger graphs we also observe occasional optimization instabilities in the quadratic objective, with around $1.5\%$ of learned distributions collapsing toward degenerate solutions concentrated on a single vertex. We conjecture that this phenomenon is related to trajectories approaching the boundary of the probability simplex, where the underlying geometry becomes non-smooth. In the present study, we focus on stable runs to characterize typical performance, and leave a systematic treatment of these instabilities to future work. We remark that these failed runs are easy to identify from the constancy of the predicted states among elements of $\mathcal{X}$.

\subsubsection{Ablating the Number of Samples}

We next investigate the dependence of model performance on the number of observed samples per run. For this ablation, we fix the graph size to $N=10$ and evaluate across all graph classes while varying the number of samples in $\{500, 1000, 2000, 5000, 10000\}$.

Results are reported in Figure~\ref{fig:samples_ablation}. We find that performance is largely insensitive to the number of samples once a moderate threshold is reached, with Hellinger distance remaining stable across a wide range of sample sizes. This behavior is consistent with the structure of the learning objective, which operates on empirical density estimates living in an $N$-dimensional space rather than directly on individual samples.

A slight degradation in performance is observed at the smallest sample size ($500$), which we attribute to increased variance in the empirical estimate of the density $\hat{\rho}$. Overall, these results suggest that the proposed method does not require large sample sizes to achieve accurate recovery of the underlying dynamics.

\section{Limitations}

We emphasize an important difference between our learning assumptions and those commonly made in CTMC-based discrete diffusion generative models. In the generative setting \citep{austin2023structureddenoisingdiffusionmodels, campbell2022continuoustimeframeworkdiscrete, lou2024discretediffusionmodelingestimating, davis2024fisherflowmatchinggenerative}, one specifies a \emph{known} forward noising process (typically a time-rescaled discrete heat equation on a graph, which for sequence generation is usually the Hamming graph with $n^d$ states, where $n$ is the alphabet size and $d$ the sequence length). Access to this ground-truth noising CTMC makes it possible to compute analytically the \emph{conditional} vector fields (or probability ratios) required to reverse the process, enabling conditional score matching and avoiding estimation of intractable \emph{joint} ratios in very high-dimensional spaces \citep{ho2020denoising, song2020score, gat2024discreteflowmatching}. 

By contrast, our framework makes the weaker assumption of observing only \emph{temporal snapshots} along an unknown trajectory, without access to the ground-truth noising process itself. This difference is not a fundamental limitation of the geometric setup, but an assumption mismatch that induces a computational tradeoff: without the GT noising process, we cannot leverage conditional score-matching reductions, and instead resort to estimating densities from snapshots (feasible at our scale) and recover \emph{joint} score-like quantities through the $W_K$ geometry via our quadratic geodesic routine. This is precisely why we do not run large-scale sequence-generation experiments: in language-style settings the state space is the Hamming graph with $n^d$ vertices (e.g.\ $n=50257$ for GPT2 \citep{radford2019language}), where joint density estimation from snapshots is not computationally viable.

Importantly, this does \emph{not} break the conceptual link with discrete diffusion. When $\mathcal{F}$ is the relative entropy, the gradient in our metric satisfies $\mathrm{grad}\,\mathcal{H}(\rho)=\nabla\log\rho$, which it coincides with the log-ratio structure that appears in time reversal of the discrete heat equation: our geodesic vectors provide a geometric route to the same object. We view integrating the computational machinery of state-of-the-art discrete diffusion models as a prominent next step. However, doing so would require a number of design choices (such as choosing conditional network outputs, tuning their expressivity, and optimizing for sparse graphs such as the Hamming graph) that are orthogonal to the foundational theory presented here. We thus leave this integration to future work, while noting that the core theoretical insights (identification of classical trajectories as gradient flows of free energies) and our practical pipeline remain fully applicable within the regimes considered in this paper.

\section{Conclusion}

In this paper, we presented a method to learn the functionals underlying discrete gradient flow dynamics. We provided a concise synthesis of the existing purely mathematical theory on the subject tailoring it to machine learning use, and developed a novel method to leverage and implement this theory to train a machine learning model. This is constituted of a numerical geodesic estimator and a quadratic-loss training routine based on imposing first-order optimality conditions. We benchmark our method on a variety of graph classes and regimes, outperforming existing baselines and ablating the role of the different components of the method such as graph structure, chosen functional and time grid.

\section{Impact Statement}

This paper presents work whose goal is to advance the field
of Machine Learning. There are many potential societal
consequences of our work, as optimal transport on discrete domains has a vast
range of applications, but none of these we feel must be
specifically highlighted here.

\bibliography{refs}

@misc{song2021scorebased,
      title={Score-Based Generative Modeling through Stochastic Differential Equations}, 
      author={Yang Song and Jascha Sohl-Dickstein and Diederik P. Kingma and Abhishek Kumar and Stefano Ermon and Ben Poole},
      year={2021},
      eprint={2011.13456},
      archivePrefix={arXiv},
      primaryClass={cs.LG}
}

@article{song2020score,
  title={Score-based generative modeling through stochastic differential equations},
  author={Song, Yang and Sohl-Dickstein, Jascha and Kingma, Diederik P and Kumar, Abhishek and Ermon, Stefano and Poole, Ben},
  journal={arXiv preprint arXiv:2011.13456},
  year={2020}
}

@article{song2019generative,
  title={Generative modeling by estimating gradients of the data distribution},
  author={Song, Yang and Ermon, Stefano},
  journal={Advances in neural information processing systems},
  volume={32},
  year={2019}
}

@article{ho2020denoising,
  title={Denoising diffusion probabilistic models},
  author={Ho, Jonathan and Jain, Ajay and Abbeel, Pieter},
  journal={Advances in neural information processing systems},
  volume={33},
  pages={6840--6851},
  year={2020}
}

@misc{terpin2024learningdiffusionlightspeed,
      title={Learning diffusion at lightspeed}, 
      author={Antonio Terpin and Nicolas Lanzetti and Martin Gadea and Florian Dörfler},
      year={2024},
      eprint={2406.12616},
      archivePrefix={arXiv},
      primaryClass={cs.LG},
      url={https://arxiv.org/abs/2406.12616}, 
}

@misc{jkovariationalformulation,
        author = {Jordan, Richard and Kinderlehrer, David and Otto, Felix},
        title = {The Variational Formulation of the Fokker--Planck Equation},
        journal = {SIAM Journal on Mathematical Analysis},
        volume = {29},
        number = {1},
        pages = {1-17},
        year = {1998},
        doi = {10.1137/S0036141096303359}
        }

@misc{Otto31012001,
        author = {Felix Otto},
        title = {THE GEOMETRY OF DISSIPATIVE EVOLUTION EQUATIONS: THE POROUS MEDIUM EQUATION},
        journal = {Communications in Partial Differential Equations},
        volume = {26},
        number = {1-2},
        pages = {101--174},
        year = {2001},
        publisher = {Taylor \& Francis},
        doi = {10.1081/PDE-100002243}
}

@misc{bunne2022proximaloptimaltransportmodeling,
      title={Proximal Optimal Transport Modeling of Population Dynamics}, 
      author={Charlotte Bunne and Laetitia Meng-Papaxanthos and Andreas Krause and Marco Cuturi},
      year={2022},
      eprint={2106.06345},
      archivePrefix={arXiv},
      primaryClass={cs.LG},
      url={https://arxiv.org/abs/2106.06345}, 
}

@misc{maas2011gradientflowsentropyfinite,
      title={Gradient flows of the entropy for finite Markov chains}, 
      author={Jan Maas},
      year={2011},
      eprint={1102.5238},
      archivePrefix={arXiv},
      primaryClass={math.PR},
      url={https://arxiv.org/abs/1102.5238}, 
}

@misc{lou2024discretediffusionmodelingestimating,
      title={Discrete Diffusion Modeling by Estimating the Ratios of the Data Distribution}, 
      author={Aaron Lou and Chenlin Meng and Stefano Ermon},
      year={2024},
      eprint={2310.16834},
      archivePrefix={arXiv},
      primaryClass={stat.ML},
      url={https://arxiv.org/abs/2310.16834}, 
}

@misc{campbell2022continuoustimeframeworkdiscrete,
      title={A Continuous Time Framework for Discrete Denoising Models}, 
      author={Andrew Campbell and Joe Benton and Valentin De Bortoli and Tom Rainforth and George Deligiannidis and Arnaud Doucet},
      year={2022},
      eprint={2205.14987},
      archivePrefix={arXiv},
      primaryClass={stat.ML},
      url={https://arxiv.org/abs/2205.14987}, 
}

@misc{erbar2017computationoptimaltransportdiscrete,
      title={Computation of Optimal Transport on Discrete Metric Measure Spaces}, 
      author={Matthias Erbar and Martin Rumpf and Bernhard Schmitzer and Stefan Simon},
      year={2017},
      eprint={1707.06859},
      archivePrefix={arXiv},
      primaryClass={math.NA},
      url={https://arxiv.org/abs/1707.06859}, 
}

@misc{erbar2012gradientflowstructuresdiscrete,
      title={Gradient flow structures for discrete porous medium equations}, 
      author={Matthias Erbar and Jan Maas},
      year={2012},
      eprint={1212.1129},
      archivePrefix={arXiv},
      primaryClass={math.FA},
      url={https://arxiv.org/abs/1212.1129}, 
}

@misc{erbar2018geometrygeodesicsdiscreteoptimal,
      title={On the geometry of geodesics in discrete optimal transport}, 
      author={Matthias Erbar and Jan Maas and Melchior Wirth},
      year={2018},
      eprint={1805.06040},
      archivePrefix={arXiv},
      primaryClass={math.MG},
}

@misc{Erbar_2012,
   title={Ricci Curvature of Finite Markov Chains via Convexity of the Entropy},
   volume={206},
   ISSN={1432-0673},
   DOI={10.1007/s00205-012-0554-z},
   number={3},
   journal={Archive for Rational Mechanics and Analysis},
   publisher={Springer Science and Business Media LLC},
   author={Erbar, Matthias and Maas, Jan},
   year={2012},
   month=aug, pages={997–1038} }

@misc{benamoubrenierOG2000,
    author = {Benamou, Jean-David and Brenier, Yann},
    year = {2000},
    month = {01},
    pages = {375-393},
    title = {A computational fluid mechanics solution to the Monge-Kantorovich mass transfer problem},
    volume = {84},
    journal = {Numerische Mathematik},
    doi = {10.1007/s002110050002}
    }

@misc{gat2024discreteflowmatching,
      title={Discrete Flow Matching}, 
      author={Itai Gat and Tal Remez and Neta Shaul and Felix Kreuk and Ricky T. Q. Chen and Gabriel Synnaeve and Yossi Adi and Yaron Lipman},
      year={2024},
      eprint={2407.15595},
      archivePrefix={arXiv},
      primaryClass={cs.LG},
      url={https://arxiv.org/abs/2407.15595}, 
}

@article{santambrogio2015optimal,
  title={Optimal transport for applied mathematicians},
  author={Santambrogio, Filippo},
  publisher={Springer},
year={2015}
}

@misc{gillespietauleaping1,
    author = {Gillespie, Daniel},
    year = {2001},
    month = {07},
    pages = {1716-1733},
    title = {Approximate Accelerated Stochastic Simulation of Chemically Reacting Systems},
    volume = {115},
    journal = {Journal of Chemical Physics},
    doi = {10.1063/1.1378322}
}

@misc{facca2024efficientpreconditionerssolvingdynamical,
      title={Efficient preconditioners for solving dynamical optimal transport via interior point methods}, 
      author={Enrico Facca and Gabriele Todeschi and Andrea Natale and Michele Benzi},
      year={2024},
      eprint={2209.00315},
      archivePrefix={arXiv},
      primaryClass={math.NA},
}

@misc{petersen2006riemannian,
  title={Riemannian geometry},
  author={Petersen, P},
  journal={Graduate Texts in Mathematics/Springer-Verlarg},
  year={2006}
    }

@misc{stark2024dirichletflowmatchingapplications,
      title={Dirichlet Flow Matching with Applications to DNA Sequence Design}, 
      author={Hannes Stark and Bowen Jing and Chenyu Wang and Gabriele Corso and Bonnie Berger and Regina Barzilay and Tommi Jaakkola},
      year={2024},
      eprint={2402.05841},
      archivePrefix={arXiv},
      primaryClass={q-bio.BM},
      url={https://arxiv.org/abs/2402.05841}, 
}

@misc{davis2024fisherflowmatchinggenerative,
      title={Fisher Flow Matching for Generative Modeling over Discrete Data}, 
      author={Oscar Davis and Samuel Kessler and Mircea Petrache and İsmail İlkan Ceylan and Michael Bronstein and Avishek Joey Bose},
      year={2024},
      eprint={2405.14664},
      archivePrefix={arXiv},
      primaryClass={cs.LG},
      url={https://arxiv.org/abs/2405.14664}, 
}

@misc{chow_discrete_fp,
author={Shui-Nee Chow and Wen Huang and Yao Li and Haomin Zhou},
title={Fokker–planck equations for a free energy
functional or markov process on a graph},
year={2012},
booktitle={Archive for Rational Mechanics and Analysis 203},
pages={969–1008}     
}

@misc{chow2017entropydissipationfokkerplanckequations,
      title={Entropy dissipation of Fokker-Planck equations on graphs}, 
      author={Shui-Nee Chow and Wuchen Li and Haomin Zhou},
      year={2017},
      eprint={1701.04841},
      archivePrefix={arXiv},
      primaryClass={math.DS},
      url={https://arxiv.org/abs/1701.04841}, 
}

@misc{berghaus2025foundationinferencemodelsmarkov,
      title={Foundation Inference Models for Markov Jump Processes}, 
      author={David Berghaus and Kostadin Cvejoski and Patrick Seifner and Cesar Ojeda and Ramses J. Sanchez},
      year={2025},
      eprint={2406.06419},
      archivePrefix={arXiv},
      primaryClass={cs.LG},
      url={https://arxiv.org/abs/2406.06419}, 
}

@misc{xu2024normalizingflowneuralnetworks,
      title={Normalizing flow neural networks by JKO scheme}, 
      author={Chen Xu and Xiuyuan Cheng and Yao Xie},
      year={2024},
      eprint={2212.14424},
      archivePrefix={arXiv},
      primaryClass={stat.ML},
      url={https://arxiv.org/abs/2212.14424}, 
}

@misc{choi2024scalablewassersteingradientflow,
      title={Scalable Wasserstein Gradient Flow for Generative Modeling through Unbalanced Optimal Transport}, 
      author={Jaemoo Choi and Jaewoong Choi and Myungjoo Kang},
      year={2024},
      eprint={2402.05443},
      archivePrefix={arXiv},
      primaryClass={cs.LG},
      url={https://arxiv.org/abs/2402.05443}, 
}

@misc{mokrov2021largescalewassersteingradientflows,
      title={Large-Scale Wasserstein Gradient Flows}, 
      author={Petr Mokrov and Alexander Korotin and Lingxiao Li and Aude Genevay and Justin Solomon and Evgeny Burnaev},
      year={2021},
      eprint={2106.00736},
      archivePrefix={arXiv},
      primaryClass={cs.LG},
      url={https://arxiv.org/abs/2106.00736}, 
}

@misc{salim2021wassersteinproximalgradientalgorithm,
      title={The Wasserstein Proximal Gradient Algorithm}, 
      author={Adil Salim and Anna Korba and Giulia Luise},
      year={2021},
      eprint={2002.03035},
      archivePrefix={arXiv},
      primaryClass={math.OC},
      url={https://arxiv.org/abs/2002.03035}, 
}

@misc{austin2023structureddenoisingdiffusionmodels,
      title={Structured Denoising Diffusion Models in Discrete State-Spaces}, 
      author={Jacob Austin and Daniel D. Johnson and Jonathan Ho and Daniel Tarlow and Rianne van den Berg},
      year={2023},
      eprint={2107.03006},
      archivePrefix={arXiv},
      primaryClass={cs.LG},
      url={https://arxiv.org/abs/2107.03006}, 
}

@misc{shi2023diffusionschrodingerbridgematching,
      title={Diffusion Schr\"odinger Bridge Matching}, 
      author={Yuyang Shi and Valentin De Bortoli and Andrew Campbell and Arnaud Doucet},
      year={2023},
      eprint={2303.16852},
      archivePrefix={arXiv},
      primaryClass={stat.ML},
      url={https://arxiv.org/abs/2303.16852}, 
}

@misc{campbell2024generativeflowsdiscretestatespaces,
      title={Generative Flows on Discrete State-Spaces: Enabling Multimodal Flows with Applications to Protein Co-Design}, 
      author={Andrew Campbell and Jason Yim and Regina Barzilay and Tom Rainforth and Tommi Jaakkola},
      year={2024},
      eprint={2402.04997},
      archivePrefix={arXiv},
      primaryClass={stat.ML},
      url={https://arxiv.org/abs/2402.04997}, 
}

@misc{xu2024discretestatecontinuoustimediffusiongraph,
      title={Discrete-state Continuous-time Diffusion for Graph Generation}, 
      author={Zhe Xu and Ruizhong Qiu and Yuzhong Chen and Huiyuan Chen and Xiran Fan and Menghai Pan and Zhichen Zeng and Mahashweta Das and Hanghang Tong},
      year={2024},
      eprint={2405.11416},
      archivePrefix={arXiv},
      primaryClass={cs.LG},
      url={https://arxiv.org/abs/2405.11416}, 
}

@misc{kim2025discretediffusionschrodingerbridge,
      title={Discrete Diffusion Schr\"odinger Bridge Matching for Graph Transformation}, 
      author={Jun Hyeong Kim and Seonghwan Kim and Seokhyun Moon and Hyeongwoo Kim and Jeheon Woo and Woo Youn Kim},
      year={2025},
      eprint={2410.01500},
      archivePrefix={arXiv},
      primaryClass={cs.LG},
      url={https://arxiv.org/abs/2410.01500}, 
}

@article{Mielke:2011,
author = {Mielke, Alexander},
journal = {Nonlinearity},
number = {4},
pages = {1329--1346},
title = {A gradient structure for reaction-diffusion systems and for energy-drift-diffusion systems},
volume = {24},
year = {2011}
}

@misc{sun2023discretelangevinsamplerwasserstein,
      title={Discrete Langevin Sampler via Wasserstein Gradient Flow}, 
      author={Haoran Sun and Hanjun Dai and Bo Dai and Haomin Zhou and Dale Schuurmans},
      year={2023},
      eprint={2206.14897},
      archivePrefix={arXiv},
      primaryClass={cs.LG},
      url={https://arxiv.org/abs/2206.14897}, 
}

@misc{haviv2025wassersteinflowmatchinggenerative,
      title={Wasserstein Flow Matching: Generative modeling over families of distributions}, 
      author={Doron Haviv and Aram-Alexandre Pooladian and Dana Pe'er and Brandon Amos},
      year={2025},
      eprint={2411.00698},
      archivePrefix={arXiv},
      primaryClass={cs.LG},
      url={https://arxiv.org/abs/2411.00698}, 
}

@book{golub13,
  author = {Golub, Gene H. and van Loan, Charles F.},
  edition = {Fourth},
  isbn = {1421407949 9781421407944},
  publisher = {JHU Press},
  refid = {824733531},
  title = {Matrix Computations},
  year = 2013
}

@misc{benton2024denoisingdiffusionsdenoisingmarkov,
      title={From Denoising Diffusions to Denoising Markov Models}, 
      author={Joe Benton and Yuyang Shi and Valentin De Bortoli and George Deligiannidis and Arnaud Doucet},
      year={2024},
      eprint={2211.03595},
      archivePrefix={arXiv},
      primaryClass={stat.ML},
      url={https://arxiv.org/abs/2211.03595}, 
}

@article{article,
    author = {Gillespie, Daniel},
    year = {2001},
    month = {07},
    pages = {1716-1733},
    title = {Approximate Accelerated Stochastic Simulation of Chemically Reacting Systems},
    volume = {115},
    journal = {Journal of Chemical Physics},
    doi = {10.1063/1.1378322}
}

@article{radford2019language,
  title={Language models are unsupervised multitask learners},
  year = {2019},
  author={Radford, Alec and Wu, Jeffrey and Child, Rewon and Luan, David and Amodei, Dario and Sutskever, Ilya and others}
}

@book{ambrosioGradientFlowsMetric2008,
  langid = {english},
  location = {{Basel}},
  title = {Gradient Flows in Metric Spaces and in the Space of Probability Measures},
  edition = {2. ed},
  isbn = {978-3-7643-8722-8 978-3-7643-8721-1},
  pagetotal = {334},
  series = {Lectures in Mathematics {{ETH Zürich}}},
  publisher = {{Birkhäuser}},
  year = {2008},
  author = {Ambrosio, Luigi and Gigli, Nicola and Savaré, Giuseppe},
  note = {OCLC: 254181287}
}

@article{andrade2005apollonian,
  title={Apollonian Networks: Simultaneously Scale-Free, Small World, Euclidean, Space Filling, and with Matching Graphs},
  author={Andrade Jr, Jos{\'e} S and Herrmann, Hans J and Andrade, Roberto FS and Da Silva, Luciano R},
  journal={Physical review letters},
  volume={94},
  number={1},
  pages={018702},
  year={2005},
  publisher={APS}
}

@article{erdos59a,
  author = {Erd\"{o}s, P. and R\'{e}nyi, A.},
  journal = {Publicationes Mathematicae Debrecen},
  pages = 290,
  title = {On Random Graphs I},
  volume = 6,
  year = 1959
}

@article{HOLLAND1983109,
title = {Stochastic blockmodels: First steps},
journal = {Social Networks},
volume = {5},
number = {2},
pages = {109-137},
year = {1983},
issn = {0378-8733},
doi = {https://doi.org/10.1016/0378-8733(83)90021-7},
author = {Paul W. Holland and Kathryn Blackmond Laskey and Samuel Leinhardt}
}

@article{watts_collective_1998,
  author = {Watts, Duncan J. and Strogatz, Steven H.},
  doi = {10.1038/30918},
  journal = {Nature},
  number = 6684,
  pages = {440--442},
  title = {Collective dynamics of ‘small-world’ networks},
  volume = 393,
  year = 1998
}

@misc{kingma2017adammethodstochasticoptimization,
      title={Adam: A Method for Stochastic Optimization}, 
      author={Diederik P. Kingma and Jimmy Ba},
      year={2017},
      eprint={1412.6980},
      archivePrefix={arXiv},
      primaryClass={cs.LG},
      url={https://arxiv.org/abs/1412.6980}, 
}

@misc{wang2025closerlooktimesteps,
      title={A Closer Look at Time Steps is Worthy of Triple Speed-Up for Diffusion Model Training}, 
      author={Kai Wang and Mingjia Shi and Yukun Zhou and Zekai Li and Zhihang Yuan and Yuzhang Shang and Xiaojiang Peng and Hanwang Zhang and Yang You},
      year={2025},
      eprint={2405.17403},
      archivePrefix={arXiv},
      primaryClass={cs.LG},
      url={https://arxiv.org/abs/2405.17403}, 
}
\bibliographystyle{icml2026}


\newpage
\appendix
\onecolumn

\section{Background}

In this section, we provide some background on the various areas of mathematics and machine learning we touch in this paper, to provide the needed context on some of the tools we leverage in this work.

\subsection{Finite-State Markov Chains}

Let $\mathcal{X}=\{x_1,\dots,x_N\}$ be a finite state space and let $K\in\mathbb{R}^{N\times N}$ be a transition matrix.
We say that $K$ is \emph{row-stochastic} if
\[
K_{ij}\ge 0 \ \ \forall i,j,
\qquad\text{and}\qquad
\sum_{j=1}^N K_{ij}=1 \ \ \forall i,
\]
so that each row of $K$ defines a probability distribution over next states.
A vector $\pi\in\mathbb{R}^N$ is a \emph{probability distribution} if $\pi_i\ge 0$ and $\sum_i \pi_i=1$.
It is called a \emph{stationary distribution} for $K$ if it satisfies the invariance relation
\[
\pi^\top K = \pi^\top,
\]
i.e.\ if $\pi$ is a left eigenvector of $K$ associated with eigenvalue $1$.

The matrix $K$ is said to be \emph{irreducible} if for every pair of states $(i,j)$ there exists an integer $m\ge 1$ such that
\[
(K^m)_{ij}>0.
\]
Equivalently, $K$ is irreducible if the directed graph associated with $K$ is strongly connected, where one places a directed edge
$i\to j$ whenever $K_{ij}>0$ \citep[Thm.\ 8.4.4]{golub13}.

The following result, known as the Perron-Frobenius Theorem, is the central result to the theory of irreducible matrices.

\begin{theorem}{Perron-Frobenius, \citep[Chapter 8]{golub13}}
    Let $A \in \mathbb{R}^{n \times n}$ be an $n \times n$ matrix with nonnegative entries, and suppose there exist $N$ such that $A^N$ has all strictly positive entries. Let $\lambda_A := \max \left\{ |\lambda_1|, |\lambda_2|, \ldots, |\lambda_n|\right\}$ be the \textit{spectral radius} of $A$, where $\lambda_i$ are the (complex) eigenvalues of $A$. Then the following statements hold:
    \begin{enumerate}
        \item $\lambda_A$ is an eigenvalue of $A$;
        \item $\lambda_A$ is a simple eigenvalue for $A$;
        \item there exists an eigenvector $v$ for $\lambda_A$ with strictly positive entries;
        \item v and scalar multiples are the only eigenvectors of $A$ with all entries strictly positive.
    \end{enumerate}
\end{theorem}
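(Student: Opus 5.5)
We prove the four statements through the Collatz--Wielandt variational characterization, working throughout with the primitivity assumption that $A^N>0$ entrywise. Let $\Delta=\{x\in\mathbb{R}^n: x\ge 0,\ \sum_i x_i=1\}$. For $x\in\Delta$ set $r(x):=\min_{i:\,x_i>0}(Ax)_i/x_i$, and let $r^*:=\sup_{x\in\Delta} r(x)$.

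The function $r$ is only upper semicontinuous on $\Delta$. To find a maximizer, restrict it to the compact set $A^N\Delta$ (after normalization). Every vector there is strictly positive, so $r$ is continuous on that set. Since $A^N$ commutes with $A$ and preserves entrywise inequalities, $r(A^Nx)\ge r(x)$. Hence the supremum over $\Delta$ equals the maximum over this compact set, and it is attained at some $z>0$.

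Next we show that $z$ is an eigenvector. By definition $Az-r^*z\ge 0$. Suppose this vector were nonzero. Then $A^N(Az-r^*z)>0$, so $w:=A^Nz>0$ satisfies $Aw>r^*w$ strictly in every coordinate. This gives $r(w)>r^*$, a contradiction. So $Az=r^*z$ with $z>0$, and $r^*>0$ because $A^N\neq 0$.

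To identify $r^*$ with $\lambda_A$, take any eigenpair $Av=\lambda v$ with $v\in\mathbb{C}^n$. The triangle inequality gives $|\lambda|\,|v|\le A|v|$ entrywise, so after normalization $|\lambda|\le r(|v|)\le r^*$. Therefore $r^*=\lambda_A$, which proves items 1 and 3.

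For item 2 we split into geometric and algebraic simplicity.

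\emph{Geometric simplicity.} Let $u$ be a real eigenvector for $\lambda_A$ that is not a multiple of $z$ (real and imaginary parts can be treated separately). Choose $t$ so that $z-tu\ge 0$ has a zero coordinate but is not identically zero. This vector is again an eigenvector, so $z-tu=\lambda_A^{-N}A^N(z-tu)>0$, contradicting the zero coordinate.

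\emph{Algebraic simplicity.} This is the step I expect to be the main obstacle. Apply everything above to $A^\top$, which is also primitive and has the same spectral radius. This yields a left eigenvector $y>0$ with $y^\top A=\lambda_Ay^\top$. If there were a Jordan chain, i.e. some $u$ with $(A-\lambda_A I)u=z$, pairing with $y$ would give $0=y^\top(A-\lambda_AI)u=y^\top z>0$, which is impossible. Together with geometric simplicity, this shows $\lambda_A$ is an algebraically simple root.

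Finally, item 4 follows from the left eigenvector as well. If $Av=\mu v$ with $v>0$, then $\mu\, y^\top v=y^\top Av=\lambda_A\, y^\top v$. Since $y^\top v>0$, we get $\mu=\lambda_A$, and geometric simplicity forces $v$ to be a multiple of $z$.
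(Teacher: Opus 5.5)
Your proof is correct. It does not follow a proof in the paper, because there is none: the paper quotes this theorem as background with a pointer to Golub--Van Loan. What you give is the standard Collatz--Wielandt argument, and it serves as a self-contained replacement for the citation. Each step checks out:
\begin{itemize}
\item $r(A^Nx)\ge r(x)$ holds because $A(A^Nx)=A^N(Ax)\ge r(x)\,A^Nx$.
\item The strict-improvement argument forces $Az=r^*z$.
\item The entrywise bound $|\lambda|\,|v|\le A|v|$ gives $r^*=\lambda_A$.
\item The $z-tu$ trick gives a one-dimensional eigenspace. Replace $u$ by $-u$ if needed so that $t=\min_{u_i>0}z_i/u_i$ is defined.
\item The positive left eigenvector $y$ rules out a Jordan chain and settles item 4. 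Say explicitly that algebraic multiplicity $>1$ together with $\ker(A-\lambda_AI)=\mathrm{span}(z)$ produces $u$ with $(A-\lambda_AI)u=z$ after rescaling.
\end{itemize}

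One simplification: $r$ is upper semicontinuous on the compact set $\Delta$, so it already attains its maximum there. Positivity of the maximizer then follows from $z=\lambda_A^{-N}A^Nz$, so passing to $A^N\Delta$ is optional.

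The real payoff of your route over the bare citation is robustness. The paper immediately applies the theorem to an irreducible stochastic $K$ to obtain a unique positive $\pi$. But irreducible does not imply primitive (e.g.\ $K=\begin{pmatrix}0&1\\1&0\end{pmatrix}$), so the statement as quoted does not literally cover that use. Your proof only uses that $A^N$ is entrywise positive and commutes with $A$. Replace $A^N$ by $(I+A)^{n-1}$, which is entrywise positive for every irreducible nonnegative $A$, and use $(1+\lambda_A)^{n-1}$ in place of $\lambda_A^N$. The same steps then go through and give items 1--4 for irreducible matrices, which is exactly what the paper needs.
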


By the Perron-Frobenius theorem, any irreducible stochastic matrix admits a unique stationary distribution $\pi$ with strictly positive entries, i.e.\ $\pi_i>0$ for all $i$.

Finally, we say that $K$ is \emph{reversible} with respect to a probability distribution $\pi$ if the \emph{detailed balance equations} hold:
\[
\pi_i K_{ij} = \pi_j K_{ji}
\qquad \forall i,j.
\]
In this case, $\pi$ is necessarily stationary. Indeed, summing the detailed balance relation over $i$ yields
\[
\sum_{i=1}^N \pi_i K_{ij} = \sum_{i=1}^N \pi_j K_{ji} = \pi_j \sum_{i=1}^N K_{ji} = \pi_j,
\]
which is exactly $\pi^\top K = \pi^\top$.

\subsection{Optimal transport and Gradient Flows in $\mathbb{R}^d$}

We briefly recall some of the optimal transport concepts that motivate our approach; we refer to
\citet{ambrosioGradientFlowsMetric2008,santambrogio2015optimal} for comprehensive treatments.

\begin{definition}
    
Let $\mu,\nu$ be probability measures on $\mathbb{R}^d$ with finite second moments, i.e.\ $\mu,\nu\in\mathcal{P}_2(\mathbb{R}^d)$.
The \emph{quadratic optimal transport cost} between $\mu$ and $\nu$ is defined by
\begin{equation}
\label{eq:w2_static}
W_2(\mu,\nu)^2
\;:=\;
\inf_{\gamma\in\Pi(\mu,\nu)}\int_{\mathbb{R}^d\times\mathbb{R}^d}\|x-y\|_2^2\,\mathrm{d}\gamma(x,y),
\end{equation}
where $\Pi(\mu,\nu)$ denotes the set of couplings (transport plans) with marginals $\mu$ and $\nu$.
\end{definition}

The value $W_2$ is the \emph{2-Wasserstein distance}, which metrizes weak convergence of measures together with convergence of second moments, and equips $\mathcal{P}_2(\mathbb{R}^d)$ with a rich geometric structure \citep{ambrosioGradientFlowsMetric2008,santambrogio2015optimal}.

A key insight is that the $W_2$ distance admits a dynamical characterization in terms of minimal kinetic energy.
Given $\mu_0,\mu_1\in\mathcal{P}_2(\mathbb{R}^d)$, the Benamou--Brenier formula \citep{benamoubrenierOG2000,santambrogio2015optimal} holds:
\begin{equation}
\label{eq:bb}
W_2(\mu_0,\mu_1)^2
\;=\;
\inf_{\mu_t,v_t}
\left\{
\int_0^1 \int_{\mathbb{R}^d} \|v_t(x)\|_2^2 \,\mathrm{d}\mu_t(x)\,\mathrm{d}t
\right\},
\end{equation}
where the infimum is taken over time-dependent measures $(\mu_t)_{t\in[0,1]}\subset\mathcal{P}_2(\mathbb{R}^d)$ and velocity fields $(v_t)_{t\in[0,1]}$
satisfying the continuity equation (in the sense of distributions):
\begin{equation}
\label{eq:ce_continuous}
\partial_t \mu_t + \nabla\cdot(\mu_t v_t)=0,
\qquad
\mu_{t=0}=\mu_0,\;\;\mu_{t=1}=\mu_1.
\end{equation}
This formulation reveals that the Wasserstein distance can be interpreted as the minimal transport energy required to deform $\mu_0$ into $\mu_1$
along admissible mass-preserving flows.

A second fundamental insight is that several diffusion-type PDEs can be seen as \emph{gradient flows of functionals on probability space}
with respect to the $W_2$ geometry \citep{ambrosioGradientFlowsMetric2008}.
This is typically formalized using the Jordan--Kinderlehrer--Otto (JKO) minimizing-movement scheme \citep{jkovariationalformulation,santambrogio2015optimal}:
given an initial condition $\mu_0\in\mathcal{P}_2(\mathbb{R}^d)$ and a time step $\tau>0$, define the discrete-time sequence
\begin{equation}
\label{eq:jko_continuous}
\mu_{t+1}
\;:=\;
\arg \min_{\mu\in\mathcal{P}_2(\mathbb{R}^d)}
\left\{
\mathcal{F}(\mu)
+
\frac{1}{2\tau}W_2(\mu,\mu_t)^2
\right\}.
\end{equation}
Under suitable conditions on $\mathcal{F}$, the piecewise-constant interpolation of $\{\mu_t\}_t$ converges as $\tau\to 0$ to a continuous curve $(\mu_t)_{t\ge 0}$
solving the Wasserstein gradient flow associated with $\mathcal{F}$.
In particular, for the \emph{free energy} functional:
\[
\mathcal{F}(\mu)=\int_{\mathbb{R}^d} V(x)\,\mathrm{d}\mu(x) + \beta\int_{\mathbb{R}^d}\rho(x)\log\rho(x)\,\mathrm{d}x
\qquad (\mu=\rho\,\mathrm{d}x,\;\beta>0),
\]
the corresponding gradient flow recovers the Fokker--Planck equation.
This variational viewpoint is central to modern approaches that reinterpret diffusions as steepest descent in probability space, and will serve as the blueprint for our discrete counterpart based on the metric $W_K$.

\subsection{Discrete diffusion models via continuous-time Markov chains (CTMCs)}

We recall the standard setup of CTMC-based discrete diffusion models, following
\citet{campbell2022continuoustimeframeworkdiscrete,lou2024discretediffusionmodelingestimating,austin2023structureddenoisingdiffusionmodels}.
The goal of generative modelling is to sample from an unknown data distribution $p_{\mathrm{data}}$ on a finite space $\mathcal{X}$
(in the common application of sequence generation for language/molecule domains, sequences of length $d$ over an alphabet of size $n$). Discrete diffusion constructs a \emph{forward noising process}
that pushes $p_{\mathrm{data}}$ toward a simple base distribution $\pi$ then learns to reverse this process to generate new samples.

A time-inhomogeneous CTMC is specified by a (generally time-dependent) rate matrix $Q_t$ on $\mathcal{X}$.
If $p_t$ denotes the law of the process at time $t$, then it evolves by the forward Kolmogorov equation:

\begin{equation}
\label{eq:ctmc_forward_kolmogorov}
\frac{\mathrm{d}}{\mathrm{d}t}p_t \;=\; p_t Q_t,
\qquad p_{0}=p_{\mathrm{data}}.
\end{equation}

A common choice for this process is the heat equation on a graph, where $Q_t = \beta_tQ$ is a scalar multiple of a fixed state transition matrix $Q$; see \citet{campbell2022continuoustimeframeworkdiscrete,lou2024discretediffusionmodelingestimating}. The scalar parameter $\beta_t$ is typically fixed to have an exponential schedule, and ensure fast mixing of the states that prevents the learned model to exhibit mode collapse and guarantees other stability properties \citep{song2019generative}.

The key fact enabling training discrete diffusion models is that the time reversal of a CTMC is again a CTMC, with reverse rates determined by $p_t$ and given y:

\begin{equation}
\label{eq:ctmc_reverse_rate}
Q_t^{\mathrm{rev}}(x,y)
\;=\;
Q_t(y,x)\,\frac{p_t(y)}{p_t(x)}.
\end{equation}

Here a time reversal of a process with temporal law $p_t$ in the time horizon $[0,T]$ means a process with a law $q_t$ such that $p_t = q_{T-t}$.
Thus learning the reverse process amounts to estimating \emph{probability ratios} $p_t(y)/p_t(x)$ (or their logarithms)
along edges where $Q_t(y,x)>0$.
In the discrete diffusion literature these log-ratios (or closely related parametrizations) play the role of a \emph{discrete score}
driving the reverse drift \citep{campbell2022continuoustimeframeworkdiscrete,austin2023structureddenoisingdiffusionmodels,lou2024discretediffusionmodelingestimating}.

Directly estimating the \emph{joint} ratios in \eqref{eq:ctmc_reverse_rate} is statistically and computationally prohibitive when
$\mathcal{X}$ is huge (e.g.\ $|\mathcal{X}|=n^d$ for sequences).
Discrete diffusion avoids this difficulty by leveraging that the \emph{forward} corruption process is known and can be sampled:
one draws $x_0\sim p_{\mathrm{data}}$, then simulates $x_t\sim p(\cdot \mid x_0,t)$ using the CTMC transition mechanism
. 

Since the forward process is known, learning is instead formulated in terms of \emph{conditional} objectives that avoid estimating the joint ratios $p_t(y)/p_t(x)$ explicitly.
Concretely, one samples $x_0\sim p_{\mathrm{data}}$ and then generates $x_t\sim p(\cdot\mid x_0,t)$ by simulating the forward CTMC.
Using Bayes' rule, the reverse rates in \eqref{eq:ctmc_reverse_rate} can be expressed in terms of conditional distributions of the form
$p(x_0 \mid x_t,t)$ or, equivalently, local conditional ratios on the edges of the underlying graph.

In practice, models are trained to approximate these conditional objects from $(x_t,t)$ using denoising-style losses.
A common choice, following \citet{campbell2022continuoustimeframeworkdiscrete}, is a conditional cross-entropy objective
\begin{equation}
\label{eq:conditional_ce}
\mathcal{L}_{\mathrm{CE}}
\;=\;
\mathbb{E}_{t,x_0,x_t}
\big[
-\log p_\theta(x_0 \mid x_t,t)
\big],
\end{equation}
where $x_0\sim p_{\mathrm{data}}$ and $x_t\sim p(\cdot\mid x_0,t)$.
In Hamming-graph settings, this loss is typically factorized token-wise, yielding a tractable objective even when $|\mathcal{X}|=n^d$.

An alternative but closely related formulation, proposed by \citet{lou2024discretediffusionmodelingestimating}, directly targets the local reverse-time drift through a \emph{score-entropy} loss.
In this case, the model predicts a discrete score $s_\theta(x_t,t)$ on graph edges, and training minimizes an objective of the form
\begin{equation}
\label{eq:score_entropy}
\mathcal{L}_{\mathrm{SE}}
\;=\;
\mathbb{E}_{t,x_t}
\Big[
\sum_{y:\,Q_t(x_t,y)>0}
Q_t(x_t,y)
\big(
s_\theta(x_t,y,t)
-
\log \tfrac{p_t(y)}{p_t(x_t)}
\big)^2
\Big],
\end{equation}
where the target log-ratios are computable analytically from the known forward noising process.
Both objectives yield consistent estimators of the reverse-time dynamics and allow sampling by numerically simulating the learned reverse CTMC.

\section{Geometry of Discrete Probability Spaces}
\label{math_setting_appendix}

In this section, we give extensive details on the mathematical setup we leverage throughout the paper to describe the geometry of discrete spaces. This framework was introduced in the mathematics literature around 2011 for continuous-time Markov chains \citep{maas2011gradientflowsentropyfinite}, reaction-diffusion systems \citep{Mielke:2011} and discretizations of Fokker-Planck equations \citep{chow_discrete_fp}.




    
    We denote by 

    \begin{equation}
        \mathcal{P}(\mathcal{X}) := \left\{ \rho: \mathcal{X} \rightarrow \mathbb{R}  \;\;\rho(x) \geq 0 \; \; \forall x \in \mathcal{X}, \;\ \sum_{x \in \mathcal{X}}\pi(x)\rho(x) = 1 \right\}
    \end{equation}

    the set of all \textit{probability densities} on $\mathcal{X}$, and by $\mathcal{P}_{*}(\mathcal{X}) \subseteq \mathcal{P}(\mathcal{X})$ the subset of densities that are strictly positive. We choose to work with densities rather than probabilities to be more faithful to the original setting of \citet{maas2011gradientflowsentropyfinite}: in practice, this is a simple rescaling with no difficulties, as $\pi$ is often simple and computing its value on a sequence $x$ is trivial. Throughout the paper, we use $\rho$ to denote densities and $p$ to denote probabilities. We define the \textit{relative entropy} of $\rho \in \mathcal{P}(\mathcal{X})$ with respect to $\pi$ as $\mathcal{H}: \mathcal{P}(\mathcal{X}) \rightarrow \mathbb{R}$ as:

    \begin{equation*}
        \mathcal{H}(\rho) := \sum_{x \in \mathcal{X}} \rho(x)\log \rho(x) \pi(x).
    \end{equation*}

    It turns out that in order to define a suitable metric on the space $\mathcal{P}(\mathcal{X})$, one needs first to choose a particular way of ``mixing'' probabilities of individual points along edges of the graph via a function $m: \mathbb{R}_+ \times \mathbb{R}_+ \rightarrow \mathbb{R}_+$. The choice that ends up working best for our purposes is the \textit{logarithmic mean}:

    \begin{equation}
        m(s,t):= \int_0^1 s^\alpha t^{1-\alpha} \mathrm{d} \alpha =  \left\{\begin{matrix}
 \frac{s-t}{\log(s) - \log(t)}& s \neq t\\
 s &  s = t\\
\end{matrix}\right.
    \end{equation}

    The choice of $m$ directly influences the functional for which we are able to identify the heat equation as the gradient flow of, and the logarithmic mean is linked to the relative entropy functional $\mathcal{H}$. Other choices that lead to similar theories with different functionals are the geometric mean $\sqrt{st}$ and, more generally, $s^\alpha t^\alpha$ for $\alpha > 0$. We refer the reader to \citet{maas2011gradientflowsentropyfinite, erbar2012gradientflowstructuresdiscrete, erbar2018geometrygeodesicsdiscreteoptimal} for a detailed breakdown of the conditions needed on $m$ and the resulting theories.

    We use $m$ to define, for $\rho \in \mathcal{P}(\mathcal{X})$ and $x,y \in \mathcal{X}$, the density $\rho$ mixed along the edge $(x,y)$:

    \begin{equation}
        m_{\rho}(x,y) := m(\rho(x), \rho(y)).
    \end{equation}

    We are ready to define the discrete analogous of the Wasserstein distance in the discrete space. This metric is constructed via a discrete analogous of the Benamou-Brenier \citep{benamoubrenierOG2000} formula:

    \begin{definition}[Discrete metric]
        For $\rho_0, \rho_1 \in \mathcal{P}(\mathcal{X})$ we define:

        $$
            W_K(\rho_0, \rho_1)^2 := \inf_{\rho, \psi}\left\{ \frac{1}{2} \int_0^1 \sum_{x,y \in \mathcal{X}}(\psi_t(x)-\psi_t(y))^2 K(x,y)m_{\rho_t}(x,y)\pi(x) \mathrm{d} t\right\}
        $$

        where the infimum is taken over all piecewise $C^1$ curves $\rho: [0,1] \rightarrow \mathcal{P}(\mathcal{X})$ and measurable $\psi : [0,1] \rightarrow \mathbb{R}^\mathcal{X}$ satisfying a.e. time $t$ the \textit{discrete continuity equation}:

        \begin{equation}
                \left\{\begin{matrix}
                \dot{\rho_t}(x) + \sum_{y \in \mathcal{X}}(\psi_t(y)-\psi_t(x))K(x,y)m_{\rho_t}(x,y)=0 &
                 \\
                \rho(0)=\rho_0, \;\; \rho(1)=\rho_1.
                \end{matrix}\right.
        \end{equation}
        
    \end{definition}

    We highlight similarities and differences between the metric $W_K$ and the classical Wasserstein metric in the continuous case: they both can be interpreted as a ``transport cost'' between $\rho_0$ and $\rho_1$. However, one substantial difference is highlighted by the fact that the discrete distance of transporting a unit mass from $x$ to $y$ depends on the mass already present at $x$ and $y$. As a result, some \textit{non-local} behavior is introduced compared to the continuous case. The discrete metric is essentially constructed as a discrete analogue of the Benamou-Brenier formula in the continuous case.

    We cite the following:
    \begin{theorem}[\citep{maas2011gradientflowsentropyfinite}, Thm 1.1]
    $W_K$ defines a metric on $\mathcal{P}(\mathcal{X})$.
    \end{theorem}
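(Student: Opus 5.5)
The plan is to verify the metric axioms one at a time, working directly with the action functional
\[
\mathcal{A}(\rho,\psi):=\frac12\int_0^1\sum_{x,y}(\psi_t(x)-\psi_t(y))^2K(x,y)m_{\rho_t}(x,y)\pi(x)\,\mathrm{d}t
\]
over admissible pairs solving \eqref{discrete_continuity}.

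\textbf{Elementary axioms.} $W_K(\rho,\rho)=0$ holds by taking the constant curve with $\psi\equiv0$.

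Symmetry follows from time reversal. If $(\rho_t,\psi_t)$ is admissible from $\rho_0$ to $\rho_1$, then $(\rho_{1-t},-\psi_{1-t})$ is admissible from $\rho_1$ to $\rho_0$, and it has the same action.

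For the triangle inequality, I would first note the standard reparametrization fact: $W_K(\rho_0,\rho_1)$ equals the infimum of $\int_0^1\sqrt{\mathcal{A}_t}\,\mathrm{d}t$, where $\mathcal{A}_t$ is the instantaneous action. This holds because the action is $2$-homogeneous in $\psi$, and rescaling time scales $\psi$ linearly. Given near-optimal curves $\rho_0\to\rho_1$ and $\rho_1\to\rho_2$, I would concatenate them on $[0,\tfrac12]$ and $[\tfrac12,1]$ and reparametrize to constant speed. This gives $W_K(\rho_0,\rho_2)\le W_K(\rho_0,\rho_1)+W_K(\rho_1,\rho_2)$; piecewise $C^1$ regularity is preserved.

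\textbf{Nondegeneracy.} Take a test vector $\varphi\in\mathbb{R}^N$. By \eqref{discrete_continuity} and detailed balance,
\[
\frac{\mathrm{d}}{\mathrm{d}t}\sum_x\varphi(x)\rho_t(x)\pi(x)=\frac12\sum_{x,y}\nabla\varphi(x,y)\nabla\psi_t(x,y)K(x,y)m_{\rho_t}(x,y)\pi(x).
\]
By Cauchy--Schwarz, this is at most $\sqrt{\mathcal{A}_t}\,\sqrt{C_\varphi}$. Here $C_\varphi$ is bounded uniformly because $m_{\rho}(x,y)\le\max(\rho(x),\rho(y))\le 1/\min_x\pi(x)$. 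Integrating in time and applying Cauchy--Schwarz once more gives
\[
|\langle\varphi,\rho_1-\rho_0\rangle_\pi|\le C'_\varphi\,W_K(\rho_0,\rho_1).
\]
So $W_K=0$ forces $\rho_0=\rho_1$.

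\textbf{Finiteness (main obstacle).} The remaining step is to show $W_K<\infty$. The difficulty is that $m$ degenerates on the boundary of the simplex, and $\rho_0$ may vanish somewhere. By the triangle inequality it suffices to connect any $\rho_0$ to $\mathbf 1$.

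I would use the curve $\rho_t=(1-\gamma(t))\rho_0+\gamma(t)\mathbf 1$. The logarithmic mean is jointly concave and $1$-homogeneous, so $m_{\rho_t}(x,y)\ge\gamma(t)$. For any strictly positive $\rho$, the weighted Laplacian with weights $K(x,y)m_\rho(x,y)\pi(x)$ is connected by irreducibility. Hence the continuity equation is solvable for every $\dot\rho$ with $\langle\dot\rho,\mathbf 1\rangle_\pi=0$, and the minimal action equals $\dot\rho^\top L_\rho^{+}\dot\rho$. The Loewner bound $L_{\rho_t}\succeq\gamma(t)L_{\mathbf 1}$ then gives
\[
\mathcal{A}_t\le\frac{\gamma'(t)^2\,\|\rho_0-\mathbf 1\|^2}{\gamma(t)\,\lambda_2(L_{\mathbf 1})},
\]
where $\lambda_2(L_{\mathbf 1})>0$ is the spectral gap of $L_{\mathbf 1}$.

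The naive choice $\gamma=t$ produces a divergent $\int\mathrm{d}t/t$. I would instead take $\gamma(t)=t^2$, for which $\gamma'^2/\gamma=4$ and the action is finite. This reparametrization at the boundary is the step I expect to require the most care, in particular checking measurability of $\psi$ and the piecewise $C^1$ requirement at $t=0$.
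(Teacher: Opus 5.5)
The paper does not prove this statement. It is quoted from Maas (2011, Thm.~1.1) with no argument, so there is no in-paper proof to compare with. Your proposal is a correct, self-contained proof.

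The elementary axioms and the nondegeneracy estimate are fine: you test the continuity equation against $\varphi$, symmetrize with detailed balance, and use $m_\rho(x,y)\le\max(\rho(x),\rho(y))\le 1/\min_x\pi(x)$. Finiteness is the only genuinely nontrivial axiom, and your argument for it works. Monotonicity (or, as you say, concavity) of the logarithmic mean gives $m_{\rho_t}\ge\gamma(t)$. Hence $L_{\rho_t}\succeq\gamma(t)L_{\mathbf 1}$, with the same kernel for $t>0$. The choice $\gamma(t)=t^2$ is just the constant-speed reparametrization of the straight segment. That segment has finite length because $\sqrt{\mathcal A_t}\lesssim t^{-1/2}$ is integrable, even though the bound $\mathcal A_t\lesssim t^{-1}$ is not.

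Compared with simply citing Maas, your route gives an explicit quantitative statement: $W_K(\rho_0,\mathbf 1)\le 2\|\pi\odot(\rho_0-\mathbf 1)\|_2/\sqrt{\lambda_2}$, where $\lambda_2$ is the smallest nonzero eigenvalue of the symmetric matrix $\mathrm{diag}(\pi)(I-K)$. This yields a diameter bound of order $\lambda_2^{-1/2}$. The finiteness step also uses nothing about $m$ beyond monotonicity and $m(a,a)=a$.

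A few details to tidy:

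\emph{Normalization of the action.} Take $L_\rho$ to be the Laplacian with edge weights $K(x,y)m_\rho(x,y)\pi(x)$, which is the matrix $M(\rho)$ of the paper. Then the continuity equation reads $L_\rho\psi=\pi\odot\dot\rho$, so the action is $(\pi\odot\dot\rho)^\top L_\rho^{+}(\pi\odot\dot\rho)$ rather than $\dot\rho^\top L_\rho^{+}\dot\rho$. Since $0<\pi\le 1$ entrywise, your displayed bound on $\mathcal A_t$ still holds.

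\emph{Triangle inequality.} You can avoid the general constant-speed reparametrization, which is delicate when $\psi$ is merely measurable and the curves must stay piecewise $C^1$. Run near-optimal curves with actions $A_1,A_2$ affinely on $[0,\lambda]$ and $[\lambda,1]$. This gives action $A_1/\lambda+A_2/(1-\lambda)$, and taking $\lambda=\sqrt{A_1}/(\sqrt{A_1}+\sqrt{A_2})$ gives $(\sqrt{A_1}+\sqrt{A_2})^2$.

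\emph{The endpoint $t=0$.} The point you flag is harmless. The curve $\rho_t=(1-t^2)\rho_0+t^2\mathbf 1$ is $C^\infty$ on $[0,1]$. The potential $\psi_t:=L_{\rho_t}^{+}(\pi\odot\dot\rho_t)$ is continuous on $(0,1]$, hence measurable. Since $\dot\rho_0=0$, the choice $\psi_0=0$ satisfies the continuity equation at $t=0$ as well.
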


    We have now endowed $(\mathcal{P}(\mathcal{X}), W_K)$ with a metric space structure. In order to describe gradient flows on such a space, we will need to go further and endow its subspace $\mathcal{P}_* (\mathcal{X})$ with a Riemannian metric structure, similar to what has been done (at least in a formal sense) by \citep{Otto31012001} for the continuous case. To do so, we start with some definitions of this ``discrete geometry'' we will leverage throughout the paper.

    \begin{definition}[Discrete gradient and divergence]
    Let $\psi: \mathcal{X} \rightarrow \mathbb{R}$ (we will alternatively just write $\psi \in \mathbb{R}^\mathcal{X}$). We define the \textit{discrete gradient of $\psi$} as $\nabla \psi \in \mathbb{R}^{\mathcal{X} \times \mathcal{X}}$ given by:

    \begin{equation}
        \nabla \psi(x,y) := \psi(x) - \psi(y).
    \end{equation}
    
    Notice that, as a special case, using this definition to write the score $\nabla \log \rho(x)$ we can draw a direct connection between the usual continuous definition of the score and the log of the ratios usually learned in discrete generative modelling. Similarly, for $\Psi \in \mathbb{R}^{\mathcal{X} \times \mathcal{X}}$ we define the \textit{discrete divergence} $\nabla \cdot \Psi \in \mathbb{R}^{\mathcal{X}}$ as:

    \begin{equation}
        (\nabla \cdot \Psi)(x) := \frac{1}{2} \sum_{y \in \mathcal{X}} K(x,y)(\Psi(y,x) - \Psi(x,y)).
    \end{equation}
        
    \end{definition}

    Let's also define the following scalar products for $\phi, \psi \in \mathbb{R}^{\mathcal{X}}$ and $\Phi, \Psi \in \mathbb{R}^{\mathcal{X} \times \mathcal{X}}$:

    \begin{equation}
        \left< \phi, \psi \right>_\pi := \sum_{x\in \mathcal{X}} \phi(x)\psi(x)\pi(x),
    \end{equation}

    \begin{equation}
        \left< \Phi, \Psi \right>_\pi = \frac{1}{2} \sum_{x,y \in \mathcal{X}} \Phi(x,y)\Psi(x,y)K(x,y)\pi(x).
    \end{equation}

    Notice that the last one is a proper scalar product because of the reversibility hypothesis on the chain. Finally, for $\rho \in \mathcal{P}(\mathcal{X})$ define:

    \begin{equation}
        \left< \Phi, \Psi \right>_{\rho} := \frac{1}{2} \sum_{x,y \in \mathcal{X}} \Phi(x,y)\Psi(x,y)K(x,y)m_{\rho}(x,y)\pi(x),
    \end{equation}

    together with the associated norm:

    \begin{equation}
        \left\| \Phi \right\|_\rho^2 := \left< \Phi, \Phi \right>_{\rho}.
    \end{equation}

    One can easily check that the following \textit{``integration by parts''} formula holds:

    \begin{proposition}[Discrete integration by parts]
        Let $\psi \in \mathbb{R}^{\mathcal{X}}$ and $\Psi \in \mathbb{R}^{\mathcal{X} \times \mathcal{X}}$. Then:
        \begin{equation}
            \left<\nabla \psi, \Psi \right>_\pi = -\left<\psi, \nabla \cdot \Psi \right>_\pi.
        \end{equation}
    \end{proposition}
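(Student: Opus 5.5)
The identity $\langle \nabla \psi, \Psi \rangle_\pi = -\langle \psi, \nabla \cdot \Psi \rangle_\pi$ follows by expanding the left-hand side, splitting the difference $\psi(x)-\psi(y)$ into two sums, relabelling $x \leftrightarrow y$ in the second one, and using detailed balance $\pi(x)K(x,y)=\pi(y)K(y,x)$. No earlier result is needed beyond reversibility of $K$.

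By definition,
\begin{equation*}
\langle \nabla\psi,\Psi\rangle_\pi = \frac12\sum_{x,y}\psi(x)\Psi(x,y)K(x,y)\pi(x) - \frac12\sum_{x,y}\psi(y)\Psi(x,y)K(x,y)\pi(x).
\end{equation*}
In the second sum I swap the names of the summation variables $x$ and $y$, which gives $\frac12\sum_{x,y}\psi(x)\Psi(y,x)K(y,x)\pi(y)$. Detailed balance then replaces $K(y,x)\pi(y)$ by $K(x,y)\pi(x)$. Collecting terms,
\begin{equation*}
\langle \nabla\psi,\Psi\rangle_\pi = \sum_x \psi(x)\pi(x)\cdot\frac12\sum_y K(x,y)\bigl(\Psi(x,y)-\Psi(y,x)\bigr).
\end{equation*}

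By the definition of the discrete divergence, the inner sum is exactly $-(\nabla\cdot\Psi)(x)$. The right-hand side is therefore $-\langle\psi,\nabla\cdot\Psi\rangle_\pi$, which is the claim.

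There is no real obstacle. The only point requiring care is the reversibility step, since for a non-reversible $K$ the relabelled term would carry $K(y,x)\pi(y)$, which cannot be merged with the first sum. Beyond that, one only has to check that the sign and the factor $\tfrac12$ match the definition of $\nabla\cdot$.
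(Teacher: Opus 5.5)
Your proposal is correct and is the direct expansion from the definitions that the paper has in mind; its proof simply says the identity is ``straightforward from the definitions.'' The paper does not write out the reversibility (detailed balance) step, and you correctly make it explicit as the one ingredient needed to merge the relabelled sum.
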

    
    \begin{proof}
        Straightforward from the definitions.
    \end{proof}
    
    Furthermore, let's denote the element-wise product of two elements $M, N \in \mathbb{R}^{\mathcal{X} \times \mathcal{X}}$ as:

    \begin{equation}
        (M \odot N)(x,y) := M(x,y)N(x,y).
    \end{equation}

    \begin{remark}
        We can reformulate the discrete continuity equation \eqref{discrete_continuity} as:
        \begin{equation}
            \dot{\rho_t} + \nabla \cdot (m_{\rho_t}(x,y) \odot \nabla \psi_t) = 0,
        \end{equation}
        which highlights the connection between this discrete equation and its classical continuous counterpart.
    \end{remark}

    In order to construct a Riemannian metric on $(\mathcal{P}_* (\mathcal{X}), W_K)$, we will use as the tangent space to a point $\rho \in \mathcal{P}_* (\mathcal{X})$ the collection of discrete gradients:

    \begin{equation}
        T_\rho := \left\{ \nabla \psi \in \mathbb{R}^{\mathcal{X} \times \mathcal{X}} : \psi \in \mathbb{R}^\mathcal{X} \right\}.
    \end{equation}

    We can do this in light of the following result on the structure of discrete tangent vectors along curves, which we state informally without the mathematical machinery built by \cite{maas2011gradientflowsentropyfinite}. The reader can consult the original source for a formal derivation.

    \begin{proposition}[\cite{maas2011gradientflowsentropyfinite}, Prop. 3.26, informal]
    \label{riemannian_key}
    Let $\rho_t$ be a smooth curve in $\mathcal{P}_* (\mathcal{X})$ defined for $t \in [0,1]$, and take a fixed $t$. Then there exists a unique discrete gradient $\nabla \psi_t$ such that \eqref{discrete_continuity} holds.
    \end{proposition}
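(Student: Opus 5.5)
The plan is to fix $t$, write $\rho := \rho_t \in \mathcal{P}_*(\mathcal{X})$, and read the discrete continuity equation \eqref{discrete_continuity} as a linear equation $\mathcal{L}_\rho \psi = \dot\rho_t$ for the unknown $\psi \in \mathbb{R}^{\mathcal{X}}$. Here
\[
(\mathcal{L}_\rho\psi)(x) := \sum_{y\in\mathcal{X}} (\psi(x)-\psi(y))K(x,y)m_\rho(x,y) = -\nabla\cdot(m_\rho\odot\nabla\psi)(x),
\]
a weighted graph Laplacian; it is the matrix $-A(\rho)$ from the geodesic section. The statement then reduces to two linear-algebra facts about $\mathcal{L}_\rho$. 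First, $\dot\rho_t$ lies in the range of $\mathcal{L}_\rho$, which gives existence. Second, the kernel of $\mathcal{L}_\rho$ consists exactly of the constant functions, which gives uniqueness of the gradient.

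\emph{Step 1: symmetry.} I would show that $\mathcal{L}_\rho$ is symmetric with respect to $\langle\cdot,\cdot\rangle_\pi$ and that
\[
\langle \phi, \mathcal{L}_\rho \psi\rangle_\pi = \langle \nabla\phi,\nabla\psi\rangle_\rho .
\]
This follows from the discrete integration by parts formula, together with two symmetries: $K(x,y)\pi(x)$ is symmetric by reversibility (detailed balance), and $m_\rho(x,y)$ is symmetric because the logarithmic mean is.

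\emph{Step 2: the kernel.} If $\mathcal{L}_\rho\psi=0$, then
\[
0 = \langle\psi,\mathcal{L}_\rho\psi\rangle_\pi = \tfrac12\sum_{x,y}(\psi(x)-\psi(y))^2K(x,y)m_\rho(x,y)\pi(x).
\]
Since $\rho$ is strictly positive, $m_\rho(x,y)>0$ for all $x,y$, and $\pi>0$ by Perron--Frobenius. Every term vanishes, so $\psi(x)=\psi(y)$ whenever $K(x,y)>0$. Irreducibility means the graph of $K$ is connected, so $\psi$ is constant. Conversely, constants clearly lie in the kernel.

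\emph{Step 3: existence.} By symmetry, $\mathrm{Ran}(\mathcal{L}_\rho) = \ker(\mathcal{L}_\rho)^{\perp_\pi} = \{v : \langle v,\mathbf{1}\rangle_\pi = 0\}$. Since $\rho_t\in\mathcal{P}(\mathcal{X})$ for all $t$, we have $\sum_x \pi(x)\rho_t(x)\equiv 1$. Differentiating in $t$ gives $\langle\dot\rho_t,\mathbf{1}\rangle_\pi = 0$, so $\dot\rho_t\in\mathrm{Ran}(\mathcal{L}_\rho)$ and a solution $\psi_t$ exists.

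\emph{Step 4: uniqueness.} Any two solutions differ by an element of the kernel, hence by a constant, and therefore have the same discrete gradient $\nabla\psi_t$ as an element of $\mathbb{R}^{\mathcal{X}\times\mathcal{X}}$.

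As a by-product, one can fix the gauge $\langle\psi_t,\mathbf{1}\rangle_\pi=0$ and write $\psi_t = \mathcal{L}_{\rho_t}^{+}\dot\rho_t$, where $\mathcal{L}^{+}$ is the pseudo-inverse. This depends smoothly on $t$, because $\rho\mapsto m_\rho$ is smooth on the interior and the restricted operator is invertible there; so the regularity required in the definition of $W_K$ is automatic.

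The main obstacle I expect is Step 2, the kernel identification, and in particular its dependence on strict positivity. On the boundary of the simplex, $m_\rho(x,y)$ can vanish on some edges, for instance when $\rho(x)=0$. The effective graph can then disconnect, the kernel grows, and both existence and uniqueness can fail. This is exactly why the statement is restricted to $\mathcal{P}_*(\mathcal{X})$. Beyond that, only the sign convention in \eqref{discrete_continuity} needs care, to match $\mathcal{L}_\rho$ with the divergence form given in the preceding remark.
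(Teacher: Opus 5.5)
Your proposal is correct. The paper gives no proof of its own and simply defers to \cite{maas2011gradientflowsentropyfinite}, whose argument is this same linear-algebra one: $\psi\mapsto-\nabla\cdot(m_\rho\odot\nabla\psi)$ is symmetric for $\langle\cdot,\cdot\rangle_\pi$ with kernel the constants (using $\rho>0$ and irreducibility), so it induces a bijection from $T_\rho$ onto $\{s:\langle s,\mathbf{1}\rangle_\pi=0\}$, which contains $\dot\rho_t$; your sign bookkeeping ($\mathcal{L}_\rho=-A(\rho)$) and smooth dependence on $t$ via the gauge-fixed inverse also agree with the paper's conventions.
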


    Essentially, this results justifies the intuitive notion that, if we define smooth curves in $\psx$ via the continuity equation \eqref{discrete_continuity}, one can identify velocities (and by extension, any tangent vector) of such curves via the discrete gradients $\nabla \psi$ of real-valued functions on $\mathcal{X}$.

    In light of this, we can endow $T_\rho$ with the inner product $\left< \cdot, \cdot \right>_\rho$ given by:

    \begin{equation}
        \left< \nabla \phi, \nabla \psi \right>_\rho = \frac{1}{2} \sum_{x,y \in \mathcal{X}} (\phi(x) - \phi(y))(\psi(x) - \psi(y))K(x,y)m_{\rho}(x,y)\pi(x).
    \end{equation}

    We can finally state the following, which concludes the construction of the Riemannian structure on discrete Markov chains we will leverage.

    \begin{theorem}[\citep{maas2011gradientflowsentropyfinite}, Thm. 3.29]
        The space $(\mathcal{P}_* (\mathcal{X}), W_K)$ is a Riemannian manifold. The Riemannian product at a point $\rho$ is given by \eqref{riemann_product}, and it induces the distance $W_K$ as a Riemannian distance on $\mathcal{P}_* (\mathcal{X})$.
    \end{theorem}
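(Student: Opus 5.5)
The plan is to build the Riemannian structure by hand in three steps, then identify the induced distance with $W_K$. The steps are: give $\psx$ its smooth structure; show that the discrete continuity equation \eqref{discrete_continuity} identifies $T_\rho$ with the ordinary tangent space, as in Proposition \ref{riemannian_key}; check that the pulled-back inner product \eqref{riemann_product} depends smoothly on $\rho$.

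\emph{Smooth structure.} The set $\psx$ is the relatively open subset $\{\rho>0\}$ of the affine hyperplane $\{\rho\in\mathbb{R}^N : \langle \rho,\mathbf{1}\rangle_\pi=1\}$. It is therefore a smooth manifold of dimension $N-1$, and its tangent space at every point is the fixed space $\mathcal{T}_0:=\{s\in\mathbb{R}^N:\langle s,\mathbf{1}\rangle_\pi=0\}$.

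\emph{Identification of tangent spaces.} For $\rho\in\psx$ consider the linear map
\begin{equation*}
\mathcal{K}_\rho:\psi\mapsto -\nabla\cdot(m_\rho\odot\nabla\psi).
\end{equation*}
By the discrete integration by parts formula,
\begin{equation*}
\langle \phi,\mathcal{K}_\rho\psi\rangle_\pi=\langle\nabla\phi,\nabla\psi\rangle_\rho .
\end{equation*}
So $\mathcal{K}_\rho$ is symmetric and positive semidefinite on $(\mathbb{R}^N,\langle\cdot,\cdot\rangle_\pi)$, and its kernel consists of those $\psi$ with $\psi(x)=\psi(y)$ whenever $K(x,y)m_\rho(x,y)>0$.

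On $\psx$ the logarithmic mean is strictly positive. Irreducibility of $K$ then forces the kernel to be exactly the constants. Taking $\phi=\mathbf{1}$ shows the range lies in $\mathcal{T}_0$, and a dimension count shows the range equals $\mathcal{T}_0$. Since $\nabla\psi$ determines $\psi$ up to constants, $\nabla\psi\mapsto\mathcal{K}_\rho\psi$ is a linear isomorphism $T_\rho\to\mathcal{T}_0$. This is precisely the unique correspondence between $\dot\rho$ and $\nabla\psi$ in \eqref{discrete_continuity}.

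\emph{Smoothness of the metric.} Transport \eqref{riemann_product} to $\mathcal{T}_0$ by setting
\begin{equation*}
g_\rho(s_1,s_2):=\langle s_1,\mathcal{K}_\rho^{-1}s_2\rangle_\pi ,
\end{equation*}
with the inverse taken on $\mathcal{T}_0$ modulo constants. The function $m$ is smooth (indeed real-analytic) on $(0,\infty)^2$, so $\rho\mapsto\mathcal{K}_\rho$ is smooth. Inversion is smooth on the open set of invertible operators, so $g$ is a smooth, positive definite metric tensor, and $\psx$ is a Riemannian manifold.

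\emph{Distance.} For a $C^1$ curve in $\psx$, the action in \eqref{discrete_metric} with the (unique) admissible $\psi_t$ equals $\int_0^1 g_{\rho_t}(\dot\rho_t,\dot\rho_t)\,\mathrm{d}t$. The standard reparametrization argument (Cauchy–Schwarz, with equality at constant speed) gives that the infimum of energy over curves on $[0,1]$ is the square of the infimum of length. Hence the Riemannian distance squared is the infimum of the $W_K$-action over curves staying inside $\psx$.

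The main obstacle is showing that allowing competitor curves in \eqref{discrete_metric} to touch $\partial\px$ does not lower the infimum. Given an admissible curve $(\rho_t,\psi_t)$ between interior endpoints, I would first rewrite the action in flux variables, $\frac{1}{2}\sum_{x,y} \frac{V(x,y)^2}{m_{\rho}(x,y)}K(x,y)\pi(x)$ with $V=m_\rho\odot\nabla\psi$. The continuity equation is linear in $(\rho,V)$, and the integrand $(V,m)\mapsto V^2/m$ is jointly convex and 1-homogeneous, while $m$ is concave. Consequently the action is jointly convex in $(\rho,V)$.

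Next I would mix with the constant curve: $\rho^\varepsilon_t=(1-\varepsilon)\rho_t+\varepsilon\mathbf{1}$, $V^\varepsilon=(1-\varepsilon)V$. This curve stays in $\psx$ and has action at most $(1-\varepsilon)$ times the original. Its endpoints are $\rho^\varepsilon_0,\rho^\varepsilon_1$, which lie within $O(\varepsilon)$ of $\rho_0,\rho_1$ in a compact subset of $\psx$. There the metric $g$ is uniformly comparable to the Euclidean one, so short straight segments reconnect the endpoints at $O(\varepsilon)$ length cost. Letting $\varepsilon\to0$ shows the two infima coincide.

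A second subtlety I would check is whether, for curves that touch the boundary, restricting velocities to gradient form is harmless. Away from the interior the map $\mathcal{K}_\rho$ may degenerate, but restricting to gradients can only increase the admissible action, and the approximation above already produces interior curves, where the gradient form is forced.
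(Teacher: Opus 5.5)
The paper gives no proof of this statement; it imports it from Maas (2011), so your proposal has to stand on its own. Most of it does. The identity $\langle\phi,\mathcal{K}_\rho\psi\rangle_\pi=\langle\nabla\phi,\nabla\psi\rangle_\rho$, together with irreducibility and $m_\rho>0$, correctly gives $\ker\mathcal{K}_\rho=\mathrm{span}\{\mathbf{1}\}$ and $\mathcal{K}_\rho(\mathbb{R}^N)=\mathcal{T}_0$. Smoothness of the logarithmic mean gives a smooth positive definite $g$. Reducing the distance identification to the boundary issue, and then mixing linearly with an interior density, is the right strategy; the paper uses the same device in the interior step of Theorem~\ref{main_result}. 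Your flux bound is also correct: with $\mathcal{A}(\rho,V):=\frac12\sum_{x,y}V(x,y)^2K(x,y)\pi(x)/m_\rho(x,y)$, concavity gives $m_{\rho^\varepsilon}\ge(1-\varepsilon)m_\rho+\varepsilon$, hence $\mathcal{A}(\rho^\varepsilon,V^\varepsilon)\le(1-\varepsilon)\mathcal{A}(\rho,V)$.

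The gap is in passing from this flux bound to a bound on the Riemannian energy of $\rho^\varepsilon$. The flux $V^\varepsilon=(1-\varepsilon)\,m_\rho\odot\nabla\psi$ is in general not of the form $m_{\rho^\varepsilon}\odot\nabla\tilde\psi$, because the ratio $(1-\varepsilon)m_\rho/m_{\rho^\varepsilon}$ varies from edge to edge. So $(\rho^\varepsilon,V^\varepsilon)$ is not an admissible competitor in \eqref{discrete_metric}. The energy of $\rho^\varepsilon$ is $\int_0^1\|\nabla\tilde\psi_t\|^2_{\rho^\varepsilon_t}\,\mathrm{d}t$, where $\nabla\tilde\psi_t$ is the unique gradient with $\dot\rho^\varepsilon_t=\mathcal{K}_{\rho^\varepsilon_t}\tilde\psi_t$. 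You need this to be at most $\mathcal{A}(\rho^\varepsilon,V^\varepsilon)$.

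Your last paragraph argues the opposite direction (restricting to gradients can only \emph{increase} the action). Uniqueness of $\nabla\tilde\psi_t$ in the interior does not supply the inequality either. The missing ingredient is that gradient fluxes are minimal at interior points. If $\rho\in\mathcal{P}_*(\mathcal{X})$ and $\nabla\cdot(m_\rho\odot U)=\nabla\cdot(m_\rho\odot\nabla\tilde\psi)$, discrete integration by parts gives $\langle\nabla\tilde\psi,U-\nabla\tilde\psi\rangle_\rho=-\langle\tilde\psi,\nabla\cdot(m_\rho\odot(U-\nabla\tilde\psi))\rangle_\pi=0$. Hence $\|U\|_\rho^2=\|\nabla\tilde\psi\|_\rho^2+\|U-\nabla\tilde\psi\|_\rho^2\ge\|\nabla\tilde\psi\|_\rho^2$.

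Apply this at each $t$ with $\rho=\rho^\varepsilon_t$ and $U=V^\varepsilon_t/m_{\rho^\varepsilon_t}$; this is well defined since $\rho^\varepsilon_t\ge\varepsilon$, and $\|U\|^2_{\rho^\varepsilon_t}$ is exactly the flux integrand. You get energy at most $(1-\varepsilon)\mathcal{A}(\rho,V)$. The rest of your argument then goes through as written: Cauchy--Schwarz, short straight segments inside a compact subset of $\mathcal{P}_*(\mathcal{X})$, and $\varepsilon\to0$.
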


    We also cite the following result from \citet{maas2011gradientflowsentropyfinite, Erbar_2012}, which provides a discrete version of the geodesic equations for the metric $W_K$, at least locally.

    \begin{proposition}[\citep{maas2011gradientflowsentropyfinite}, Thm. 3.31, \citep{Erbar_2012} Prop. 3.4]
    \label{equations_geodesics}
            Let $\rho \in \mathcal{P}_*(\mathcal{X})$ and let $\psi : \mathcal{X} \rightarrow \mathbb{R}$. On a small enough time interval around $0$, the unique constant speed geodesic with $\rho_0 = \rho$ and initial velocity $\nabla \psi_0 = \nabla \psi$ satisfies:
    
            \begin{equation}
                 \left\{\begin{matrix}
                \partial_t \rho_t(x) + \sum_{y \in \mathcal{X}} (\psi_t(y) - \psi_t(x))K(x,y) m_{\rho_t}(x,y)=0
                \\ 
                \partial_t \psi_t(x) + \frac{1}{2} \sum_{y \in \mathcal{X}} (\psi_t(y) - \psi_t(x))^2 K(x,y) \partial_1 m_{\rho_t}(x,y) = 0 .
                \end{matrix}\right.
            \end{equation}
        
    \end{proposition}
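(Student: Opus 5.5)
The plan is to derive the system as the Hamiltonian (cotangent) form of the geodesic equation for the Riemannian metric of Theorem \ref{riemannian_manifold}. Then I get local existence and uniqueness from standard ODE theory in the interior $\mathcal{P}_*(\mathcal{X})$.

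\textbf{Step 1: set up the Hamiltonian.} By Proposition \ref{riemannian_key}, tangent vectors at $\rho\in\mathcal{P}_*(\mathcal{X})$ are represented by $\nabla\psi$, with $\psi$ determined up to an additive constant. The corresponding vertical velocity is
\[
\dot\rho = \mathcal{K}(\rho)\psi := -\nabla\cdot(m_\rho\odot\nabla\psi).
\]
By the integration by parts formula, $\mathcal{K}(\rho)$ is symmetric and nonnegative with respect to $\langle\cdot,\cdot\rangle_\pi$. Its kernel is the constants, and its range is $\{\dot\rho:\langle\dot\rho,\mathbf 1\rangle_\pi=0\}$, i.e.\ the tangent space of the simplex; this uses irreducibility. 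The metric \eqref{riemann_product} is therefore
\[
g_\rho(\dot\rho,\dot\rho)=\langle \psi,\mathcal{K}(\rho)\psi\rangle_\pi .
\]
So $\psi$ (modulo constants) plays the role of the cotangent variable, and $\mathcal{K}(\rho)$ is the cometric. The Hamiltonian is
\[
\mathcal{E}(\rho,\psi)=\tfrac12\|\nabla\psi\|_\rho^2=\tfrac14\sum_{x,y}(\psi(x)-\psi(y))^2K(x,y)m_\rho(x,y)\pi(x).
\]
It is invariant under $\psi\mapsto\psi+c$, which makes the gauge consistent.

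\textbf{Step 2: write Hamilton's equations.} I invoke the standard fact that constant-speed geodesics of a smooth Riemannian metric are projections of integral curves of the Hamiltonian flow of the cometric. With duality taken in $\langle\cdot,\cdot\rangle_\pi$, the equations read
\[
\pi(x)\,\partial_t\rho_t(x)=\partial_{\psi(x)}\mathcal{E},\qquad \pi(x)\,\partial_t\psi_t(x)=-\partial_{\rho(x)}\mathcal{E}.
\]
Equivalently, I could obtain them as Euler--Lagrange equations of the action in \eqref{discrete_metric}, with $\psi$ as the Lagrange multiplier of the continuity constraint.

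\textbf{Step 3: compute the two partial derivatives.}

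For the $\psi$-derivative, I use the symmetry $K(x,y)\pi(x)=K(y,x)\pi(y)$ (reversibility) and $m_\rho(x,y)=m_\rho(y,x)$. This gives
\[
\pi(x)^{-1}\partial_{\psi(x)}\mathcal{E}=\sum_y(\psi(x)-\psi(y))K(x,y)m_\rho(x,y),
\]
which is exactly the discrete continuity equation.

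For the $\rho$-derivative, $\rho(z)$ appears both in the terms with $x=z$ and in those with $y=z$. I handle the second group in three moves:
\begin{itemize}
\item apply reversibility to swap the roles of $x$ and $z$;
\item use $\partial_2 m(a,b)=\partial_1 m(b,a)$, which follows from the symmetry of $m$;
\item note that this turns the second group into a copy of the first.
\end{itemize}
The two groups then combine to
\[
\pi(z)^{-1}\partial_{\rho(z)}\mathcal{E}=\tfrac12\sum_y(\psi(y)-\psi(z))^2K(z,y)\,\partial_1m_\rho(z,y).
\]
Together with the minus sign, this yields the second equation.

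\textbf{Step 4: local well-posedness.} The logarithmic mean is smooth on $(0,\infty)^2$, so the right-hand side is smooth near $(\rho,\psi)$ with $\rho\in\mathcal{P}_*(\mathcal{X})$. Picard--Lindelöf gives a unique solution on a short interval, and by continuity this solution stays in $\mathcal{P}_*(\mathcal{X})$. The mass $\langle\rho_t,\mathbf 1\rangle_\pi$ is conserved, since the divergence has zero $\pi$-mean. Uniqueness of the geodesic with given initial data is the standard Riemannian statement.

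The main obstacle is Steps 1--2: justifying rigorously that the cotangent identification by $\psi$ modulo constants is the correct musical isomorphism for Maas's manifold structure. This requires checking that $\mathcal{K}(\rho)$ is invertible from functions modulo constants onto mean-zero densities, so that the Hamiltonian formalism applies on the quotient. The derivative computations in Step 3 are routine once that bookkeeping, including the factors of $\pi$ in the duality, is fixed.
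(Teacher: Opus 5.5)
Your proposal is correct and matches the argument behind this result. The paper gives no proof of its own and only cites Maas (Thm.~3.31) and Erbar--Maas (Prop.~3.4), where the geodesic equations are obtained as Hamilton's equations for $\tfrac12\|\nabla\psi\|_\rho^2$ with cometric $\mathcal{K}(\rho)$ and $\pi$-weighted duality, exactly as you do; your signs and factors check out, including the reversibility step that merges the two groups of $\rho(z)$-terms. The obstacle you flag is already settled by your Step 1: symmetry of $\mathcal{K}(\rho)$, together with its kernel being the constants (from irreducibility and $m_\rho>0$ on $\mathcal{P}_*(\mathcal{X})$), gives the bijection from functions modulo constants onto $\pi$-mean-zero vectors.
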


    The main result we will leverage will concern the gradient flow of a heat-like equation in $\mathcal{P}_* (\mathcal{X})$.

    Let $\rho: (0, \infty) \rightarrow (\mathcal{P}_*(\mathcal{X}), W_K)$ be a smooth curve, and let $D_t\rho$ denote the tangent vector to this curve at the point $\rho_t$. Furthermore, let $F: \mathcal{P}_* (\mathcal{X}) \rightarrow \mathbb{R}$ be a smooth functional, and let $\mathrm{grad}\, F$ denote its gradient.

    The main results concerning gradient flow trajectories in this space is the following:
    \begin{theorem}[\citep{maas2011gradientflowsentropyfinite}, Thm. 4.7]

    Let $\rho_0 \in \mathcal{P} (\mathcal{X})$, $t \geq 0$, and define the \textit{heat equation} as
    \begin{equation}
        \rho_t := e^{t(K-I)}\rho_0.
    \end{equation}
    
    Let $\mathcal{H}: \mathcal{P}_* (\mathcal{X}) \rightarrow \mathbb{R}$ be the relative entropy functional \eqref{eq:rel_entropy}. Then the gradient flow equation:
    \begin{equation}
        D_t\rho = -\mathrm{grad}\, \mathcal{H}(\rho_t)
    \end{equation}
    holds for all $t > 0$.

    \end{theorem}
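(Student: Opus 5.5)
The plan is to compute $\mathrm{grad}\,\mathcal{H}$ explicitly in the Riemannian structure of Theorem~\ref{riemannian_manifold}. Then I will check that the velocity field of the heat flow $\rho_t = e^{t(K-I)}\rho_0$, read through the continuity equation \eqref{discrete_continuity}, is exactly $-\mathrm{grad}\,\mathcal{H}(\rho_t)$. The algebraic heart of the argument is the defining property of the logarithmic mean \eqref{logmean}:
\[
m_\rho(x,y)\,\bigl(\log\rho(x)-\log\rho(y)\bigr) = \rho(x)-\rho(y).
\]
This identity holds also when $\rho(x)=\rho(y)$, since then both sides vanish.

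\textbf{Step 1 (the flow lives in $\psx$).} For $t>0$ I first check that $\rho_t\in\psx$, so that $D_t\rho$ and $\mathrm{grad}$ make sense.
\begin{itemize}
\item Write $e^{t(K-I)} = e^{-t}\sum_{n\ge 0} t^n K^n/n!$. Irreducibility gives, for each pair $(x,y)$, some $n$ with $(K^n)_{xy}>0$, so every entry of $e^{t(K-I)}$ is strictly positive.
\item Since $\rho_0\ge 0$ and $\rho_0\neq 0$, it follows that $\rho_t(x)>0$ for all $x$.
\item Mass is conserved: by reversibility, $\sum_x \pi(x)\bigl((K-I)\rho\bigr)(x) = \sum_y \rho(y)\bigl(\sum_x \pi(x)K(x,y) - \pi(y)\bigr) = 0$, using $\pi^\top K=\pi^\top$. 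Hence $\langle\rho_t,\mathbf 1\rangle_\pi = 1$ for all $t$.
\end{itemize}
This is the one point where the hypothesis $\rho_0\in\px$ (rather than $\psx$) forces the restriction to $t>0$.

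\textbf{Step 2 (identify $\mathrm{grad}\,\mathcal{H}$).} Take $\rho\in\psx$ and a tangent vector $\nabla\psi\in T_\rho$. By Proposition~\ref{riemannian_key}, $\nabla\psi$ corresponds to the vertical derivative $\dot\rho = -\nabla\cdot(m_\rho\odot\nabla\psi)$, with the sign conventions of the remark reformulating \eqref{discrete_continuity}.
\begin{itemize}
\item The differential of $\mathcal{H}$ in that direction is $\langle \log\rho + 1, \dot\rho\rangle_\pi$.
\item The constant $1$ drops out because $\langle\dot\rho,\mathbf 1\rangle_\pi=0$.
\item Discrete integration by parts then gives $\langle \log\rho,\dot\rho\rangle_\pi = \langle \nabla\log\rho, m_\rho\odot\nabla\psi\rangle_\pi = \langle\nabla\log\rho,\nabla\psi\rangle_\rho$.
\end{itemize}
Since $\nabla\log\rho\in T_\rho$, the Riesz representation in $(T_\rho,\langle\cdot,\cdot\rangle_\rho)$ yields $\mathrm{grad}\,\mathcal{H}(\rho)=\nabla\log\rho$.

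\textbf{Step 3 (match the flows).} The gradient flow equation $D_t\rho=-\mathrm{grad}\,\mathcal{H}(\rho_t)$ means that the horizontal velocity is $\nabla\psi_t$ with $\psi_t = -\log\rho_t$. Plugging this into \eqref{discrete_continuity} and using the logarithmic-mean identity gives
\[
\dot\rho_t(x) = -\sum_y \bigl(\log\rho_t(x)-\log\rho_t(y)\bigr)K(x,y)m_{\rho_t}(x,y) = \sum_y K(x,y)\rho_t(y)-\rho_t(x),
\]
using $\sum_y K(x,y)=1$. The right-hand side is $\bigl((K-I)\rho_t\bigr)(x)$, which is exactly \eqref{heat_eq}. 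By the uniqueness of the horizontal lift in Proposition~\ref{riemannian_key}, the tangent vector of the heat flow at each $t>0$ is $-\nabla\log\rho_t = -\mathrm{grad}\,\mathcal{H}(\rho_t)$, which proves the claim.

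\textbf{Main obstacle.} I do not expect the algebra to be the difficulty. The delicate point is bookkeeping: keeping the factor $\tfrac12$ and the orientation of $\nabla$ versus $\nabla\cdot$ consistent in the integration by parts so that the signs in Steps 2 and 3 agree. For this I would verify the integration-by-parts identity directly, using reversibility $\pi(x)K(x,y)=\pi(y)K(y,x)$ and the symmetry $m_\rho(x,y)=m_\rho(y,x)$.
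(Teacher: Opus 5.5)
Your proof is correct and follows essentially the paper's route. Step 2 is the same integration-by-parts computation as the paper's Lemma~\ref{gflow_entropy} giving $\mathrm{grad}\,\mathcal{H}(\rho)=\nabla\log\rho$, and the remaining ingredients are exactly what the paper delegates to the cited Thm.~4.7 of Maas: strict positivity of $\rho_t$ for $t>0$, and the logarithmic-mean identity that turns the continuity equation with $\psi_t=-\log\rho_t$ into $\dot\rho_t=(K-I)\rho_t$. One small nit: the mass-conservation computation in Step 1 uses only stationarity $\pi^\top K=\pi^\top$, not reversibility.
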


    Thanks to this structure, we are in a position to study functionals on the space $\mathcal{P}_* (\mathcal{X})$ and ask what the gradient flow of a certain functional is. We now turn to do that and show how to construct a loss function with these tools.

\section{Proofs}
\label{proofs_appendix}

\subsection{Divergence of heat equation velocity in the $W_2$ space}

\begin{lemma}
    Let $\mathcal{X} := \{a,b\}$ be a 2-point space, and let $K$ be a Markov kernel on $\mathcal{X}$ with $K(a,b) = p$, $K(b,a) = q$, $0 < p, q < 1$, which has invariant probability $\pi = \left[ \frac{q}{p+q}, \frac{p}{p+q}\right]$. Let $p \neq \pi$ be any non-invariant probability on $\mathcal{X}$, and let $\rho$ be the unique density associated to the probability $p$, which satisfies $p = \rho \odot \pi$. Let $\rho_t$ denote the solution of the discrete heat equation \eqref{heat_eq} with $\rho_0 = \rho$, and $p_t = \rho_t \odot \pi$ the associated probability curve. Then the metric derivative of the $W_2$ distance along $p_t$ diverges, that is:

    $$\lvert  \dot{p_t}\rvert  := \underset{s \rightarrow t}{\lim \sup} \frac{W_2(p_t, p_s)}{\lvert t - s\rvert } = +\infty.$$

\end{lemma}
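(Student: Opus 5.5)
The plan is to compute $W_2(p_t,p_s)$ explicitly on the two-point space and compare it with $|t-s|$. We regard $\mathcal{X}=\{a,b\}$ as two distinct points of a metric space (e.g.\ of $\mathbb{R}^n$) at distance $d:=|a-b|>0$. Any two probabilities $\mu,\nu$ on $\{a,b\}$ differ only by the amount of mass $|\mu(a)-\nu(a)|$ that must be moved from one point to the other. The optimal coupling keeps the common mass $\min(\mu(a),\nu(a))$ at $a$ and $\min(\mu(b),\nu(b))$ at $b$, and transports the remaining mass $|\mu(a)-\nu(a)|$ across the distance $d$. Every coupling must move at least this mass, so
\[
W_2(\mu,\nu)^2 = d^2\,|\mu(a)-\nu(a)|, \qquad\text{i.e.}\qquad W_2(\mu,\nu)=d\,|\mu(a)-\nu(a)|^{1/2}.
\]
The key feature is the square root. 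It is what makes $W_2$ behave like a ``snowflaked'' metric on finite spaces.

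Next we solve the heat equation \eqref{heat_eq} explicitly. The generator $K-I$ has eigenvalues $0$ and $-(p+q)$. Writing $p_t(a)=\rho_t(a)\pi(a)$ and using the forward equation $\dot p_t = p_t(K-I)$, we obtain
\[
p_t(a)=\pi(a)+\bigl(p_0(a)-\pi(a)\bigr)e^{-(p+q)t}.
\]
Since $p_0\neq\pi$, the coefficient $c:=p_0(a)-\pi(a)$ is nonzero. Hence $t\mapsto p_t(a)$ is smooth and its derivative $\dot p_t(a)=-(p+q)c\,e^{-(p+q)t}$ never vanishes. Consequently, for each fixed $t\ge 0$,
\[
|p_s(a)-p_t(a)| = |\dot p_t(a)|\,|s-t| + o(|s-t|) \quad\text{as } s\to t.
\]

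Combining the two formulas gives
\[
\frac{W_2(p_t,p_s)}{|t-s|} = d\,\frac{\bigl(|\dot p_t(a)|\,|s-t|+o(|s-t|)\bigr)^{1/2}}{|t-s|} \sim d\,|\dot p_t(a)|^{1/2}\,|t-s|^{-1/2}\longrightarrow +\infty,
\]
so the $\limsup$ is $+\infty$. At $t=0$ we take one-sided limits $s\downarrow 0$; the argument is the same.

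The only point requiring care is the exact two-point $W_2$ formula, namely that no coupling moves less mass than $|\mu(a)-\nu(a)|$. This follows from the marginal constraint: $\gamma(a,b)-\gamma(b,a)=\mu(a)-\nu(a)$, so the off-diagonal mass $\gamma(a,b)+\gamma(b,a)$ is at least $|\mu(a)-\nu(a)|$. Nondegeneracy of the derivative is then automatic from $p\neq\pi$ and $p+q>0$.
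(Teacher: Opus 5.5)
Your argument is correct and is essentially the paper's proof. Both combine the explicit two-point formula $W_2(\mu,\nu)^2 = d^2\,|\mu(a)-\nu(a)|$ (the paper's $\sqrt{2|\alpha-\beta|}$ in its $\beta$-parametrization corresponds to $d=2$) with the exponential relaxation $p_t(a)-\pi(a) = (p_0(a)-\pi(a))e^{-(p+q)t}$, so the square root makes the difference quotient blow up like $|t-s|^{-1/2}$; you are more careful than the paper in justifying the $W_2$ formula via the marginal constraint and in treating $t=0$ one-sidedly, and your one implicit step, passing from $\dot\rho_t=(K-I)\rho_t$ to $\dot p_t = p_t(K-I)$, holds by detailed balance, which is automatic on a two-point space.
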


\begin{proof}
    Let's call $\mathcal{Q}$ the space of probabilities on $\mathcal{X}$. Each measure in this space is of the kind $p_\beta = \frac{1}{2}((1-\beta)\delta_a + (1+\beta)\delta_b)$ for a parameter $\beta \in [-1,1]$ and $\delta_x$ a Dirac delta measure. The corresponding density $\rho_\beta$ w.r.t. $\pi$ is given by:

    \begin{equation}
        \rho_\beta(a) = \frac{p+q}{q}\frac{1-\beta}{2} \; \; , \; \; \rho_\beta(b) = \frac{p+q}{p}\frac{1+\beta}{2}.
    \end{equation}

    Now, let $H(t) = e^{t(K-I)}$ be the heat semigroup, and let $\rho_{\beta_t} = H(t)\rho_\beta = u(t)$ be the associated heat equation starting from a certain density $\rho_\beta$ with $\beta \in [-1,1] \setminus \{ \frac{p-q}{p+q} \}$. One easily sees that the parameter $\beta_t$ depends on $\beta$ and $t$ by:

    \begin{equation}
        \beta_t := \frac{p-q}{p+q}(1-e^{-(p+q)t}) + \beta e^{-(p+q)t}.
    \end{equation}
    
    Now, a direct calculation shows that $W_2(\rho_\alpha , \rho_\beta) = \sqrt{2\lvert \alpha - \beta\rvert }$ because of the classical discrete definition of $W_2$ \citep{santambrogio2015optimal}. Then we see that:

    \begin{equation}
        \begin{split}
            \lvert \dot{u}(t)\rvert  :&= \underset{s \rightarrow t}{\limsup}{\frac{W_2(u(t),u(s))}{\lvert t-s \rvert}} = \sqrt{2} \underset{s \rightarrow t}{\limsup}{\frac{\sqrt{\lvert \beta_t - \beta_s \rvert}}{\lvert t-s \rvert}} = \\ & = \sqrt{2 \left| \beta - \frac{p-q}{p+q} \right|} \underset{s \rightarrow t}{\limsup}{\frac{\sqrt{\lvert e^{-(p+q)t} - e^{-(p+q)s} \rvert}}{\lvert t-s \rvert}} = + \infty.
        \end{split}
    \end{equation}
    
\end{proof}

\subsection{Gradient Flow Characterization of Useful Functionals}

\begin{lemma}[Gradient of potential energy functionals in the $W_K$ metric]
        Define the potential energy functional as 
        $\mathcal{V}: \mathcal{P}_* (\mathcal{X}) \rightarrow \mathbb{R}$ :
        
        \begin{equation}
            \mathcal{V}(\rho) := \sum_{x\in \mathcal{X}} V(x)\rho(x)\pi(x),
        \end{equation}

        where $V$ is a function from $\mathcal{X}$ onto $\mathbb{R}$. Then $\mathcal{V}$ is differentiable and its gradient at a point $\rho \in \mathcal{P}_* (\mathcal{X})$ is given by:

        \begin{equation}
            \mathrm{grad}\, \mathcal{V}(\rho) = \nabla V .
        \end{equation}
        
\end{lemma}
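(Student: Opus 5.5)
The gradient is characterized by the defining identity of a Riemannian gradient: for every smooth curve $\rho_t$ in $\psx$ through $\rho_0=\rho$, we need
\[
\frac{\mathrm{d}}{\mathrm{d}t}\Big|_{t=0}\mathcal{V}(\rho_t)=\left<\mathrm{grad}\,\mathcal{V}(\rho),\nabla\psi_0\right>_\rho,
\]
where $\nabla\psi_0\in T_\rho$ is the tangent vector of the curve, i.e.\ the unique discrete gradient associated to $\dot\rho_0$ by the continuity equation \eqref{discrete_continuity} (Proposition \ref{riemannian_key}). The plan is to compute the left side explicitly and rewrite it as a $\left<\cdot,\cdot\right>_\rho$ pairing with $\nabla V$. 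Uniqueness of the gradient then follows from nondegeneracy of $\left<\cdot,\cdot\right>_\rho$ on $T_\rho$ for $\rho\in\psx$. Differentiability is immediate, since $\mathcal{V}$ is linear in $\rho$, hence smooth on the open set $\psx$ of the affine simplex.

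\textbf{Step 1.} Compute $\frac{\mathrm{d}}{\mathrm{d}t}\mathcal{V}(\rho_t)=\sum_x V(x)\dot\rho_t(x)\pi(x)=\left<V,\dot\rho_t\right>_\pi$.

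\textbf{Step 2.} Substitute the continuity equation in divergence form, $\dot\rho_t=-\nabla\cdot(m_{\rho_t}\odot\nabla\psi_t)$, as in the Remark following the integration by parts proposition. This gives
\[
\left<V,\dot\rho_t\right>_\pi=-\left<V,\nabla\cdot(m_{\rho_t}\odot\nabla\psi_t)\right>_\pi.
\]

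\textbf{Step 3.} Apply the discrete integration by parts formula to get $\left<\nabla V, m_{\rho_t}\odot\nabla\psi_t\right>_\pi$. By the definitions of the two scalar products, this equals $\left<\nabla V,\nabla\psi_t\right>_{\rho_t}$. Evaluating at $t=0$ yields the identity with $\mathrm{grad}\,\mathcal{V}(\rho)=\nabla V$, which lies in $T_\rho$ because it is a discrete gradient.

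The main obstacle I anticipate is consistency of conventions rather than analysis. The divergence is defined with the factor $\tfrac12$ and the antisymmetrization $\Psi(y,x)-\Psi(x,y)$. I therefore need to check that $\nabla\cdot(m_\rho\odot\nabla\psi)(x)=\sum_y(\psi(y)-\psi(x))K(x,y)m_\rho(x,y)$, so that the sign matches \eqref{discrete_continuity}. This uses the symmetry $m_\rho(x,y)=m_\rho(y,x)$ and antisymmetry of $\nabla\psi$.

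I would also verify the integration by parts directly rather than trust it. Expanding $\left<\nabla V,\Psi\right>_\pi$ and swapping $x\leftrightarrow y$ in half of the sum requires reversibility $\pi(x)K(x,y)=\pi(y)K(y,x)$. Once the signs and factors of $\tfrac12$ are confirmed, the argument is complete.
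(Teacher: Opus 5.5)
Your proposal is correct and matches the paper's proof: differentiate $\mathcal{V}(\rho_t)$, substitute the continuity equation in divergence form, integrate by parts, and recognize $\langle \nabla V, \nabla\psi_t\rangle_{\rho_t}$. Your extra checks are also correct and fill in details the paper leaves implicit: the sign and $\tfrac12$ conventions of the divergence (via symmetry of $m_\rho$ and antisymmetry of $\nabla\psi$), the integration-by-parts identity via reversibility, and the nondegeneracy of $\langle\cdot,\cdot\rangle_\rho$ on $T_\rho$.
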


    \begin{proof}
        Differentiability of $\mathcal{V}$ is immediate. Now, fix a curve $\rho_t$ such that $D_t\rho = \nabla \psi_t$. Then:
        \begin{equation}
            \begin{split}
                \frac{\mathrm{d}}{\mathrm{d}t} \mathcal{V}(\rho_t) & = \sum_{x\in{\mathcal{X}}}V(x)\dot{\rho}_t(x)\pi(x) = \\
                & = - \left< V , \nabla \cdot (m_{\rho_t}(x,y) \odot \nabla \psi_t)\right>_\pi = \\
                & = \left< \nabla V , m_{\rho_t}(x,y) \odot \nabla \psi_t \right>_\pi = \\
                & = \left< \nabla V , \nabla\psi_t \right>_{\rho_t},
            \end{split}
        \end{equation}
        which proves the result.
    \end{proof}

    \begin{proposition}[Gradient flow of $W_K^2$]
        \label{gflow_w2}
        Let $\rho_t \in \mathcal{P}_*(\mathcal{X})$, let $U$ be a neighbourhood of $\rho_t$ where $\mathcal{G}$ is smooth. For a point $\rho \in U$, let $v$ be the starting velocity of the unique geodesic starting from $\rho$ and passing through $\rho_t$. The gradient flow of $\mathcal{G}$ is given by:

        \begin{equation}
            \mathrm{grad} \, \mathcal{G}(\rho) = -2v.
        \end{equation}
        
    \end{proposition}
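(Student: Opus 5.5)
Here $\mathcal{G}(\rho) = W_K(\rho,\rho_t)^2$ is the squared distance to the fixed point $\rho_t$. I will treat $v$ as the initial velocity of the constant-speed geodesic $\gamma:[0,1]\to\psx$ with $\gamma(0)=\rho$ and $\gamma(1)=\rho_t$, so that $\|v\|_\rho = W_K(\rho,\rho_t)$. The plan is to reduce the claim to the classical Riemannian identity $\mathrm{grad}\,\tfrac12 d_p^2 = -\exp^{-1}(p)$. By Theorem~\ref{riemannian_manifold}, $\psx$ with the products \eqref{riemann_product} is a smooth Riemannian manifold whose Riemannian distance is $W_K$. So everything can be done with standard tools from Riemannian geometry (e.g.\ \citet{petersen2006riemannian}), with tangent vectors identified with discrete gradients via Proposition~\ref{riemannian_key}.

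\emph{Step 1 (choice of $U$).} Take $U$ to be a totally normal (uniformly normal) neighbourhood of $\rho_t$ whose closure is compactly contained in $\psx$. This is possible because $\rho_t$ lies in the interior of the simplex, and the metric is smooth and non-degenerate there. On such a $U$:
\begin{itemize}
\item every $\rho\in U$ is joined to $\rho_t$ by a unique minimizing geodesic, which stays inside $\psx$;
\item $\exp_\rho^{-1}(\rho_t)$ depends smoothly on $\rho$;
\item $\mathcal{G}(\rho) = \|\exp_\rho^{-1}(\rho_t)\|_\rho^2$ is smooth.
\end{itemize}
This makes $v=\exp_\rho^{-1}(\rho_t)$ well defined and gives the smoothness hypothesis in the statement. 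Locally, the geodesics are those described by Proposition~\ref{equations_geodesics}.

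\emph{Step 2 (first variation).} Fix $\rho\in U$ and a tangent vector $w=\nabla\phi\in T_\rho$, and let $\sigma(s)$ be a smooth curve in $U$ with $\sigma(0)=\rho$ and $D_s\sigma(0)=w$. Let $\gamma_s$ be the constant-speed minimizing geodesic from $\sigma(s)$ to $\rho_t$ on $[0,1]$. Since a constant-speed minimizing geodesic on $[0,1]$ has energy equal to its length squared,
\[
\mathcal{G}(\sigma(s)) = E(\gamma_s) := \int_0^1 \|\dot\gamma_s(r)\|^2_{\gamma_s(r)}\,\mathrm{d}r .
\]
The family $(s,r)\mapsto\gamma_s(r)$ is smooth by Step 1. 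The first variation formula then gives
\[
\frac{\mathrm{d}}{\mathrm{d}s}E(\gamma_s)\Big|_{s=0} = 2\big[\langle J(r),\dot\gamma_0(r)\rangle\big]_{r=0}^{r=1} - 2\int_0^1 \langle J, \nabla_r\dot\gamma_0\rangle\,\mathrm{d}r ,
\]
where $J=\partial_s\gamma_s|_{s=0}$ is the variation field.

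\emph{Step 3 (evaluating the terms).} The integral vanishes because $\gamma_0$ is a geodesic. The boundary term at $r=1$ vanishes because $J(1)=0$, the endpoint $\rho_t$ being fixed. At $r=0$ we have $J(0)=w$ and $\dot\gamma_0(0)=v$. Hence
\[
\frac{\mathrm{d}}{\mathrm{d}s}\mathcal{G}(\sigma(s))\Big|_{s=0} = -2\langle v,w\rangle_\rho \qquad \text{for every } w\in T_\rho .
\]
By the definition of the Riemannian gradient, this means $\mathrm{grad}\,\mathcal{G}(\rho)=-2v$.

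\emph{Main obstacle.} I expect the hard part to be Step 1, not the computation. $\psx$ is not a complete Riemannian manifold, because the metric degenerates at the boundary of the simplex. So Hopf--Rinow is not available, and neither existence of minimizing geodesics nor smooth dependence can be taken for granted globally. I would handle this by working strictly inside a normal neighbourhood $U$ whose closure is compact in $\psx$. There the injectivity radius is bounded below and the usual local theory applies. I would also check that the minimizer between two points of $U$ does not leave $\psx$, by shrinking $U$ below the distance from $\rho_t$ to the boundary.

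Finally, I would note the normalization. If $v$ is instead taken to be the unit-speed initial velocity, as in Theorem~\ref{gradflow_characterization}, the formula picks up the factor $W_K(\rho,\rho_t)$. This is exactly what gives the $\tfrac{1}{2\tau}\cdot(-2)=-\tfrac1\tau$ coefficient there.
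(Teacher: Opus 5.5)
Your proposal is correct and follows essentially the paper's route: write $W_K(\cdot,\rho_t)^2$ as the energy of a smooth family of constant-speed geodesics with one endpoint fixed, then apply the first variation of energy formula. Your execution is cleaner than the paper's write-up, which varies the fixed endpoint $\rho_t$ instead of $\rho$ and mixes $T_\rho$ with $T_{\rho_t}$; you vary the point where the gradient is taken, get the minus sign correctly from the boundary term at $r=0$, and justify smoothness of the geodesic family via a totally normal neighbourhood rather than assuming it.
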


    \begin{proof}

        Recall that we want to show that, choosing a vector $\nabla \psi \in T_{\rho_t} \mathcal{P}_*(\mathcal{X})$ and a curve $\eta(t)$ with $\eta(0) = \rho_t$ and $\dot{\eta}(0) = \nabla \psi$, then:

        \begin{equation}
            \left. \frac{d}{dt}W_K(\rho, \eta(t))^2\right|_{t=0} = \left< \nabla \psi, -2\exp_\rho^{-1}(\rho_t) \right>_{\rho_t}.
        \end{equation}

        Let's now consider such a curve $\eta$ and a smooth map $H(s,t)$ with values in $\mathcal{P}_*(\mathcal{X})$ such that $\gamma_t(\cdot) := H(\cdot, t)$ is a constant speed geodesic between $\rho$ and $\eta(t)$.
    
        Let's define the \textit{kinetic energy} of a smooth curve $\gamma: [0,1] \rightarrow \mathcal{P}_*(\mathcal{X})$ as:

        \begin{equation}
            E(\gamma) := \frac{1}{2} \int_0^1 \left\| \dot{\gamma}(s) \right\|_{\gamma(s)}^2 \mathrm{d}s.
        \end{equation}

        Recall that the norm here is defined by \eqref{riemann_product}.
         The Riemannian distance $W_K(\rho, \rho_t)$ can be defined as the length of a geodesic:

        \begin{equation}
            W_K(\rho, \rho_t) = \int_0^1 \left\| \dot{\gamma}(s) \right\|_{\gamma(s)} \mathrm{d}s.
        \end{equation}

        Now, thanks to $\gamma_t$ being constant speed it follows:

        \begin{equation}
            \left. \frac{d}{dt}W_K(\rho, \eta(t))^2 \right|_{t=0} = 2 \left. \frac{d}{dt} E(\gamma_t) \right|_{t=0}.
        \end{equation}

        Now, using the formula for the First variation of Energy (cfr \cite{petersen2006riemannian}, Lemma 5.4.2) we get:

        \begin{equation}
            \left. \frac{d}{dt} E(\gamma_t) \right|_{t=0} = \left< \nabla \psi, \dot{\gamma}_0(0) \right>_{\rho_t},
        \end{equation}

        but $\dot{\gamma}_0(0)$ is exactly the quantity we were looking for, i.e. the initial velocity of a constant speed geodesic joining $\rho$ and $\rho_t$, which concludes the proof.
    \end{proof}

    \begin{lemma}[Characterization of the gradient of the Entropy in the $W_K$ metric]
        \label{gflow_entropy}
        Let $\mathcal{H}$ be the relative entropy functional, and $\rho \in \mathcal{P}_* (\mathcal{X})$. Then:
        \begin{equation}
            \mathrm{grad}\, \mathcal{H}(\rho) = \nabla \log \rho .
        \end{equation}
    \end{lemma}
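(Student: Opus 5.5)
The plan is to argue exactly as in the proof of the potential-energy lemma. We fix a smooth curve $\rho_t$ in $\psx$ with $\rho_0=\rho$ and tangent vector $D_t\rho=\nabla\psi_t$. By Proposition \ref{riemannian_key}, such a $\nabla\psi_t$ exists and is unique, and it satisfies the discrete continuity equation $\dot\rho_t + \nabla\cdot(m_{\rho_t}\odot\nabla\psi_t)=0$. It then suffices to show that
\begin{equation*}
\frac{\mathrm{d}}{\mathrm{d}t}\mathcal{H}(\rho_t) = \left<\nabla\log\rho_t,\nabla\psi_t\right>_{\rho_t}
\end{equation*}
for every such curve. Since $\nabla\log\rho_t\in T_{\rho_t}$, this identifies the Riemannian gradient.

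The first step is to differentiate the entropy. Because $\rho_t$ is strictly positive, $\mathcal{H}$ is smooth near $\rho$, and
\begin{equation*}
\frac{\mathrm{d}}{\mathrm{d}t}\mathcal{H}(\rho_t)=\sum_{x}(\log\rho_t(x)+1)\dot\rho_t(x)\pi(x).
\end{equation*}
The constant term drops out, because $\sum_x\dot\rho_t(x)\pi(x)=0$ (the normalization $\sum_x\rho_t(x)\pi(x)=1$ is preserved). Equivalently, the discrete divergence of any edge field is orthogonal to constants in $\left<\cdot,\cdot\right>_\pi$, which one checks with integration by parts applied to $\psi\equiv 1$. This leaves $\left<\log\rho_t,\dot\rho_t\right>_\pi$.

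Next, I substitute the continuity equation and apply the discrete integration by parts proposition:
\begin{equation*}
\left<\log\rho_t,\dot\rho_t\right>_\pi = -\left<\log\rho_t,\nabla\cdot(m_{\rho_t}\odot\nabla\psi_t)\right>_\pi=\left<\nabla\log\rho_t,m_{\rho_t}\odot\nabla\psi_t\right>_\pi=\left<\nabla\log\rho_t,\nabla\psi_t\right>_{\rho_t}.
\end{equation*}
The last equality is just the definition of $\left<\cdot,\cdot\right>_\rho$. This is the same chain of identities used for $\mathcal{V}$, with $V$ replaced by $\log\rho_t$.

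The only point needing care is the bookkeeping in the continuity equation. Its explicit form $\sum_y(\psi(y)-\psi(x))K(x,y)m_\rho(x,y)$ must match $\nabla\cdot(m_\rho\odot\nabla\psi)$ up to the factor and sign conventions of the divergence definition; here the symmetry of $m_\rho$ and the antisymmetry of $\nabla\psi$ give the factor $\tfrac12$ correctly. Reversibility of $K$ is also needed so that integration by parts holds with the symmetric weight $K(x,y)\pi(x)$. Once this is checked, nothing else is needed; in particular, the logarithmic mean plays no special role in this computation. Its special role appears only when one combines the result with $m_\rho\odot\nabla\log\rho=\nabla\rho$ to recover the heat equation, which is not part of this statement.
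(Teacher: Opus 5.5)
Your proposal is correct and follows the paper's proof: you differentiate $\mathcal{H}$ along a curve with $D_t\rho=\nabla\psi_t$, substitute the continuity equation, integrate by parts, and recognize $\langle \nabla\log\rho_t,\nabla\psi_t\rangle_{\rho_t}$, exactly as the paper does. The only difference is cosmetic: you drop the constant $+1$ before integrating by parts, using $\sum_x\dot\rho_t(x)\pi(x)=0$, whereas the paper carries it through and uses $\nabla[\log\rho_t+1]=\nabla\log\rho_t$ at the end.
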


    \begin{proof}
        Recall that the general definition of the gradient of a smooth function $f: M \rightarrow \mathbb{R}$ at a point $x \in M$ of a Riemannian manifold is a vector $\mathrm{grad} f(x) \in T_x M$ such that, for every curve $\gamma: (-1,1) \rightarrow M$ with $\gamma(0) = x$ and $\gamma^{'}(0) = v \in T_x M$:

        \begin{equation}
            \left. \frac{\mathrm{d}}{\mathrm{d}t} f(\gamma(t)) \right|_{t=0} = \left< \mathrm{grad} f(x), v \right>_x
        \end{equation}

        according to the scalar product defined on $T_x M$. Fix a curve $\rho_t$ such that $D_t\rho = \nabla \psi_t$, that is:

        \begin{equation}
            \dot{\rho_t} + \nabla \cdot (m_{\rho_t}(x,y) \odot \nabla \psi_t) = 0.
        \end{equation}
        
        Substituting the entropy in the gradient definition we get:

        \begin{equation}
        \begin{split}
            \frac{\mathrm{d}}{\mathrm{d}t} \mathcal{H}(\rho_t) & = \sum_{x\in\mathcal{X}} (\log (\rho_t) + 1)\dot{\rho}_t\pi(x)= \\
            & = - \sum_{x\in\mathcal{X}} (\log (\rho_t) + 1)\nabla \cdot (m_{\rho_t}(x,y) \odot \nabla \psi_t)\pi(x) = \\
            & = -\left< \log (\rho_t) + 1, \nabla \cdot (m_{\rho_t}(x,y) \odot \nabla \psi_t) \right>_{\pi}
        \end{split}
        \end{equation}

        Integrating by parts and using the definition of $\left< \cdot , \cdot \right>_{\rho_t}$:

        \begin{equation}
            \begin{split}
                \frac{\mathrm{d}}{\mathrm{d}t} \mathcal{H}(\rho_t) & = -\left< \log (\rho_t) + 1, \nabla \cdot (m_{\rho_t}(x,y) \odot \nabla \psi_t) \right>_{\pi} = \\
                & = \left< \nabla [\log (\rho_t) + 1], m_{\rho_t}(x,y) \odot \nabla \psi_t)\right>_{\pi} = \\
                & = \left< \nabla [\log (\rho_t) + 1], \nabla \psi_t)\right>_{\rho_t},
            \end{split}
        \end{equation}

        and since trivially $\nabla [\log (\rho_t) + 1] = \nabla \log (\rho_t)$ this completes the proof.
        
    \end{proof}

\subsection{Solutions of the Discrete JKO Scheme}






\begin{theorem}
        Consider the discrete JKO scheme \eqref{discrete_jko} for the \textit{discrete free energy functional}:
        \begin{equation}
            \rho_{t+1} = \underset{\rho \in \mathcal{P}_* (\mathcal{X})}{\arg\min} \left\{ \mathcal{F}(\rho) + \frac{1}{2\tau}W_K(\rho, \rho_t)^2 \right\}
        \end{equation}
        with $\mathcal{F}(\rho) := \mathcal{V}(\rho) + \beta \mathcal{H}(\rho)$ with $\beta > 0$. Then for any $\tau>0$ this scheme has a unique minimizer $\hat{\rho}_{t+1}$ in $\mathcal{P}_* (\mathcal{X})$. If $\hat{\rho}_{t+1}$ is this minimizer, let $\mathrm{grad}$ be the gradient operator at the point $\hat{\rho}_{t+1}$ w.r.t. the $W_K$-induced metric. Then:

        \begin{equation}
        \label{gradzero_condition_appendix}
            \mathrm{grad} \, \left( \mathcal{F}(\hat{\rho}_{t+1}) + \frac{1}{2\tau}W_K(\hat{\rho}_{t+1}, \rho_t)^2 \right) = 0.
        \end{equation}

\end{theorem}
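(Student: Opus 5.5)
We split the argument into existence on the closed simplex, interiority of every minimizer, uniqueness, and the first-order condition. Throughout, write $\mathcal{G}(\rho) = \mathcal{V}(\rho) + \beta\mathcal{H}(\rho) + \frac{1}{2\tau}W_K(\rho,\rho_t)^2$, regarded as a function on all of $\mathcal{P}(\mathcal{X})$.

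\textbf{Existence.} By Theorem~\ref{riemannian_manifold}, $W_K$ is a metric on $\mathcal{P}(\mathcal{X})$. We use that it is continuous with respect to the Euclidean topology and induces the same topology there; this is known from \citet{maas2011gradientflowsentropyfinite}, with $W_K$ comparable to a power of the Euclidean distance on the compact simplex. The terms $\mathcal{V}$ and $\mathcal{H}$ are continuous on the compact set $\mathcal{P}(\mathcal{X})$, using the convention $0\log 0=0$. Hence $\mathcal{G}$ is lower semicontinuous on a compact set, and a minimizer $\bar\rho \in \mathcal{P}(\mathcal{X})$ exists.

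\textbf{Interiority (the main obstacle).} Suppose $\bar\rho$ vanishes on a nonempty set $Z \subset \mathcal{X}$. Perturb it to $\rho_\varepsilon := (1-\varepsilon)\bar\rho + \varepsilon\mathbf{1}$, which is a density. The entropy changes by roughly $\beta\sum_{x\in Z}\varepsilon\log\varepsilon\,\pi(x)$. This is of order $-\varepsilon|\log\varepsilon|$, a decrease with infinite slope. The potential term changes by $O(\varepsilon)$. The hard part is to control the change in $W_K(\cdot,\rho_t)^2$. Two routes are available:
\begin{enumerate}
\item Use $W_K(\bar\rho,\rho_\varepsilon) \le C\sqrt{\varepsilon}$, which holds because the log-mean mobility satisfies $W_K(\rho,\sigma)\lesssim \|\rho-\sigma\|^{1/2}$-type bounds. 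This gives $W_K(\rho_\varepsilon,\rho_t)^2 \le W_K(\bar\rho,\rho_t)^2 + 2C\sqrt{\varepsilon}\,W_K(\bar\rho,\rho_t) + C^2\varepsilon$. But $\sqrt\varepsilon$ beats $\varepsilon|\log\varepsilon|$, so this alone is insufficient.
\item Instead prove convexity of $\rho\mapsto W_K(\rho,\rho_t)^2$ along linear interpolations. This holds because the action $(\rho,\dot\rho)\mapsto |V|^2/m$ with the log-mean (a concave, 1-homogeneous mobility) is jointly convex, as in \citet{erbar2012gradientflowstructuresdiscrete}. Convexity yields $W_K(\rho_\varepsilon,\rho_t)^2 \le (1-\varepsilon)W_K(\bar\rho,\rho_t)^2 + \varepsilon W_K(\mathbf{1},\rho_t)^2$, so the change is $O(\varepsilon)$.
\end{enumerate}
I would take the second route. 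With it, $\mathcal{G}(\rho_\varepsilon) < \mathcal{G}(\bar\rho)$ for small $\varepsilon$, a contradiction, so $\bar\rho\in\mathcal{P}_*(\mathcal{X})$.

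\textbf{Uniqueness.} The same joint-convexity argument shows that $W_K(\cdot,\rho_t)^2$ is convex along linear interpolation of densities. The potential $\mathcal{V}$ is linear. The entropy $\mathcal{H}$ is strictly convex in the linear sense, since $\beta>0$. So $\mathcal{G}$ is strictly convex on the convex set $\mathcal{P}(\mathcal{X})$, and the minimizer is unique. By interiority it lies in $\mathcal{P}_*(\mathcal{X})$.

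\textbf{First-order condition.} On the open manifold $\mathcal{P}_*(\mathcal{X})$, $\mathcal{V}$ and $\mathcal{H}$ are smooth, with gradients given by the lemmas above. For $W_K^2$, if $\hat\rho$ is close enough to $\rho_t$ (inside the injectivity radius), Proposition~\ref{gflow_w2} gives smoothness. In general, apply the first-variation argument along any curve through $\hat\rho$: the resulting one-sided derivative inequalities, combined with minimality, force the differential to vanish. Then use that a smooth function attaining an interior minimum on a manifold has zero Riemannian gradient; this gives \eqref{gradzero_condition_appendix}. If smoothness of $W_K^2$ at $\hat\rho$ fails (cut locus), I would restrict the claim to a neighbourhood where Proposition~\ref{gflow_w2} applies, as the paper's statement implicitly does.
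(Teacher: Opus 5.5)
Your proposal is correct and follows the paper's proof step for step: compactness for existence; a linear interpolation toward an interior density, where convexity of $\rho\mapsto W_K(\rho,\rho_t)^2$ along linear interpolations keeps the transport term at $O(\varepsilon)$ while the entropy drops like $\varepsilon\log\varepsilon$; strict convexity of $\mathcal{H}$ for uniqueness; and a vanishing gradient at the interior minimum. The only cosmetic difference is that the paper interpolates toward $\rho_t$ instead of $\mathbf{1}$, which makes the $W_K^2$ term nonincreasing outright.

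On the last step, the paper simply asserts that the objective is smooth on $\mathcal{P}_*(\mathcal{X})$, and your cut-locus hedge is not needed. The variation of a minimizing geodesic gives a smooth upper support for $W_K(\cdot,\rho_t)^2$ at $\hat\rho$. Minimality gives the smooth lower support $2\tau(\mathcal{G}(\hat\rho)-\mathcal{F})$. Together they force differentiability at $\hat\rho$ and yield the stated condition without restricting the claim. This still presupposes a minimizing geodesic from $\hat\rho$ to $\rho_t$ inside $\mathcal{P}_*(\mathcal{X})$, which the paper also tacitly assumes.
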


\begin{proof}
Fix $\tau>0$ and $\rho_t\in \mathcal{P}_*(\mathcal{X})$. Define the objective
\[
J(\rho)
\;:=\;
\mathcal{F}(\rho)\;+\;\frac{1}{2\tau}W_K(\rho,\rho_t)^2,
\qquad \rho\in\mathcal{P}(\mathcal{X}),
\]
where $\mathcal{F}(\rho)=\mathcal{V}(\rho)+\beta\mathcal{H}(\rho)$ with $\beta>0$.

\paragraph{Existence of a Minimizer}
Since $\mathcal{X}$ is finite, the simplex $\mathcal{P}(\mathcal{X})$ is compact in $\mathbb{R}^{\mathcal{X}}$.
The functional $\mathcal{V}$ is continuous on $\mathcal{P}(\mathcal{X})$ and $\mathcal{H}$ is lower semicontinuous
(with the convention $0\log 0:=0$), hence $\mathcal{F}$ is lower semicontinuous.
Moreover, $\rho\mapsto W_K(\rho,\rho_t)^2$ is continuous on $\mathcal{P}(\mathcal{X})$ since $\rho_t\in\mathcal{P}_*(\mathcal{X})$
and $W_K$ metrizes weak convergence on $\mathcal{P}_*(\mathcal{X})$ since all topologies coincide on a finite space.
Therefore $J$ is lower semicontinuous on the compact set $\mathcal{P}(\mathcal{X})$, and admits at least one minimizer
$\hat\rho_{t+1}\in\mathcal{P}(\mathcal{X})$.

\paragraph{The minimizer is in the interior $\mathcal{P}_*(\mathcal{X})$}
Assume by contradiction that $\hat\rho_{t+1}\notin\mathcal{P}_*(\mathcal{X})$.
Then there exists $x_0\in\mathcal{X}$ such that $\hat\rho_{t+1}(x_0)=0$.
For $\varepsilon\in(0,1)$, consider the linear interpolation
\[
\rho_\varepsilon := (1-\varepsilon)\hat\rho_{t+1}+\varepsilon\rho_t.
\]
Then $\rho_\varepsilon\in \mathcal{P}_*(\mathcal{X})$ for all $\varepsilon>0$ since $\rho_t\in\mathcal{P}_*(\mathcal{X})$.
Using $\rho_\varepsilon(x_0)=\varepsilon\rho_t(x_0)$ and $s\log s\to 0$ as $s\downarrow 0$, we have
\[
\mathcal{H}(\rho_\varepsilon)
=
\mathcal{H}(\hat\rho_{t+1})
\;+\;
\pi(x_0)\,\varepsilon\rho_t(x_0)\log(\varepsilon\rho_t(x_0))
\;+\;O(\varepsilon),
\]
hence
\[
\mathcal{H}(\rho_\varepsilon)
\le
\mathcal{H}(\hat\rho_{t+1})
+
c\,\varepsilon\log\varepsilon
\qquad\text{for $\varepsilon$ small enough,}
\]
for some $c>0$. Since $\mathcal{V}$ is affine in $\rho$, we also have
$\mathcal{V}(\rho_\varepsilon)=\mathcal{V}(\hat\rho_{t+1})+O(\varepsilon)$, hence, using $\beta>0$,
\[
\mathcal{F}(\rho_\varepsilon)
\le
\mathcal{F}(\hat\rho_{t+1})
+
c'\,\varepsilon\log\varepsilon
\qquad\text{for $\varepsilon$ small enough,}
\]
for some $c'>0$, and in particular $\mathcal{F}(\rho_\varepsilon)<\mathcal{F}(\hat\rho_{t+1})$ for $\varepsilon$ small.

On the other hand, by convexity of $\rho\mapsto W_K(\rho,\rho_t)^2$ along linear interpolations
\citep[Prop.\ 2.1.1]{Erbar_2012}, we obtain
\[
W_K(\rho_\varepsilon,\rho_t)^2
\le
(1-\varepsilon)\,W_K(\hat\rho_{t+1},\rho_t)^2
+
\varepsilon\,W_K(\rho_t,\rho_t)^2
=
(1-\varepsilon)\,W_K(\hat\rho_{t+1},\rho_t)^2
\le
W_K(\hat\rho_{t+1},\rho_t)^2.
\]
Combining the two inequalities yields, for $\varepsilon$ small enough,
\[
J(\rho_\varepsilon)
<
J(\hat\rho_{t+1}),
\]
contradicting the minimality of $\hat\rho_{t+1}$. Therefore any minimizer must belong to $\mathcal{P}_*(\mathcal{X})$.

\paragraph{Uniqueness.}
Let $\rho_0\neq \rho_1$ in $\mathcal{P}(\mathcal{X})$ and $\rho_\lambda=(1-\lambda)\rho_0+\lambda\rho_1$.
The map $\rho\mapsto \mathcal{V}(\rho)$ is affine, the map $\rho\mapsto W_K(\rho,\rho_t)^2$ is convex
along $\rho_\lambda$ \citep[Prop.\ 2.1.1]{Erbar_2012}, and the entropy $\mathcal{H}$ is strictly convex on
$\mathcal{P}(\mathcal{X})$ since $s\mapsto s\log s$ is strictly convex on $\mathbb{R}_+$.
Since $\beta>0$, it follows that $J$ is strictly convex along linear interpolations, hence admits at most one minimizer.
Therefore the minimizer $\hat\rho_{t+1}$ is unique.

\paragraph{First-order condition.}
By the previous steps, the unique minimizer $\hat\rho_{t+1}$ belongs to $\mathcal{P}_*(\mathcal{X})$.
Since $\mathcal{P}_*(\mathcal{X})$ is a smooth Riemannian manifold endowed with the $W_K$-induced metric and $J$ is smooth on $\mathcal{P}_*(\mathcal{X})$,
the minimizer satisfies the first-order optimality condition
\[
\mathrm{grad}\,J(\hat\rho_{t+1})=0,
\]
which is exactly \eqref{gradzero_condition_appendix}.
\end{proof}

\begin{figure*}[htb]
  \centering
  \includegraphics[width=\textwidth]{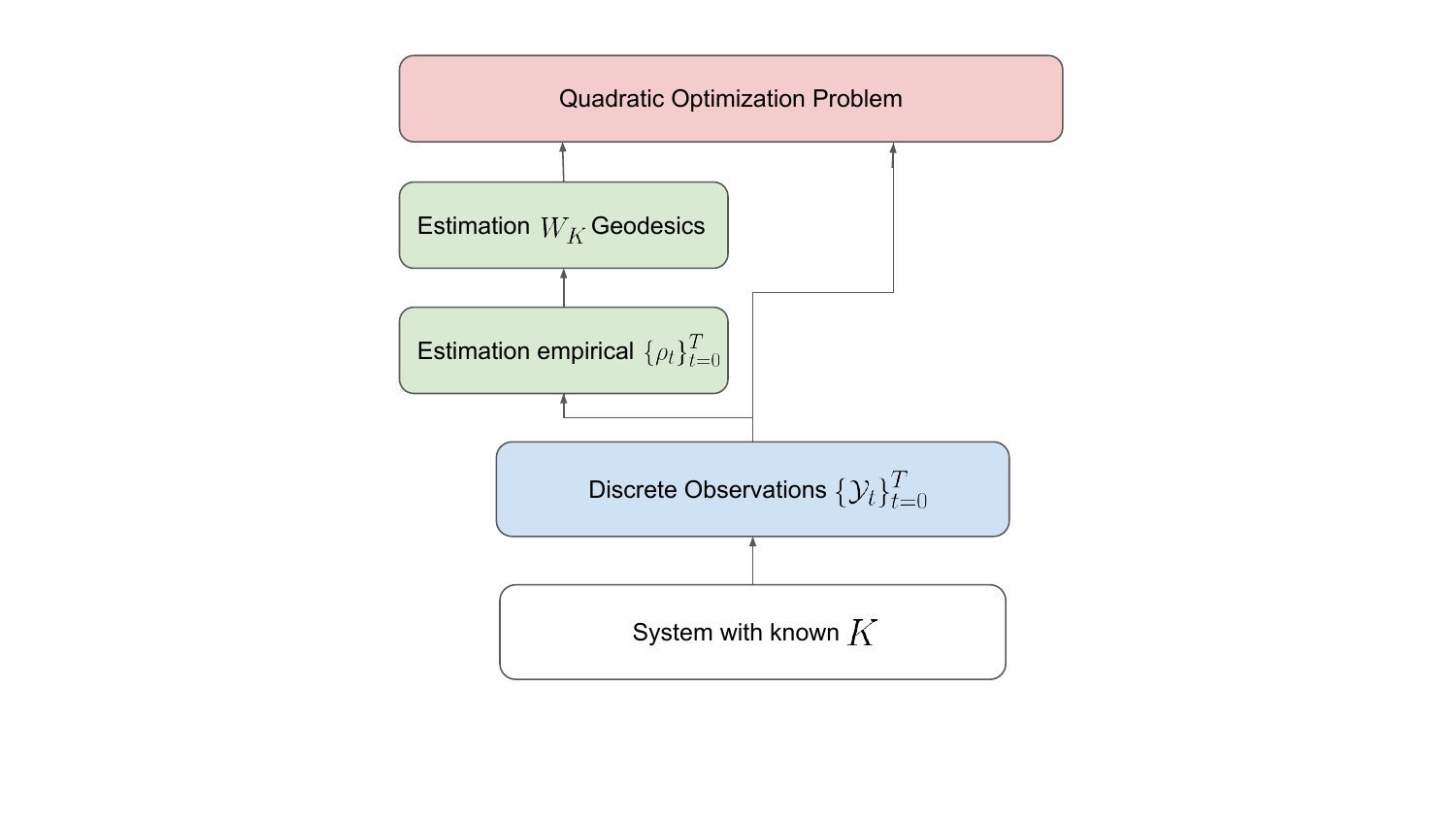}
  \caption{Schematic breakdown of our pipeline: we first estimate densities, use them to compute geodesics, then feed everything into the quadratic loss.}
  \label{fig:pipeline}
\end{figure*}

\section{Inference Details}
\label{app_inference}

We use the same procedure to evolve a population given a gradient of a potential $\nabla V$ and a parameter $\beta$ both in the data generation (using the ground truth values) and for inference.

Given the estimated density from the previous timestep $\hat{\rho}$, potential $V$ and parameter $\beta$, we define:

$$
\psi_i \;=\; V_i + \beta \log \hat{\rho}_i,
$$

and we define the off-diagonal jump rate from $i$ to $j$ as:
$$
Q_{ij} \;=\; \frac{K_{ij}\,\Theta_{ij}}{\rho_i}\,\bigl[\psi_j - \psi_i\bigr]_+,
\qquad i \neq j,
$$
and the diagonal is closed as $Q_{ii} = -\sum_{j \neq i} Q_{ij}$ so that rows sum to zero, where $\Theta$ is again the log-mean matrix computed from $\hat{\rho}$. We evolve samples by freezing $Q$ on each time interval and  drawing from $P=\exp(\Delta t\, Q)$ via a standard matrix exponential.

\begin{remark}
    More sophisticated approaches for solving dynamical OT problems at wider timesteps, for example see \citep{facca2024efficientpreconditionerssolvingdynamical} for an approach using efficient preconditioners. However, we choose to keep our numerical approach as simple as possible, to show that the success of our pipeline does not rely on sophisticated numerical estimations, but the most standard approaches work well.
\end{remark}

\section{Additional Experimental Details}

\subsection{Metric}

We give here a formal definition of the Hellinger distance we use throughout the paper, for completeness:

$$H(p,q) := \frac{1}{\sqrt{2}} \sqrt{\sum_{i=1}^n(\sqrt{p_i} - \sqrt{q_i})^2}.$$

\subsection{Optimizer}

We use the Adam optimizer \citep{kingma2017adammethodstochasticoptimization} with the parameters $\beta_1 = 0.9$, $\beta_2 = 0.999$, $\epsilon = 1e^{-8}$, and constant
learning rate $lr = 5e^{-4}$. We use a weight decay of $1e-6$.

\subsection{Batch Construction and Time Sampling}

The time integral appearing in the training objective is approximated via Monte Carlo sampling over time, following the standard practice in diffusion and score-based generative models. At each optimization step, training samples are paired with independently sampled time indices, yielding minibatches that contain data points evaluated at heterogeneous time steps. This batching strategy is  analogous to the one used in denoising diffusion probabilistic models (DDPMs)~\citep{ho2020denoising} and score-based stochastic differential equation models~\citep{song2021scorebased}, as well as in their discrete counterparts~\citep{campbell2022continuoustimeframeworkdiscrete}.

\subsection{Additional Ablations}

\subsubsection{Ablating the Time Grid}

We qualitatively assess the sensitivity of the proposed method to the choice of temporal discretization. We consider uniform time grids, randomized irregular grids, and adaptive logarithmic schedules following \citet{wang2025closerlooktimesteps}, with resolutions ranging from 20 to $10^4$ time steps in all cases.

Across all tested regimes, our method consistently outperforms the OpenFIM baseline, with an average Hellinger distance of approximately $0.11$ compared to $0.18$ for OpenFIM. Increasing the temporal resolution leads to modest but consistent improvements in performance, while differences between grid types remain small, with adaptive logarithmic schedules performing slightly better on average. Overall, these results indicate that the method is not sensitive to the precise choice of time grid within a broad and practically relevant range.

\subsubsection{Ablating the Free-Energy Functional}

We further perform a qualitative ablation on the structure of the free-energy functional by varying both the scale of the potential and the relative weight of the entropy term. Specifically, we consider multiple ranges for the potential amplitude together with increasing values of the entropy coefficient $\beta$.

Across the tested regimes, we observe that larger potential amplitudes tend to slightly degrade both performance and stability, while increasing the entropy weight consistently improves robustness and accuracy. In particular, the purely entropic regime corresponds to perfectly stable dynamics, with no observed optimization collapse, reflecting the well-conditioned nature of the associated heat flow.

These trends are consistent with the role of entropy as a regularizing component of the dynamics and suggest that the proposed method behaves in line with theoretical expectations. A more exhaustive quantitative exploration of these effects is left for future work.

\section{A Guide for the Discrete Diffusion Practitioner}

We summarize here how the geometric framework developed in this paper connects to (and can be used to reinterpret and extend) CTMC-based discrete diffusion models commonly used in generative modeling
\citep{austin2023structureddenoisingdiffusionmodels, campbell2022continuoustimeframeworkdiscrete, lou2024discretediffusionmodelingestimating, davis2024fisherflowmatchinggenerative}.

A standard discrete diffusion model specifies a \emph{forward} continuous-time Markov chain (CTMC)
on a discrete state space $\mathcal{X}$.
Concretely, one chooses a graph structure encoding which transitions are allowed and assigns jump rates
so that data samples progressively evolve toward a tractable base distribution.
Learning then consists of approximating the \emph{reverse-time} process, enabling sampling from the data distribution by simulating the reverse CTMC.

A common and practically successful choice is a  \emph{time-rescaled discrete heat equation} on the state space
\citep{campbell2022continuoustimeframeworkdiscrete, lou2024discretediffusionmodelingestimating}.
In our notation, this is the density evolution:
\begin{equation}
\label{eq:app_heat_equation}
\frac{\mathrm{d}}{\mathrm{d}t}\rho_t = (K - I)\rho_t,
\end{equation}
where $K$ is a Markov kernel on $\mathcal{X}$.
The kernel $K$ controls which transition probabilities are instantaneously nonzero and with what relative strength:
in this sense, $K$ can be used to model between which elements of $\mathcal{X}$ the density can flow in the path from the starting point to the data distribution. Sparsity in $K$ also helps computationally \citep{lou2024discretediffusionmodelingestimating}: when applying conditional flow matching, the only non-zero elements in the conditional discrete score $\nabla \log p(x | x_0)$ are those $x$ such that $K(x,x_0) \neq 0$, which allows for example not to store any other entry of the conditional score in memory, significantly shrinking the computational burden for large $\mathcal{X}$ spaces via conditional flow matching.

In the discrete sequence generation domain, if one considers an alphabet of $n$ letters and sequences of length $d$, $|\mathcal{X}| = n^d$, and the typical choice for $K$ is the $(n,d)$ \textit{Hamming graph}:

\begin{equation}
\label{eq:hamming_kernel}
K(x,y)
=
\begin{cases}
0 & \text{if } d_H(x,y) > 1,\\[4pt]
Q_{\mathrm{tok}}(x_i,y_i) & \text{otherwise, with} \; i \; \text{the mismatched position (if any)}
\end{cases}
\end{equation}

where $d_H$ indicates the Hamming distance between sequences $x$ and $y$, which counts the number of tokens (with position) the sequences differ at. $Q_{\mathrm{tok}}$ is a tunable token-level transition matrix, which in applications is typically set to be either uniform across all states or absorbing to an external, auxiliary state \citep{campbell2022continuoustimeframeworkdiscrete, lou2024discretediffusionmodelingestimating}.

The central quantity in CTMC-based discrete diffusion models is the family of probability ratios appearing in the time-reversal of the forward noising process. In practice, learning the reverse dynamics can be expressed in terms of learning probability ratios (or ``scores'')
\citep{austin2023structureddenoisingdiffusionmodels, campbell2022continuoustimeframeworkdiscrete, lou2024discretediffusionmodelingestimating}:

\begin{equation}
    s(x) := \left[ \frac{p(y)}{p(x)}\right]_{y \in \mathcal{X}}.
\end{equation}

This score is closely related to the entropy gradient in the Maas geometry:
recall the relative entropy (w.r.t.\ the invariant measure $\pi$ of $K$):
\begin{equation}
\label{eq:app_entropy}
\mathcal{H}(\rho) := \sum_{x\in\mathcal{X}} \rho(x)\log\rho(x)\,\pi(x).
\end{equation}

Then the $W_K$-gradient of $\mathcal{H}$ is, from \ref{gflow_entropy}:

\begin{equation}
\label{eq:app_entropy_grad}
\mathrm{grad}\,\mathcal{H}(\rho) = \nabla \log\rho,
\end{equation}
where $\nabla f(x,y)=f(x)-f(y)$. In particular,

\[
(\mathrm{grad}\,\mathcal{H}(\rho))(x,y)=\log\rho(x)-\log\rho(y) = \log \frac{\pi(y)}{\pi(x)} - \log \frac{p(y)}{p(x)}.
\]

Thus, the discrete diffusion ``score'' is the exponential of the $W_K$-gradient of entropy on the probability simplex (plus a constant term involving the stationary distribution $\pi$ to switch from densities to probabilities). Since, using the logarithmic mean mobility, the discrete heat equation coincides with the $W_K$-gradient flow of entropy
\citep{maas2011gradientflowsentropyfinite}:
\begin{equation}
\label{eq:app_heat_as_gf}
D_t\rho_t = -\,\mathrm{grad}\,\mathcal{H}(\rho_t).
\end{equation}
this implies, from a diffusion-modelling perspective, that the standard forward noising process is steepest descent of entropy in the geometry induced by the kernel $K$.

The same viewpoint also yields a principled way to generalize beyond pure heat flow by introducing the potential $V$ to use the full free-energy as a guiding functional:
\begin{equation}
\label{eq:app_free_energy}
\mathcal{F}(\rho)=\mathcal{V}(\rho)+\beta\,\mathcal{H}(\rho),
\qquad 
\mathcal{V}(\rho):=\sum_{x\in\mathcal{X}}V(x)\rho(x)\pi(x), 
\qquad \beta>0.
\end{equation}

With this functional, the starting distribution can be tuned from the stationary distribution $\pi$ of the chain $K$ to an arbitrary value $p(x) = \frac{\pi(x)e^{-\frac{V(x)}{\beta}}}{Z}$, with $Z$ a rescaling constant. In this sense, the setup introduced in this paper also applies to discrete flow matching \citep{gat2024discreteflowmatching} as a way to describe certain probability flows (still constraining the paths to follow the gradient flow of the functional $\mathcal{F}$) between arbitrary distributions.

Summarizing the quantities involved in our setting and how they relate to modelling choices of discrete diffusion models: $K$ determines which transitions are locally feasible and how they are weighted,
$\beta$ controls the strength of the randomness of transitions and can be increased to ensure fast mixing in a generative context, and the potential $V$ steers the dynamics by changing the minimizer of $\mathcal{F}$,
hence changing the long-time equilibrium away from the base distribution of the forward heat flow.

Finally, note that some practical discrete diffusion constructions use ``absorbing'' corruption kernels
\citep{lou2024discretediffusionmodelingestimating}, which at first sight may violate the irreducibility and reversibility assumption
on the kernel $K$. However, interestingly implementations often regularize in practice these kernels by adding small backward probabilities from the absorbing state into any of the $\mathcal{X}$ states,
which improves stability \citep{lou2024discretediffusionmodelingestimating} and crucially recovers both the irreducibility and reversibility of the regularized absorbing chain. We view this as a positive testament to our theoretical setting, as it is able to justify a prominent experimental practice under the light of precise geometrical properties.

\FloatBarrier
\section{Algorithms}
\FloatBarrier

\begin{algorithm}[tb]
  \caption{Schur--Cholesky solver for the discrete continuity potential (geodesic velocity subroutine)}
  \label{alg:geodesic_velocity}
  \begin{algorithmic}
    \STATE {\bfseries Input:} $K,\pi,\rho,\dot\rho$ with $\langle \dot\rho,\mathbf{1}\rangle_\pi=0$, pinned index $p$
    \STATE {\bfseries Output:} $v=\nabla\psi\in T_\rho$

    \STATE Compute $\Theta(\rho)$ with $\Theta_{ij}=m_\rho(x_i,x_j)$.
    \STATE $W \gets \mathrm{diag}(\pi)\bigl(K\odot \Theta(\rho)\bigr)$.
    \STATE $M \gets \mathrm{diag}(W\mathbf{1})-W$.
    \STATE $A \gets K\odot\Theta(\rho)-\mathrm{diag}\bigl((K\odot\Theta(\rho))\mathbf{1}\bigr)$.
    \STATE $A_g \gets \begin{bmatrix} A \\ e_p^\top \end{bmatrix}$,\;\; $c \gets \begin{bmatrix}-\dot\rho \\ 0 \end{bmatrix}$.
    \STATE Cholesky: $M=L_ML_M^\top$.
    \STATE Solve $L_M L_M^\top Y = A_g^\top$ for $Y$.
    \STATE $S \gets A_gY$.
    \STATE Cholesky: $S=L_SL_S^\top$.
    \STATE Solve $L_S L_S^\top \lambda = -2c$.
    \STATE $\psi \gets -\tfrac12\,Y\lambda$,\;\; return $v \gets \nabla\psi$.
  \end{algorithmic}
\end{algorithm}

\begin{algorithm}[tb]
  \caption{Learning discrete gradient flows from temporal snapshots (training pipeline)}
  \label{alg:training_pipeline}
  \begin{algorithmic}
    \STATE {\bfseries Input:} snapshots $\{\mathcal{Y}_{t_k}\}_{k=0}^T$, $(K,\pi)$, step size $\tau$, batch size $B$, parameters $\theta$
    \STATE {\bfseries Output:} trained $\theta$

    \STATE Estimate $\{\hat\rho_{t_k}\}_{k=0}^T$ from $\{\mathcal{Y}_{t_k}\}$ with $\hat\rho_{t_k}\in\mathcal{P}_*(\mathcal{X})$.
    \FOR{$k=1$ {\bfseries to} $T$}
      \STATE Compute $v_k \gets v_{\mathrm{geo}}(\hat\rho_{t_k}\rightarrow \hat\rho_{t_{k-1}})$ using Alg.~\ref{alg:geodesic_velocity}.
    \ENDFOR
    \REPEAT
      \STATE Sample $k\in\{1,\dots,T\}$ and minibatch $\{x^{(b)}\}_{b=1}^B \sim \hat\rho_{t_k}\odot\pi$.
      \STATE $\mathcal{L}\gets 0$.
      \FOR{$b=1$ {\bfseries to} $B$}
        \STATE $g\gets \nabla V_\theta(x^{(b)})$, $\beta\gets \beta_\theta$, $s\gets \nabla\log\hat\rho_{t_k}(x^{(b)})$, $u\gets v_k(x^{(b)})$.
        \STATE $\mathcal{L}\gets \mathcal{L}+\left\|g+\beta s-\tfrac{1}{\tau}u\right\|_2^2$.
      \ENDFOR
      \STATE $\mathcal{L}\gets \mathcal{L}/B$; update $\theta$ with Adam.
    \UNTIL{convergence or max epochs}
  \end{algorithmic}
\end{algorithm}

\FloatBarrier

\section{Graph Classes}
\label{app_graphs}

\paragraph{Complete graph}  
A complete graph on $n$ vertices is one in which \emph{every} pair of distinct vertices is connected by an edge. Thus each vertex has degree $n-1$, and the graph is maximally dense. This extreme connectivity makes the complete graph a useful upper bound in diffusion, mixing, and spectral experiments.  

\paragraph{Erd\H{o}s–Rényi}  
In the Erd\H{o}s–Rényi model $G(n,p)$, each possible edge between the $\binom n2$ vertex pairs is included independently with probability $p$. The resulting graphs are random and exhibit phase transitions in connectivity, giant components, and degree distributions. This model is a classical baseline in random graph theory: see \citet{erdos59a}.

\paragraph{$d$-Regular}  
A $d$-regular graph is one in which every vertex has exactly degree $d$, offering a model with no degree heterogeneity. Random $d$-regular graphs are often well-balanced, locally tree-like, and useful as a homogeneous testbed in structural and spectral studies.

\paragraph{Watts–Strogatz (Small-World)}  
This model builds graphs that interpolate between a regular ring lattice and a random graph via edge rewiring with probability $\beta$. The result often retains high local clustering (like lattices) while achieving short global path lengths, capturing the “small-world” phenomenon seen in many real networks. See \citet{watts_collective_1998}.

\paragraph{Stochastic Block Model}  
In the SBM, vertices are partitioned into communities (blocks). Edges are placed between pairs with probabilities depending only on block membership (higher probability within blocks, lower across). This induces modular or community structure and is a canonical model in clustering and network inference. See also \citet{HOLLAND1983109}.

\paragraph{Delaunay triangulation graph}  
Given a set of points in the plane, the Delaunay triangulation connects points such that no point lies inside the circumcircle of any triangle formed. The result is a planar graph that favors well-shaped triangles and reflects spatial proximity.

\paragraph{Euclidean Minimum Spanning Tree}  
The EMST over a set of points in the plane is the spanning tree (i.e. minimally connected graph) that minimizes total Euclidean edge length. It produces a sparse backbone reflecting the most efficient geographic connections without cycles.

\paragraph{$k$-Partite}  
A $k$-partite graph divides vertices into $k$ disjoint groups (parts). Edges are permitted only across different groups (none within each group). In the complete variant, all cross‐part pairs are connected. This structure generalizes bipartite graphs and enforces blockwise connectivity.

\paragraph{Grid}  
In a grid graph nodes are placed in a rectangular array, with edges connecting orthogonal neighbors. Internal nodes typically have degree 4, edges less, reflecting spatial adjacency in 2D.

\paragraph{Torus}  
A torus graph wraps a 2D grid with periodic boundary conditions. Every node is structurally equivalent.

\paragraph{Apollonian}  
An Apollonian network is grown recursively by repeatedly subdividing triangular faces: at each step, pick a triangle, insert a new vertex linked to its three vertices, and replace the triangle with three smaller ones. The result is planar, hierarchical, and often exhibits scale-free degree distributions and small-world distances. See also \citet{andrade2005apollonian}.

\begin{figure*}[p] 
  \centering
  \includegraphics[width=\textwidth,height=\textheight,keepaspectratio]{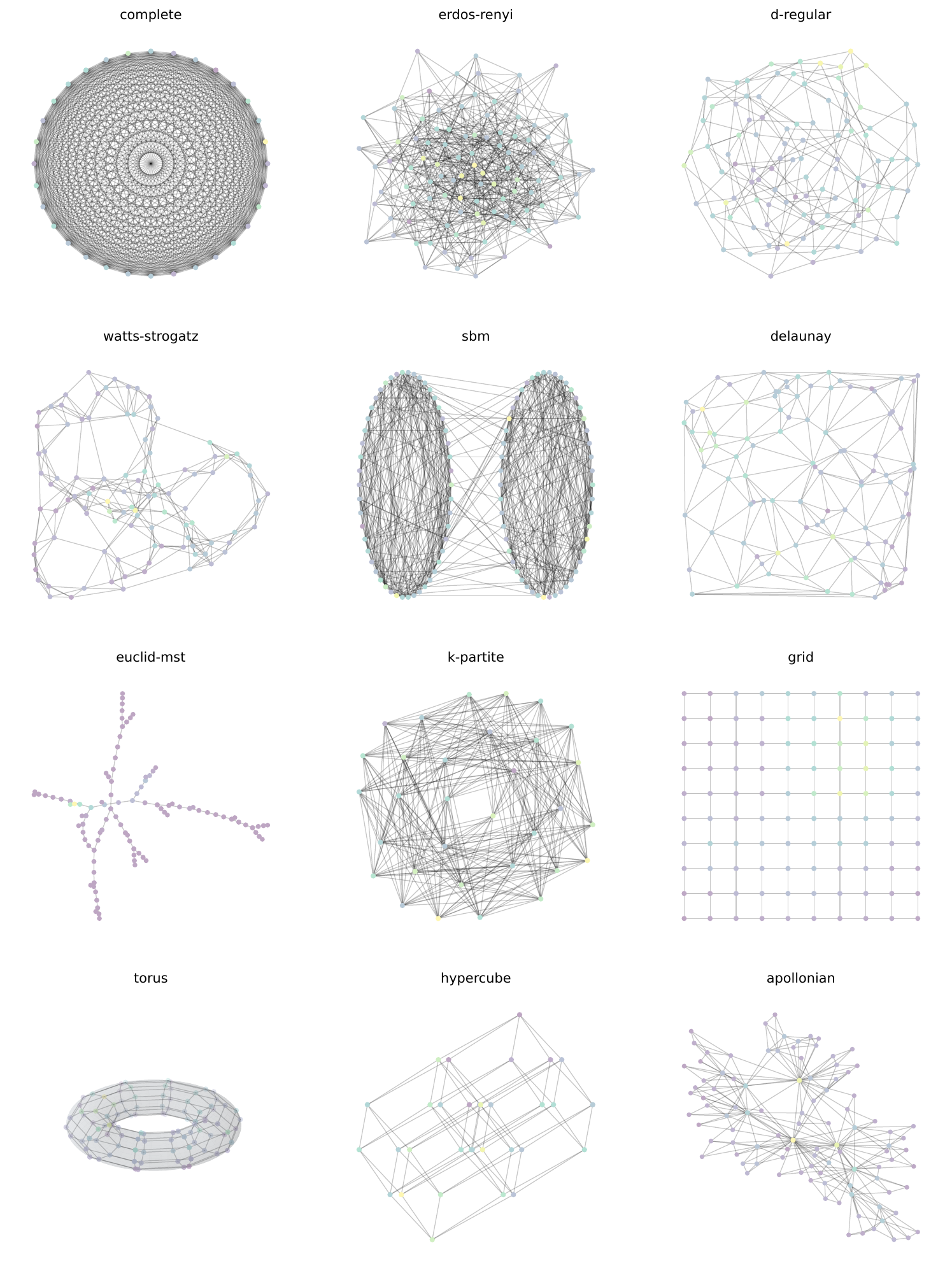}
  \caption{Examples of the twelve graph classes we use to benchmark our model.}
  \label{fig:all-graph-types}
\end{figure*}

%

\bigskip

\begin{figure*}[htb]
  \centering
  \includegraphics[width=\textwidth]{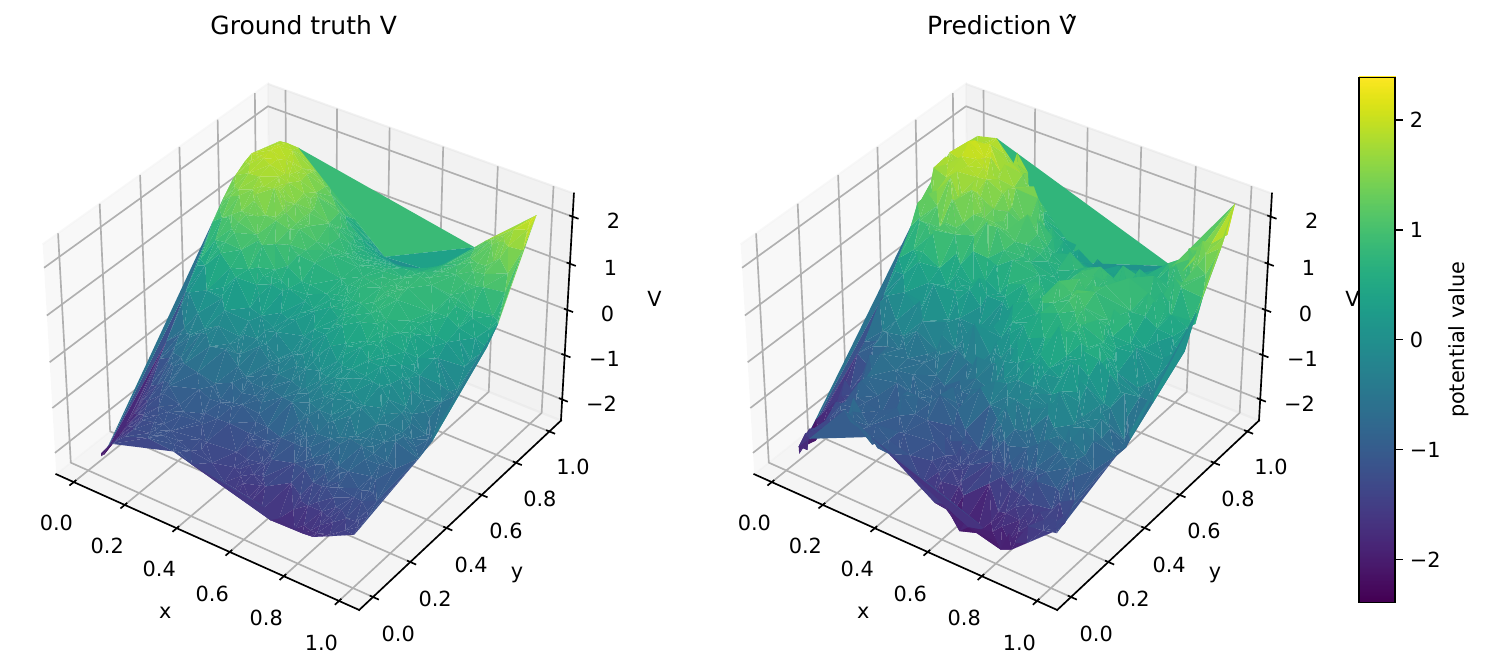}
  \caption{A sample smooth potential on a Delaunay graph with 1000 vertices (on the left) and the predicted $V$ from our numerical routine.}
  \label{fig:delaunay}
\end{figure*}

\end{document}